\newcommand*{\addFileDependency}[1]{
  \typeout{(#1)}
  \@addtofilelist{#1}
  \IfFileExists{#1}{}{\typeout{No file #1.}}
}
\definecolor{mygreen}{RGB}{0, 104, 55}
\definecolor{myred}{RGB}{165, 0, 38}
\definecolor{myorange}{RGB}{253, 191, 111}
\setlist[itemize]{noitemsep, topsep=0pt}
\renewcommand{\paragraph}[1]{\vspace{1mm}\noindent\textbf{#1}}
\newcommand{\old}[1]{}
\newcommand{\grad}{\b{\nabla}}
\newcommand{\diag}[1]{\text{diag}\hspace{-1px}\left(#1\right)}
\newcommand{\norm}[1]{\left\|#1\right\|}
\newcommand{\error}[1]{\text{er}\hspace{-1px}\left(#1\right)}
\newcommand{\Rbb}{\mathbb{R}}
\newcommand{\Prob}[1]{\text{P}\hspace{-1pt}\left( #1\right)}
\newcommand{\E}[1]{\text{E}\hspace{-1pt}\left[ #1\right]}
\newcommand{\indicator}[1]{\mathbf{1}\hspace{-1pt}\left[ #1\right]}
\newcommand{\from}{\leftarrow}
\DeclareMathOperator{\argmin}{argmin} 
\DeclareMathOperator{\argmax}{argmax}
\DeclareMathOperator*{\avg}{avg} 
\DeclareMathOperator*{\hmean}{harm} 
\DeclarePairedDelimiter\floor{\lfloor}{\rfloor}
\renewcommand{\b}[1]{\bm{#1}}
\newcommand{\bW}{\b{W}}
\newcommand{\bR}{\b{R}}
\newcommand{\bQ}{\b{Q}}
\newcommand{\bw}{\b{w}}
\newcommand{\ba}{\b{a}}
\newcommand{\bb}{\b{b}}
\newcommand{\bx}{\b{x}}
\newcommand{\by}{\b{y}}
\newcommand{\bz}{\b{z}}
\newcommand{\bp}{\b{p}}
\newcommand{\bS}{\b{S}}
\newcommand{\bs}{\b{s}}
\newcommand{\bq}{\b{q}}
\newcommand{\bvarphi}{\b{\varphi}}
\newcommand{\cS}{\mathcal{S}}
\newcommand{\cR}{\mathcal{R}}
\newcommand{\cX}{\mathcal{X}}
\newcommand{\cY}{\mathcal{Y}}
\DeclarePairedDelimiter\autobracket{(}{)}
\newcommand{\br}[1]{\autobracket*{#1}}
\newtheorem{theorem}{Theorem}
\newtheorem{definition}{Definition}
\newtheorem{assumption}{Assumption}
\newtheorem{claim}{Claim}
\newtheorem{lemma}{Lemma}
\newtheorem{corollary}{Corollary}
\title{What training reveals about neural network complexity}
\author{%
  Andreas Loukas \\
  EPFL\\
  \texttt{andreas.loukas@epfl.ch} \\
   \And
   Marinos Poiitis  \\
   Aristotle University of Thessaloniki \\
   \texttt{mpoiitis@csd.auth.gr} \\
  \And
   Stefanie Jegelka \\
   MIT \\
  \texttt{stefje@mit.edu} \\
}
\begin{document}
\maketitle

\begin{abstract}
This work explores the Benevolent Training Hypothesis (BTH) which argues that the complexity of the function a deep neural network (NN) is learning can be deduced by its training dynamics. Our analysis provides evidence for BTH by relating the NN's Lipschitz constant at different regions of the input space with the behavior of the stochastic training procedure.  We first observe that the Lipschitz constant \textit{close to the training data} affects various aspects of the parameter trajectory, with more complex networks having a longer trajectory, bigger variance, and often veering further from their initialization. We then show that NNs whose 1st layer bias is trained more steadily (i.e., slowly and with little variation) have bounded complexity even in regions of the input space that are \textit{far from any training point}. Finally, we find that steady training with Dropout implies a training- and data-dependent generalization bound that grows \textit{poly-logarithmically} with the number of parameters. Overall, our results support the intuition that good training behavior can be a useful bias towards good generalization.
\end{abstract}

\vspace{-4mm}
\section{Introduction}
\vspace{-2mm}

Though neural networks (NNs) trained on relatively small datasets can generalize well, when employing them on unfamiliar tasks significant trial and error may be needed to select an architecture that does not overfit~\cite{xu2020neural}.
Could it be possible that NN designers favor architectures that can be easily trained and this biases them towards models with better generalization?   

In the heart of this question lies what we refer to as the ``\textit{Benevolent Training Hypothesis}'' (BTH), which argues that \textit{the behavior of the training procedure can be used as an indicator of the complexity of the function a NN is learning}. 
Some empirical evidence for BTH already exists: (a) It has been observed that the training is becoming more tedious for high frequency directions in the input space~\citep{ortiz2020neural} and that low frequencies are learned first~\cite{rahaman2019spectral}. (b) Training also slows down the more images/labels are corrupted~\citep{zhang2016understanding}, e.g., the Inception~\citep{szegedy2015going} architecture is 3.5$\times$ slower to train when used to predict random labels than real ones. (c) Finally, \citet{arpit2017closer} noticed that the loss is more sensitive with respect to specific training points when the network is memorizing data and that training slows down faster as the NN size decreases when the data contain noise.

From the theory side, it is known that the training of shallow networks converges faster for more separable classes~\citep{brutzkus2018sgd} and slower when fitting random labels~\citep{arora2019fine}. In addition, the stability~\citep{bousquet2002stability} of stochastic gradient descent (SGD) implies that (under assumptions) NNs that can be trained with a small number of iterations provably generalize~\citep{hardt2016train,kuzborskij2018data}. Intuitively, since each gradient update conveys limited information, a NN that sees each training point few times (typically one or two) will not learn enough about the training set to overfit. Despite the elegance of this claim, the provided explanation does not necessarily account for what is observed in practice, where NNs trained for thousands of epochs can generalize even without rapidly decaying learning rates.

\subsection{Quantifying NN complexity}

This work takes a further step towards theoretically grounding the BTH by characterizing the relationship between the SGD trajectory and the complexity of the learned function. 
We study neural networks with ReLU activations, i.e., parametric piece-wise linear functions. Though many works measure the complexity of these networks via their maximum number of linear regions~\citep{montufar2014number,pmlr-v70-raghu17a,arora2018understanding,serra2018bounding}, it is suspected that the average NN behavior is far from the extremal constructions usually employed theoretically~\cite{hanin2019complexity}.

We instead focus on the Lipschitz continuity of a NN at different regions of its input. %
For networks equipped with ReLU activations, the Lipschitz constant in a region is simply the norm of the gradient at any point within it. 
The distribution of Lipschitz constants presents a natural way to quantify the complexity of NNs. Crucially, NNs with a bounded Lipschitz constant can generalize beyond the training data, a phenomenon that has been demonstrated both theoretically~\citep{von2004distance,xu2012robustness,sokolic2017robust} and empirically~\citep{novak2018sensitivity}.  The generalization bounds in question grow with the Lipschitz constant and the intrinsic dimensionality of the data manifold, but not necessarily with the number of parameters\footnote{While the Lipschitz constant is typically upper bounded by the product of spectral norms of the layer weight matrices (thus yielding an exponential dependency on the depth), the product-of-norms bound is known to be far from the real Lipschitz constant~\cite{combettes2020lipschitz,NEURIPS2020_5227fa9a}}, which renders them ideal for the study of overparameterized networks.  

\subsection{Main findings: connecting training behavior and neural network complexity}

We link training dynamics and NN complexity close and far from the training data  (see Figure~\ref{fig:visual_abstract}). 

 \begin{figure}
    \centering
    \includegraphics[trim=50mm 37mm 43mm 37mm, clip,width=.75\linewidth]{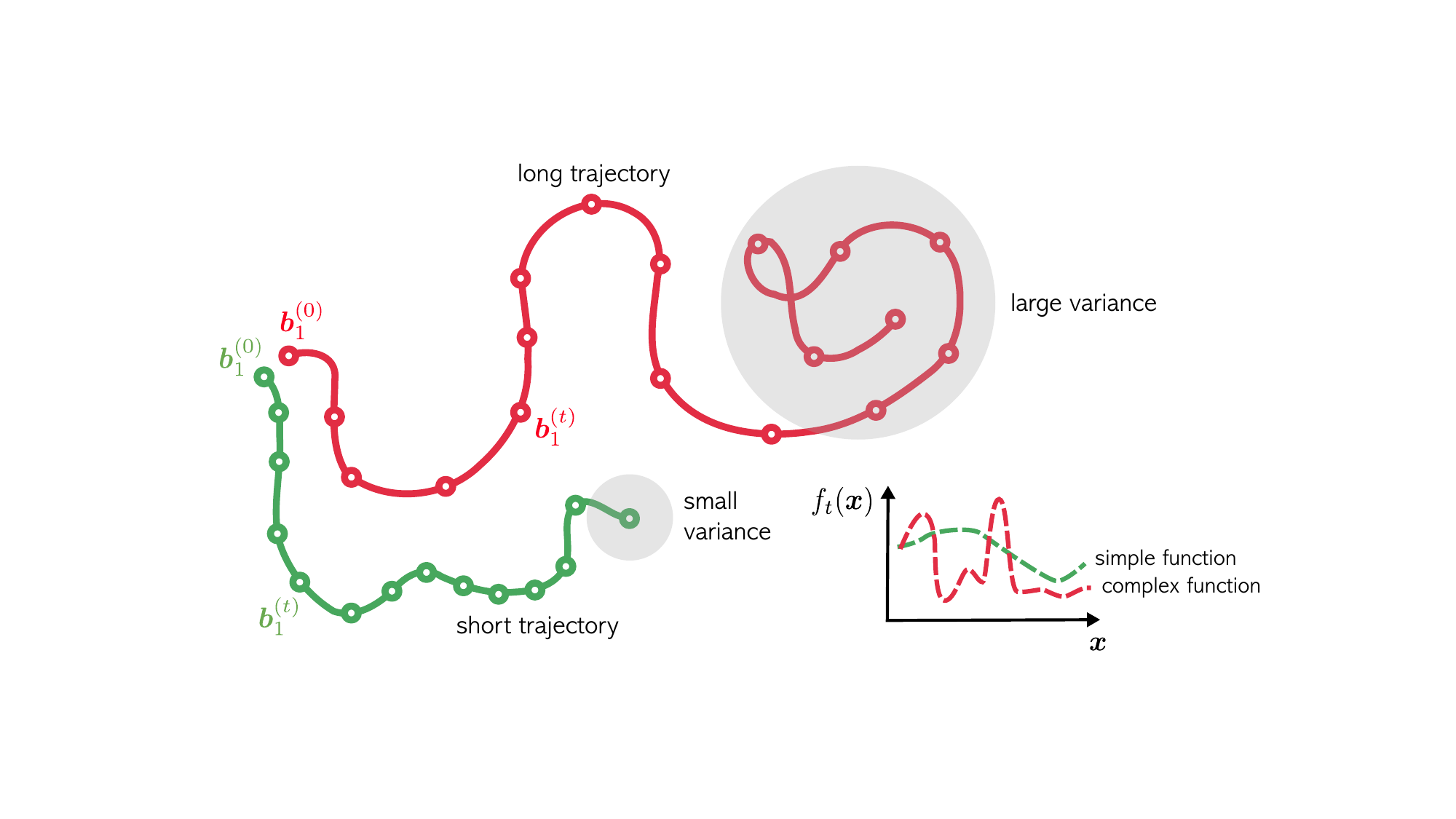}
    \caption{Our findings connect training dynamics and NN complexity by showing that the trajectory of the 1st layer bias reflects the NN's Lipschitz constant near (and far from) the training data: the bias of higher complexity NNs exhibits a longer trajectory and varies more at the end of the training.\vspace{-3mm}}
    \label{fig:visual_abstract}
\end{figure}
 
\paragraph{NN complexity close to the training data.} Section~\ref{sec:close_to_data} commences with a simple observation: SGD updates the 1st layer bias more quickly if the learned function has a large Lipschitz constant near a sampled data point. This implies that the length of the bias trajectory grows linearly with the Lipschitz constant of the NN on its linear regions that contain training data (Theorem~\ref{theorem:trajectory}).
Based on this insight, we deduce that (a) near convergence, the parameters of more complex NNs vary more across successive SGD iterations (Corollary~\ref{corollary:parameter_variance}), and (b) the distance of the trained network to initialization is small if the learned NN has a low complexity (near training data) {throughout} its training, with the first few high-error epochs playing a dominant role (Corollary~\ref{corollary:distance_to_initialization}).

\paragraph{NN complexity far from the training data.} Section~\ref{sec:far_from_data} focuses on the relationship between training and the Lipschitz constant in empty regions of the input space, i.e., linear regions of the NN that do not contain training points. 
We first show that the Lipschitz constants in empty regions are linked with those of regions containing training points (Theorem~\ref{theorem:between_training_regions}). Our analysis implies that NNs whose parameters are updated more slowly during training have bounded complexity in a larger portion of the input space.
We then demonstrate how training NNs with Dropout enables us to grasp more information about the properties of the learned function and, as such, to yield tighter estimates for the global Lipschitz constant.  
Our findings yield a \textit{data}- and \textit{training-dependent} generalization bound that features a \textit{poly-logarithmic} dependence on the number of parameters and depth (Theorem~\ref{theorem:generalization}). On the contrary, in typical NN generalization bounds the number of samples needs to grow nearly linearly with the number of parameters~\citep{vapnik1999overview,shalev2014understanding,bartlett2019nearly} or {exponentially} with depth~\citep{bartlett2017spectrally, neyshabur2018pac,golowich2018size,arora2018stronger,DBLP:journals/corr/abs-1905-09677}.

All proofs can be found in Appendix~\ref{appendix:proofs}, whereas Appendices~\ref{appendix:additional_exp_results} and~\ref{appendix:additional_results} contain additional empirical and theoretical results, respectively. 

\section{Related works}

\vspace{-2mm}
\paragraph{The Lipschitz constant of NNs.} Since exactly computing the Lipschitz constant is NP-hard~\citep{scaman2018lipschitz}, its efficient estimation is an active topic of research~\citep{scaman2018lipschitz,fazlyab2019efficient,zou2019lipschitz,latorre2020lipschitz,NEURIPS2020_5227fa9a,chen2020semialgebraic}. Our work stands out from these works both in motivation (i.e., we connect training behavior with NN complexity) and in the techniques developed (we are not employing any complex algorithmic machinery to estimate the Lipschitz constant of a trained model, but we bound it as the NN is being trained based on how weights change). Empirically, Lipschitz regularization has been used to bias training towards simple and adversarially robust networks~\citep{tsuzuku2018lipschitz,lee2018towards,pmlr-v97-anil19a,9319198,cranko2018lipschitz,DBLP:journals/corr/abs-1808-09540,gouk2021regularisation}. Theoretically, the Lipschitz constant is featured prominently in the generalization analysis of NNs (e.g., \cite{bartlett2017spectrally, neyshabur2018pac,golowich2018size}), but most analyses depend on sensitivity w.r.t. parameter perturbation, which is related but not identical to the Lipschitz constant. 

\paragraph{Dropout and generalization.} The Dropout mechanism and its variants are standard tools of the NN toolkit~\cite{hinton2012improving,srivastava2014dropout,gal2016dropout} that regularize training~\cite{wei2020implicit,arora2020dropout} and help prevent memorization~\cite{arpit2017closer}. The effect of Dropout on generalization have been theoretically studied primarily for shallow networks~\cite{mou2018dropout,mianjy2020convergence,arora2020dropout} as well as for general classifiers~\cite{mcallester2013pac}. The generalization bounds that apply to deep networks are norm-based and generally grow exponentially with depth~\cite{gao2016dropout,zhai2018adaptive} or are shown to scale the Rademacher complexity by the Dropout probability (for Dropout used in the last layer)~\cite{pmlr-v28-wan13}. We instead base our analysis on arguments from~\cite{xu2012robustness,sokolic2017robust} and exploit the properties of ReLU networks to derive a bound that features a significantly milder dependency on the NN depth.

\paragraph{Flat and sharp minima.} Flat minima correspond to large connected regions with low empirical error in weight space and have been argued to correspond to networks of low complexity and good generalization~\cite{10.1162/neco.1997.9.1.1}. It has also been shown that SGD converges more frequently to flat minima~\citep{keskar2016large,hoffer2017train,jastrzkebski2017three,smith2018bayesian,zhang2018theory}. Different from the current work that focuses on the sensitivity w.r.t. changes in the data, flatness corresponds to a statement about local Lipschitz continuity w.r.t. weight changes. In addition, whereas flat minima are regions of the space where the loss is low, our main results account for more complex loss behaviors (by means of an appropriate normalization). Note also that some works argue that the flat/sharp dichotomy may not capture all necessary properties~\citep{dinh2017sharp,sagun2017empirical,he2019asymmetric} as flat minima can be made sharp by a suitable reparameterization~\cite{sagun2017empirical}, and flat and sharp minima may be connected~\cite{sagun2017empirical}.  

\paragraph{Training dynamics of NNs.}
Many authors have studied the training dynamics of NNs~\cite{safran2016quality,freeman2017topology,li2017convergence,nguyen2018loss,du2018gradient,allen2019convergence,ma2019analysis}, arguing that, with correct initialization and significant overparameterization, SGD converges to a good solution that generalizes. Our work complements these studies by focusing on how the SGD trajectory can be used to infer NN complexity. \citet{arora2019fine} connect the trajectory length and generalization performance via the Neural Tangent Kernel (NTK). Most analyses based on the NTK (``lazy’’ regime) or mean field approximation (``adaptive’’ regime) focus on 2- or 3-layer networks. In contrast to these works, we make no assumptions on initialization or NN size. 

\section{Preliminaries and background}

Suppose that we are given a training dataset $(X,Y)$ consisting of $N$ training points $X = (\bx_1, \ldots, \bx_N)$ and the associated labels $Y = (y_1, \ldots, y_N)$, with $\bx_i \in \cX \subseteq \Rbb^n$ and $y_i \in \cY \subseteq \Rbb$. 

We focus on NNs defined as the composition of $d$ layers 
$
    f = f_d \circ \cdots \circ f_1,
$
with  
$$
    f_l(\bx, \bw) = \rho_l(\bW_l \, \bx + \bb_l) \quad  \text{for} \quad  l = 1, \ldots, d.
$$ 
Above, $\bW_{l} \in \Rbb^{n_l \times n_{l-1}}$ and $\bb_l \in \Rbb^{n_l}$ with $n_0 = n$ and $n_d = 1$, and $\bw = (\bW_1, \bb_1, \ldots, \bW_d, \bb_d)$ are the network's parameters. For all layers but the last, $\rho_l$ will be the ReLU activation function, whereas $\rho_d$ may either be the identity $\rho_d(x) = x$ (regression) or the sigmoid function $\rho_d(x) = 1/(1  + e^{-x})$ (classification).

We optimize $\bw$ to minimize a differentiable loss function 
$
    \ell  
$
using stochastic gradient descent (SGD).
The optimization proceeds in iterations $t$ and each parameter is updated as follows: 
$$
    \bw^{(t+1)} = \bw^{(t)} - \alpha_t \, \frac{\partial \ell(f(\bx^{(t)}, \bw^{(t)}), y^{(t)})}{\partial \bw^{(t)}}, 
$$
where $\bx^{(t)} \in X$ is a point sampled with replacement from the training set at iteration $t$, $y^{(t)}$ is its label, and $\alpha_t$ is the learning rate. 
It will also be convenient to refer to $f(\cdot, \bw^{(t)})$ as $f^{(t)}$.

\subsection{Linear regions}

A well-known property of NNs with ReLU activations is that they partition the input space into regions (convex polyhedra) $\cR \subseteq \Rbb^{n}$ within which $f$ is linear. This viewpoint will be central to our analysis.

There is a simple way to deduce this property from first principles.
When $\rho_d$ is the identity, each $f$ can be equivalently expressed as 
\begin{align*}
    f(\bx, \bw) 
    = \bS_d(\bx) \br{\bW_d \br{ \, \cdots \, \bS_{2}(\bx) \br{\bW_2 \bS_{1}(\bx) \br{\bW_1 \bx + \bb_1} + \bb_2} \, \cdots \, } + \bb_d}, 
\end{align*}
where we have defined the input-dependent binary diagonal matrices
$$
    \bS_{l}(\bx) := 
    \diag{\indicator{f_{l} \circ \cdots \circ f_{1}(\bx, \bw) >0}}
    \quad \text{and} \quad \bS_d(\bx) = 1,
 $$
with $\indicator{\bx>0}$ being the indicator function applied element-wise. 
The key observation is that, when the neuron activations $\bS_{l}(\bx)$ are fixed for every layer, the above function becomes linear. Thus, each linear region $\cR$ of $f$ contains those points that yield the same neuron activation pattern.  

Since the activation pattern of any region is uniquely defined by a single point in that region, we write $\cR_{\bx}$ to refer to the region that encloses $\bx$. 
%

\subsection{Local and global Lipschitz constants}

A function $f$ is Lipschitz continuous with respect to a norm $\| \cdot \|_2$ if there exists a constant $\lambda$ such that for all $\bx, \bx'$ we have $\|f(\bx) - f(\bx')\|_2 \leq \lambda \, \| \bx - \bx'\|_2$. The minimum $\lambda$ satisfying this condition is called the Lipschitz constant of $f$ and is denoted by $\lambda_f$.

The Lipschitz constant is intimately connected with the gradient. This can be easily seen for differentiable functions $f: \cX \to \Rbb$, in which case
$
    \lambda_f 
    = \sup_{\bx \in \cX} \| \grad{} f(\bx) \|_2,
$
where $\cX$ is a convex set and $\grad{} f(\bx)$ is the gradient of $f$ at $\bx$~\citep{paulavivcius2006analysis,scaman2018lipschitz,latorre2020lipschitz}.
 
Although NNs with ReLU activations are not differentiable everywhere, their Lipschitz constant can be determined in terms of their gradient within their regions. %
Specifically, the {local} Lipschitz constant within a linear region $\cR_{\bx}$ of $f$ is 
$$
    \lambda_f(\cR_{\bx}) = \| \grad{} f(\bx, \bw) \|_2  
$$
The Lipschitz constant of $f$ is then simply the largest gradient within any linear region $\lambda_f = \sup_{\bx \in \cX} \| \grad{} f(\bx, \bw) \|_2$. The latter is typically upper bounded by $\lambda_f^{\text{prod}} = \prod_{l} \|\bW_l\|_2$ which is known to be a loose bound~\cite{combettes2020lipschitz,NEURIPS2020_5227fa9a}. For a more formal treatment that also accounts for different types of activation functions and vector-valued outputs, the reader may refer to~\citep{NEURIPS2020_5227fa9a}.

\section{Relating training behavior to NN complexity close to the training data}  
\label{sec:close_to_data}

Our analysis commences in Section~\ref{subsec:trajectory} by deriving a general result that bounds the (appropriately normalized) length of the SGD trajectory over any training interval with the Lipschitz constant of the NN close to training data. Our results on the distance to initialization and weight variance will be implied as corollaries in Section~\ref{subsec:corollaries}.

\subsection{Bounding the length of the SGD trajectory}
\label{subsec:trajectory}

Theorem~\ref{theorem:trajectory} formalizes a simple observation: the gradient of a neural network with respect to its input is intimately linked to that with respect to the bias of the first layer. This implies that, by observing how fast the bias of the network is updated, we can deduce what is the Lipschitz constant of the learned function on the linear regions of the input space encountered during training. 

\begin{theorem}[Trajectory length]
Let $f^{(t)}$ be a $d$-layer NN being trained by SGD. 
Further, denote by 
\begin{align*}
    \epsilon_{f^{(t)}}(\bx,y) := 
    \left| \frac{\partial \ell(\hat{y}, y)}{\partial \hat{y}} \right|_{\hat{y} = f^{(t)}(\bx)}
\end{align*}
the gradient of the loss with respect to the NN's output at iteration $t$.
For any set $T$ of iteration indices within which the gradient is not zero, the (normalized) bias trajectory is upper/lower bounded as 
$$
    \sum_{t \in T} \frac{ \lambda_{f^{(t)}}(\cR_{\bx^{(t)}}) }{\sigma_1(\bW_1^{(t)})}  
    \leq 
    \sum_{t \in T} \frac{\| \bb_1^{(t+1)} - \bb_1^{(t)}\|_2}{\alpha_t \, \epsilon_{f^{(t)}}(\bx^{(t)},y^{(t)})} 
    \leq 
    \sum_{t \in T} \frac{ \lambda_{f^{(t)}}(\cR_{\bx^{(t)}})}{\sigma_n(\bW_1^{(t)})},
$$
where $\sigma_1(\bW_1^{(t)}) \geq \cdots \geq \sigma_n(\bW_1^{(t)})>0$ are the singular values of $\bW_1^{(t)}$.  
\label{theorem:trajectory}
\end{theorem}

Theorem~\ref{theorem:trajectory} shows that lower complexity learners will have a shorter (normalized) bias trajectory. If $\epsilon_{f^{(t)}}(\bx^{(t)},y^{(t)})$ and $\alpha_t$ remain approximately constant throughout $T$, the trajectory will grow linearly with the Lipschitz constant of the learner close to the training data.

\paragraph{Why we focus on the first layer bias.}  It might be originally surprising that the gradient w.r.t. $\bb_1$ is indicative of NN complexity. While the value of $\bb_1$ is not particularly informative, it turns out that the way it changes over successive SGD iterations reflects the operation of the entire NN: since $\bb_1$ and $\bx$ are processed by the NN in a similar fashion, the sensitivity of the NN output w.r.t. changes in the bias relates to those induced by changes in the input. 
Indeed, via the chain rule, we have 
\begin{align}
    { \grad{} f(\bx, \bw)  }
    = \bW_{d} \bS_{d-1}(\bx) \bW_{d-1} \cdots \bS_1(\bx) \bW_{1}
    = { \br{\frac{\partial f(\bx, \bw)}{\partial \bb_1}}^\top \bW_{1}}, \notag 
\end{align}
where, for simplicity of exposition, we consider here the case in which $\rho_d$ is the identity function and thus $\bS_{d}(\bx) = 1$. The above equation also explains why the singular values of $\bW_1$ appear in the bound: since the gradient of $\bb_1$ does not yield information about $\bW_1$, we account for it separately. Alternatively, as explained in Appendix~\ref{subsec:appendix_lipschitz_1st_layer}, the first layer Lipschitz constant can be controlled by also taking into account the dynamics of $\bW_1$ and $\bW_2$. We also note that an identical argument can be utilized to connect the gradient of $\bb_{l}$ with the Lipschitz constant of $f_d\circ \cdots \circ f_{l+1}(\bx)$.

\paragraph{Understanding the normalization.} The normalization by $\alpha_t \,  \epsilon_{f^{(t)}}(\bx^{(t)},y^{(t)})$ renders the bound independent of the learning rate $\alpha_t$ as well as of how well the network fits the training data. 
When a mean-squared error (MSE) and a binary cross-entropy (BCE) loss is employed
\begin{align*}
    \ell_{\text{MSE}}(\hat{y}, y) = \frac{\br{\hat{y} - y}^2}{2} 
    \quad \text{and} \quad
    \ell_{\text{BCE}}(\hat{y}, y) = - y \log{\br{\hat{y}}} - (1-y) \log{(1-\hat{y})}, 
\end{align*}
with $y,\hat{y} \in \Rbb$  and $y,\hat{y} \in [0,1]$, respectively, we have 
\begin{align*}
    \epsilon_{f}(\bx,y) = \left|f(\bx) - y\right| 
    \quad \text{and} \quad
    \epsilon_{f}(\bx,y) = \frac{1}{|1 - y - f(\bx)|}. 
\end{align*}
In both cases, $\epsilon_{f}(\bx,y)$ measures the distance between the true label and the NN's output.

\paragraph{Applicability to other architectures.} Beyond fully-connected layers, Theorem~\ref{theorem:trajectory} directly applies to layers that involve weight sharing and/or sparsity constraints, such as convolutional and locally-connected layers, as long as $\bb_1$ remains non-shared. In addition, the result also holds unaltered for networks that utilize skip connections or max/average pooling after the 1st layer, as well as for NNs with general element-wise activation functions (see Appendix~\ref{subsec:appendix_activations}).

\paragraph{Dependence on the singular values of $\bW_1$.} The lower bound presented in  Theorem~\ref{theorem:trajectory} can be expected to be much tighter than the upper bound since the largest singular value is usually a reasonably small constant, whereas the smallest may be (close to) zero. Fortunately, the upper bound can be tightened when the data fall within some lower-dimensional space $\cS$. In that case, one may substitute the minimum singular value with the minimum of $\|\bW_1^{(t)} \bx \|_2$ for all $\bx \in \cS$ of unit norm. 

\subsection{Corollaries: steady learners, variance of bias, and distance to initialization}
\label{subsec:corollaries}

Suppose that after some iteration our NN has fit the training data relatively well. 
We will say that the NN is a ``steady learner'' if its 1st layer bias is updated slowly: 
\begin{definition}[Steady learner]
A NN $f^{(t)}$ trained by SGD is $(\tau, \varphi)$-steady if
$$\frac{\| \bb_1^{(t+1)} - \bb_1^{(t)}\|_2}{\alpha_t \, \epsilon_{f^{(t)}}(\bx^{(t)},y^{(t)})} \leq \varphi \quad \text{for all} \quad t \geq \tau.$$
\label{def:steady_learner}
\end{definition}
\vspace{-2mm}
 
The following is a simple corollary:
\begin{corollary}
Let $f^{(t)}$ be $(\tau, \varphi)$-steady. Consider an interval $T$ of iterations after $\tau$ and suppose that $\sigma_1(\bW_1^{(t)}) \leq \beta$ for every $t \in T$. Select an iteration $t \in T$ at random. The Lipschitz constant of $f^{(t)}$ at every training point $\bx \in X$ will be bounded by  
$
    \lambda_{f^{(t)}}(\cR_{\bx}) \leq \beta \, \varphi,
$
generically, i.e., with probability that converges to 1 as $|T|$ grows.
\label{corollary:steady_learner}
\end{corollary}

Crucially, the bound of Corollary~\ref{corollary:steady_learner} can be exponentially tighter than the product-of-norms bound $\lambda_f^{\text{prod}}$: whereas $\beta \varphi$ does not generally depend on depth, $\lambda_f^{\text{prod}}(\cR_{\bx}) = w^d$ when $\|\bW_{l}\|_2= w$. 

We will also use Theorem~\ref{theorem:trajectory} to characterize two other aspects of the training behavior: the parameter variance as well as the distance to initialization. 
The following corollary shows that the weights of high complexity NNs cannot concentrate close to some local minimum: 
\begin{corollary}[Variance of bias]
Let $f^{(t)}$ be a $d$-layer NN with ReLU activations being trained by SGD. Let $T$ be a set of iteration indices and write 
$$
    \epsilon_{\text{harm}}(T) := \sqrt{\hmean_{t \in T} \epsilon_{f^{(t)}}(\bx^{(t)},y^{(t)})^{2}}
$$
for the square-root of the harmonic mean of the squared loss derivatives within $T$. 
Then, the bias of the first layer will exhibit variance at least: 
\begin{align}
    \avg\limits_{t\in T} \|\bb_1^{(t)} - \avg_{t\in T} \bb_1^{(t)} \|_2^2  
    \geq  \br{\avg_{t \in T} \frac{\alpha_t \, \lambda_{f^{(t)}}(\cR_{\bx^{(t)}}) \, \epsilon_{\text{harm}}(T) }{2  \, \sigma_1(\bW_1^{(t)})}}^2.
\end{align}
\label{corollary:parameter_variance}
\end{corollary}
\vspace{-2mm}
Therefore, a larger complexity NN will need to fit the training data more closely (so that $\epsilon_{\text{harm}}(T)$ decreases) to achieve the same variance as that of a lower complexity NN.   

We can also deduce that the bias will remain closer to initialization for NNs that have a smaller Lipschitz constant: 

\begin{corollary}[Distance to initialization]
Let $f^{(t)}$ be a $d$-layer NN being trained by SGD with an MSE loss and fix some iteration $\tau$. The first layer bias may move from its initialization by at most
\begin{align*}
    \|\bb_1^{(\tau)} - \bb_1^{(0)}\|_2 
    \leq \sum_{t = 0}^{\tau-1} \alpha_t \, \frac{\epsilon_{f^{(t)}}(\bx^{(t)},y^{(t)}) \, \lambda_{f^{(t)}}(\cR_{\bx^{(t)}})}{\sigma_n(\bW_1^{(t)})}.
\end{align*}
\label{corollary:distance_to_initialization}
\end{corollary}
\vspace{-2mm}
The latter result is only meaningful in a regression setting. When using the BCE loss, the loss derivative can grow exponentially when the classifier is confidently wrong. 
On the contrary, with an MSE loss in place the loss derivative grows only linearly with the error, rendering the bound more meaningful.

When $\alpha_t$ and $\epsilon_{f^{(t)}}(\bx^{(t)},y^{(t)})$ decay sufficiently fast, the bound depends on $\sigma_n(\bW_1^{(t)})$ and the (normalized) Lipschitz constant at and close to initialization. Therefore, the corollary asserts that SGD with an MSE loss can find solutions near to initialization if two things happen: the NN fits the data from relatively early on in the training while implementing a low-complexity function close to the training data.

\section{NN complexity far from the training data}
\label{sec:far_from_data}

Our exploration on the relationship between training and the complexity of the learned function thus far focused only on regions of the input space that contain at least one training point. It is natural to ask how the function behaves in empty regions. After all, to make generalization statements we need to ensure that the learned function has, with high probability, bounded Lipschitz constant close to any point in the training distribution.   

Next, we provide conditions such that a NN that undergoes steady bias updates, as per Definition~\ref{def:steady_learner}, also has low complexity in linear regions that do not contain any training points. Our analysis starts in Section~\ref{subsec:between} by relating the Lipschitz constant of regions in and outside the training data. We then show in Section~\ref{subsec:generalization} how learners that remain steady while trained with Dropout have a bounded generalization error. 

A central quantity in our analysis is the activation $\bs_t(\bx)$ associated with each $\bx$:  
\begin{align*}
    \bs_t(\bx) 
    := \bigotimes_{l = d-1}^{1} \diag{\bS_{l}^{(t)}(\bx)} 
    = \bigotimes_{l = d-1}^{1} \indicator{f_{l} \circ \cdots \circ f_{1}(\bx, \bw) >0}
    \in \{0,1\}^{n_{d-1} \cdots n_1} 
\end{align*}
Thus, $\bs_t(\bx)$ is the Kronecker product of all activations when the NN's input is $\bx$.

We will also assume that the learned function $f^{(t)}$ eventually becomes consistent on the training data: 
\begin{assumption} There exists $\tau,\gamma>0$ such that $\bs_t(\bx) = \bs_{t'}(\bx)$ and 
$
    \lambda_{f^{(t)}}(\cR_{\bx})  \leq (1 + \gamma) \, \lambda_{f^{(t')}}(\cR_{\bx}) \ \text{for all} \ \bx \in X \ \text{and} \ t,t' \geq \tau. 
$
\label{assumption:almost_convergence}
\end{assumption}
Assumption~\ref{assumption:almost_convergence} is weaker than requiring that the parameters have converged: the parameters are allowed to keep changing as long as the slope and activation pattern on each training point remains similar. We also stress that the NN can still have different activation patterns at different points (and thus be highly non-linear), as long as these activations stay persistent over successive iterations. Naturally, it is always possible to satisfy our assumption by decaying the learning rate appropriately.    

\subsection{The Lipschitz constant of empty regions}
\label{subsec:between}

As we show next, the Lipschitz constant of a NN can be controlled for those regions whose neural activation can be written as a combination of activations of training points.

\begin{theorem}
Let $T$ be any interval of SGD iterations that satisfies Assumption~\ref{assumption:almost_convergence}, and suppose that $\sigma_1(\bW_{1}^{(t)}) \leq \beta$ for all $t \in T$. 
Furthermore, denote, respectively, by 
$$
\bS_{T} := \big[\bs_t(\bx^{(t)})\big]_{t \in T},
\quad 
\bvarphi_{T} := \bigg[ \frac{\| \bb_1^{(t+1)} - \bb_1^{(t)}\|_2}{ \alpha_t \, \epsilon_{f^{(t)}}(\bx^{(t)},y^{(t)}) } \bigg]_{t \in T},  
\quad 
\mu_T := \min_{t \in T}\{f^{(t)}(\bx^{(t)}), 1-f^{(t)}(\bx^{(t)})\}
$$ 
the binary matrix whose columns are the neural activations of all points sampled within $T$, the vector containing the normalized bias updates, and the distance to integrality if a sigmoid is used in the last layer.
Select a point $\bx \in \Rbb^n$ that is not in the training set. For all $t \in T$, the Lipschitz constant of $f^{(t)}$ in $\cR_{\bx}$ is bounded by the following Basis Pursuit problem: 
$$
    \lambda_{f^{(t)}}(\cR_{\bx}) 
    \leq (1+\gamma) \, \beta \, \xi \, \min_{\ba} \|\ba \odot \bvarphi_{T} \|_1  
    \quad \text{subject to} \quad \bs_t(\bx) = \bS_{T} \, \ba,
$$
where $\odot$ is the Hadamard product, $\xi = \frac{0.25}{\mu_T (1 - \mu_T)}$ if a sigmoid is used and $\xi = 1$, otherwise.
\label{theorem:between_training_regions}
\end{theorem}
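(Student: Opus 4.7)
The strategy is to exploit a \emph{linear} representation of the pre-sigmoid input gradient in terms of the Kronecker-structured activation vector $\bs_t(\bx)$, so that the basis-pursuit constraint $\bs_t(\bx)=\bS_T\bk$ translates, via the triangle inequality, into a sum-bound on $\lambda_{f^{(t)}}(\cR_\bx)$. Let $g^{(t)}$ denote $f^{(t)}$ in the regression case, or $f^{(t)}$ with its final sigmoid stripped in classification, so that inside $\cR_\bx$ one has $\nabla_\bx g^{(t)}(\bx)^\top=\big(\bW_d^{(t)}\bS_{d-1}^{(t)}(\bx)\bW_{d-1}^{(t)}\cdots\bS_1^{(t)}(\bx)\bW_1^{(t)}\big)^\top$. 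Expanding this product as a sum over neuron paths $(i_1,\ldots,i_{d-1})$, each path contributes a fixed product of weight entries times $\prod_{l=1}^{d-1}(\bS_l^{(t)})_{i_l,i_l}$, and the latter is exactly one coordinate of $\bs_t(\bx)=\bigotimes_{l=d-1}^{1}\diag{\bS_l^{(t)}(\bx)}$. Hence there is a matrix $\b{N}_t$ depending only on $\bw^{(t)}$ such that $\nabla_\bx g^{(t)}(\bx)^\top=\b{N}_t\,\bs_t(\bx)$ for every $\bx$.

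Given any $\bk$ with $\bs_t(\bx)=\bS_T\bk$, I would expand $\b{N}_t\,\bs_t(\bx)=\sum_{t'\in T}k_{t'}\,\b{N}_t\,\bs_t(\bx^{(t')})$ and apply the triangle inequality. By Assumption~\ref{assumption:almost_convergence} the hidden activations at every training point are frozen throughout $T$, so $\b{N}_t\,\bs_t(\bx^{(t')})$ coincides with $\nabla_\bx g^{(t)}(\bx^{(t')})^\top$ and its $\ell_2$ norm equals $\lambda_{g^{(t)}}(\cR_{\bx^{(t')}})$. I would then transport each such term back to its sampling iteration $t'$, paying the $(1+\gamma)$ factor granted by Assumption~\ref{assumption:almost_convergence}, and close with the submultiplicative estimate $\nabla_\bx g^{(t')}=\nabla_{\bb_1}g^{(t')}\cdot\bW_1^{(t')}$, which gives $\lambda_{g^{(t')}}(\cR_{\bx^{(t')}})\le\sigma_1(\bW_1^{(t')})\,\|\nabla_{\bb_1}g^{(t')}(\bx^{(t')})\|_2\le\beta\,\varphi_{t'}$ from the definition of $\bvarphi_T$ and the hypothesis $\sigma_1(\bW_1^{(t)})\le\beta$. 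Substituting back and minimizing over admissible $\bk$ then yields the regression-case bound.

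The sigmoid case is the main obstacle and produces the factor $\xi$. In classification $\lambda_{f^{(t)}}(\cR_\bx)=\sigma'(g^{(t)}(\bx))\,\lambda_{g^{(t)}}(\cR_\bx)$, while each normalized update $\varphi_{t'}$ already carries a factor $\sigma'(g^{(t')}(\bx^{(t')}))$ through the chain rule in the loss derivative. Passing between $\lambda_g$ and $\lambda_f$ therefore introduces a ratio of sigmoid derivatives, which I would control by $0.25/(\mu_T(1-\mu_T))$: the numerator is bounded by the global maximum $\sigma'(\cdot)\le 1/4$, and the denominator satisfies $\sigma'(g^{(t')}(\bx^{(t')}))=f^{(t')}(\bx^{(t')})(1-f^{(t')}(\bx^{(t')}))\ge\mu_T(1-\mu_T)$ by the definition of $\mu_T$. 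The delicate part is interleaving this sigmoid ratio with the triangle-inequality step, inserting the global numerator bound once before the sum and the per-term denominator bound inside each $\varphi_{t'}$, so that $\xi$ factors out uniformly and survives the minimization over $\bk$; collecting $(1+\gamma)$, $\beta$ and $\xi$ then reproduces the stated bound.
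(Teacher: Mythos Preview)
Your proposal is correct and follows the same route as the paper: both expand the pre-sigmoid gradient linearly in the Kronecker activation vector (the paper via an explicit sum over neuron-path indices $i_1,\ldots,i_{d-1}$, you via the matrix $\b{N}_t$), apply the triangle inequality through the constraint $\bs_t(\bx)=\bS_T\bk$, invoke Assumption~\ref{assumption:almost_convergence} to pass from time $t$ to the sampling time $t'$, and then bound each training-point Lipschitz constant by $\beta\,\varphi_{t'}$ via the bias-gradient identity underlying Lemma~\ref{lemma:close_to_data}. The sigmoid-ratio argument yielding $\xi=0.25/(\mu_T(1-\mu_T))$ is also identical; the only cosmetic difference is that the paper keeps the last-layer derivative $\bS_d^{(t)}$ attached throughout and applies the $(1+\gamma)$ factor to $\lambda_f$ (as Assumption~\ref{assumption:almost_convergence} is stated for $\lambda_f$, not $\lambda_g$), whereas you strip the sigmoid first---a harmless reshuffling once everything is absorbed into $\xi$.
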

To grasp an intuition of the bound, suppose that we are in a regression setting ($\xi =1$) and that the interval $T$ is large enough so that we have seen all training points. Theorem~\ref{theorem:between_training_regions} then implies:
$$
    \exists \, \bx_{i_1}, \ldots, \bx_{i_k} \in X, \ \bs_t(\bx) = \bs_t(\bx_{i_1}) + \cdots + \bs_t(\bx_{i_k})
    \implies
    \lambda_f^{(t)}({\cR_{\bx}}) \leq k \,\beta (1 + \gamma)\, \|\bvarphi_T\|_{\infty}.
$$

By itself, {Theorem~\ref{theorem:between_training_regions} does not suffice to ensure that the function is globally Lipschitz} because the theorem does not have predictive power for points $\bx$ whose activation $\bs_t(\bx)$ cannot be written as a linear combination $\bS_T \ba$ of activations of the training points. 
A sufficient condition for the theorem to yield a global bound is that  $\bS_T$ is full rank, but the latter can only occur for $N \geq n_d \cdots n_1$, a quantity that grows exponentially with the depth of the network. 
Section~\ref{subsec:generalization} will provide a significantly milder condition for networks trained with Dropout. 

\subsection{Learners that remain steady with Dropout generalize}
\label{subsec:generalization}

Dropout entails deactivating each active neuron independently with probability $p$. We here consider the variant that randomly deactivates each neuron independently\footnote{Typically, the neurons are dropped during the forward pass and not at the end as we do here. However, for networks with $d \leq 2$, the two mechanisms are identical.} at the end of the forward-pass with probability $\sfrac{1\hspace{-1.5px}}{2}$. 
We focus on binary NN classifiers 
$$
    g^{(t)}(\bx) := \indicator{f^{(t)}(\bx) >0.5}
$$    
trained with a BCE loss, and with the NN's last layer using a sigmoid activation. The empirical and expected classification error is, respectively, given by
$$
\text{er}_t^{\text{emp}} = \avg_{i=1}^N \indicator{g^{(t)}(\bx_i) \neq y_i} 
\quad \text{and} \quad 
\text{er}_t^{\text{exp}} = \textrm{E}_{(\bx,y)}\left[ \indicator{g^{(t)}(\bx) \neq y}  \right].
$$

Theorem~\ref{theorem:generalization} controls the generalization error in terms of the number $\mathcal{N}\br{\cX; \ell_2, r_t(X)}$ of $\ell_2$ balls of radius $r_t(X)$ needed to cover the data manifold $\cX$. The radius is shown to be larger for more steadily trained classifiers (through $1/\varphi$) and to depend logarithmically on the  number of neurons:

\begin{theorem}
Let $f^{(t)}$ be a depth $d$ NN with ReLU activations being trained with SGD, a BCE loss and $\sfrac{1\hspace{-1.3px}}{2}$-Dropout.  

Suppose that $f^{(t)}$ is $(\tau,\varphi)$-steady and that for every $t \geq \tau$ the following hold: 
(a) Assumption~\ref{assumption:almost_convergence}, 
(b) $\bs_t(\bx) \leq \sum_{i=1}^N \bs_t(\bx_i)$ for every $\bx \in \cX$,
(c) $\sigma_1(\bW_1^{(t)}) \leq \beta$, and 
(d) $f^{(t)}(\bx^{(t)}) \in [\mu,1-\mu]$.
Define 
$$
    r_t(X) = \frac{\min_{i=1}^N |1 - 2f^{(t)}(\bx_i)| }{ c \, \varphi \, \log\br{\sum_{l=1}^{d-1} n_l} } \quad \text{and} \quad c = \frac{(1+\gamma) \, \beta \, (1 + o(1))}{\mu \, (1-\mu) \,  p_{\textit{min}} }, 
$$
where $p_{\textit{min}} = \min_{l< d, i\leq n_l,t\geq \tau} [\avg_{\bx \in X} \text{diag}(\bS_{l}^{(t)}(\bx))]_i > 0$ is the minimum frequency that any neuron is active before Dropout is applied.

For any $\delta>0$, with probability at least $1 - \delta$ over the Dropout and the training set sampling, the generalization error is at most
\begin{align*}
    \left| \text{er}_t^{\text{emp}} - \text{er}_t^{\text{exp}} \right| 
    &= O\br{ \sqrt{ \frac{ \mathcal{N}\br{\cX; \ell_2, r_t(X) } + \log{(1/\delta)}}{N} }}, 
\end{align*}
where $\mathcal{N}\br{\cX; \ell_2, r}$ is the minimal number of $\ell_2$-balls of radius $r$ needed to cover $\cX$.
\label{theorem:generalization}
\end{theorem}

\paragraph{Intuition.} Recall that the bounds derived in Section~\ref{sec:close_to_data} only concern the regions of the NN that contain at least one training point. However, due to the geometry of these regions, it is possible (and likely) that small empty regions will be located near those that have training data inside them. Deriving a generalization bound would require upper bounding the Lipschitz constant of the NN on these empty regions. Unfortunately, to our knowledge, the latter is not possible without resorting to loose product-of-norms bounds or imposing strong additional assumptions that assert that the training regions are sufficiently diverse (as Theorem~\ref{theorem:between_training_regions} does).  
Here is where Dropout comes in. Dropout introduces stochasticity in the training procedure that provides information of the NN function in the gaps between training data. This allows us to infer the Lipschitz constant of the function in larger portions of the space from the training behavior --- and thus to relax the assumptions of Theorem~\ref{theorem:between_training_regions}. Specifically, the proof approximates the global Lipschitz constant as follows:
$$
    \lambda_{f^{(t)}}^{\text{steady}} := \frac{c\,\varphi}{4} \, \log{\br{\sum_{l=1}^{d-1} n_l}}
    \quad \text{with} \quad  
    \lambda_{f^{(t)}} \leq \lambda_{f^{(t)}}^{\text{steady}}  \leq \lambda_{f^{(t)}} \cdot  
    O\br{\log{\br{\sum_{l=1}^{d-1} n_l}}}.
$$
It then invokes a robustness argument~\cite{xu2012robustness,sokolic2017robust} to control the generalization error. The bound above comes in sharp contrast with the product-of-norms bound $\lambda_f^\text{prod}$, which grows exponentially with $d$ and can be arbitrary larger than $\lambda_{f^{(t)}}$, since there exists parameters for which $\lambda_f = 0$ and $\lambda_f^\text{prod}>0$.

\paragraph{Understanding the assumptions made.} The strongest requirement posed by Theorem~\ref{lemma:dropout} is that every neural pathway is activated for each training point: $\bs_t(\bx) \leq \sum_{i = 1}^N \bs_t(\bx_i)$ for every $\bx \in \cX$. In contrast to Theorem~\ref{theorem:between_training_regions}, the latter can be satisfied even when $N=1$, e.g., if there exist some training point for which all neurons are active.   
However, the assumption will not hold when some entries of $\bs_t(\bx)$ are never activated after iteration $\tau$. Little can also be said about the global behavior of $f^{(t)}$ when there are neurons that are not periodically active (which would also imply $p_\textit{min} = 0)$. We argue however that such neurons can be eliminated without any harm as, by definition, they are not affecting the NN's output after $\tau$.        

\paragraph{Dependence on the classifier's confidence.} According to Theorem~\ref{theorem:generalization}, the best generalization is attained when the classifier has some certainty about its decisions on the training set (so that $|1-2f^{(t)}(\bx_i)| = \Omega(1)$), while also not being overconfident (so that $\mu\,(1-\mu) = O(1))$.

\paragraph{Dependence on the data distribution and the number of parameters.} A interesting property of the bound is that it depends on the intrinsic dimension of the data rather than the ambient dimension. For instance, if $\cX$ is a $C_M$-regular $k$-dimensional manifold with $C_M = O(1)$ it is known~\cite{verma2012distance} that  
$$
    \mathcal{N}\br{\cX; \ell_2, r } = \br{\frac{C_M}{r}}^k, \quad \text{implying that} \quad N = O(r_t(X)^{-k})
$$
training points suffice to ensure generalization. This sample complexity bound grows poly-logarithmically with the number of neurons $n_d \cdots n_1$ and the number of parameters when $ c \, \varphi = O(1)$. On the contrary, since the radius $r$ of the $\ell_2$ balls used in the covering grows inversely proportionally to the Lipschitz constant, if the product-of-norms bound was used in our proof, then the sample complexity would be exponentially larger: if $\|\bW_l\|_2 = w$ then $r = O(w^{-d})$ and $N = O(w^{dk})$. 

As remarked by \citet{sokolic2017robust}, other data distributions with covering numbers that grow polynomially with $k$ include rank-$k$ Gaussian mixture models~\cite{mendelson2008uniform} and $k$-sparse signals under a dictionary~\cite{giryes2016deep}.  
%

\section{Experiments}
\label{sec:experiments}

\begin{figure}[t]
	\begin{subfigure}[t]{0.32\linewidth}
		\includegraphics[width=1.05\linewidth]{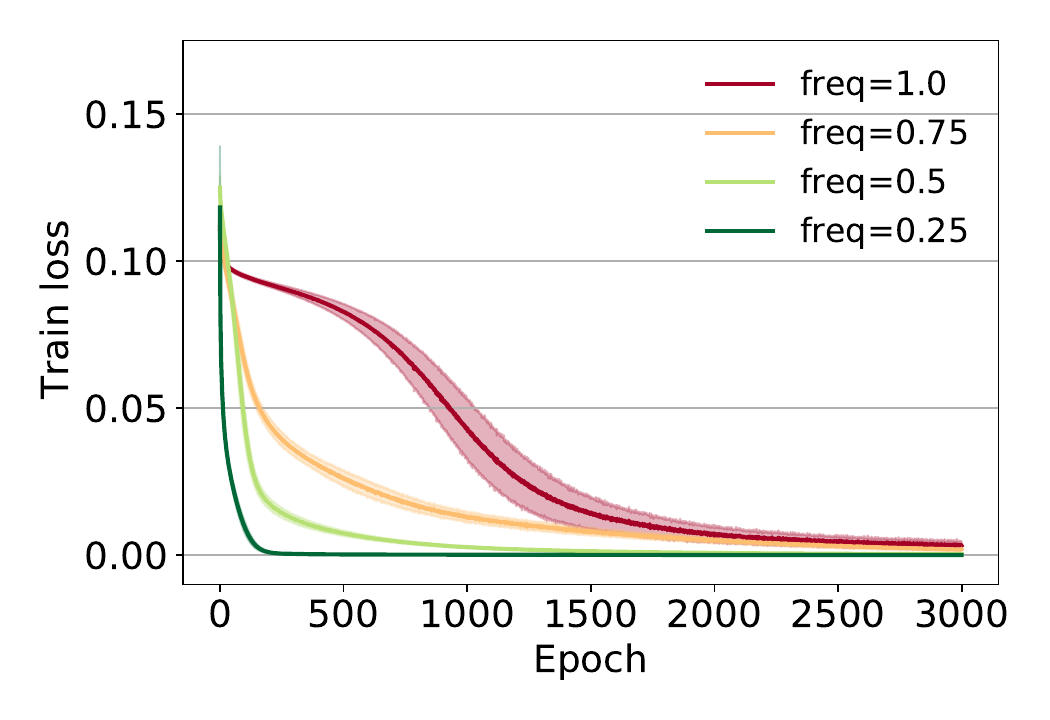}
		\caption{Training loss (regression)}
		\label{fig:mlp_train_loss}		
	\end{subfigure}
    ~
	\begin{subfigure}[t]{0.32\linewidth}
		\includegraphics[width=1.05\linewidth]{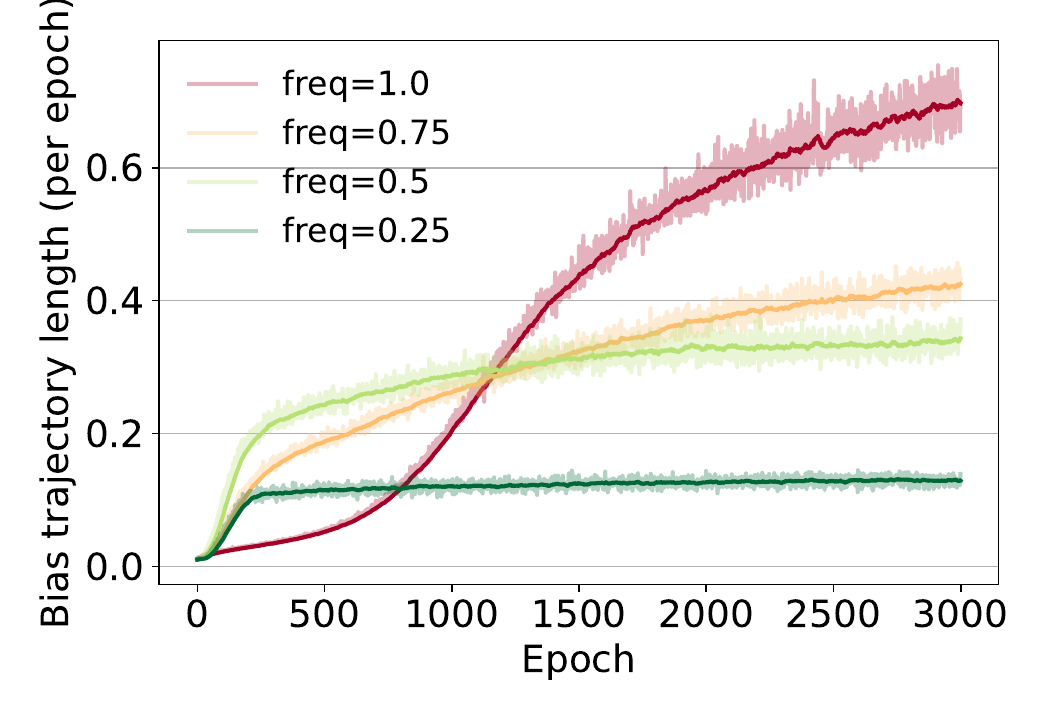}
		\caption{Bias trajectory (regression)}\label{fig:mlp_trajectory}	
	\end{subfigure}
    ~
	\begin{subfigure}[t]{0.32\linewidth}
		\includegraphics[width=1.05\linewidth]{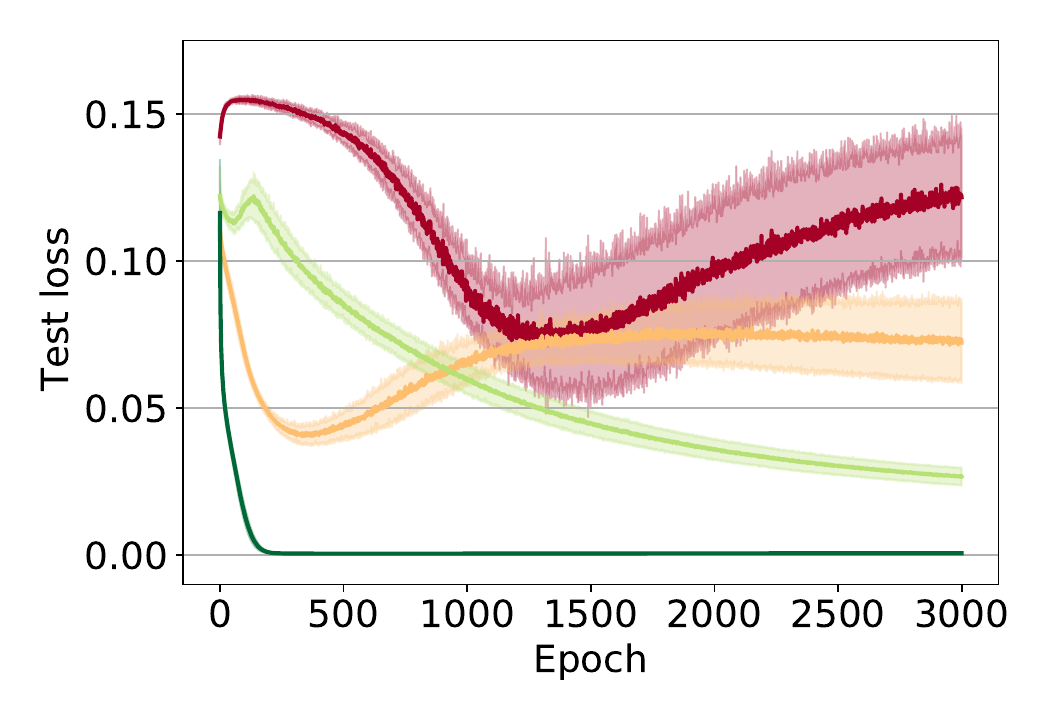}
		\caption{Test loss (regression)}\label{fig:mlp_test_loss}		
	\end{subfigure}
    \\
	\begin{subfigure}[t]{0.32\linewidth}
		\includegraphics[width=1.05\linewidth]{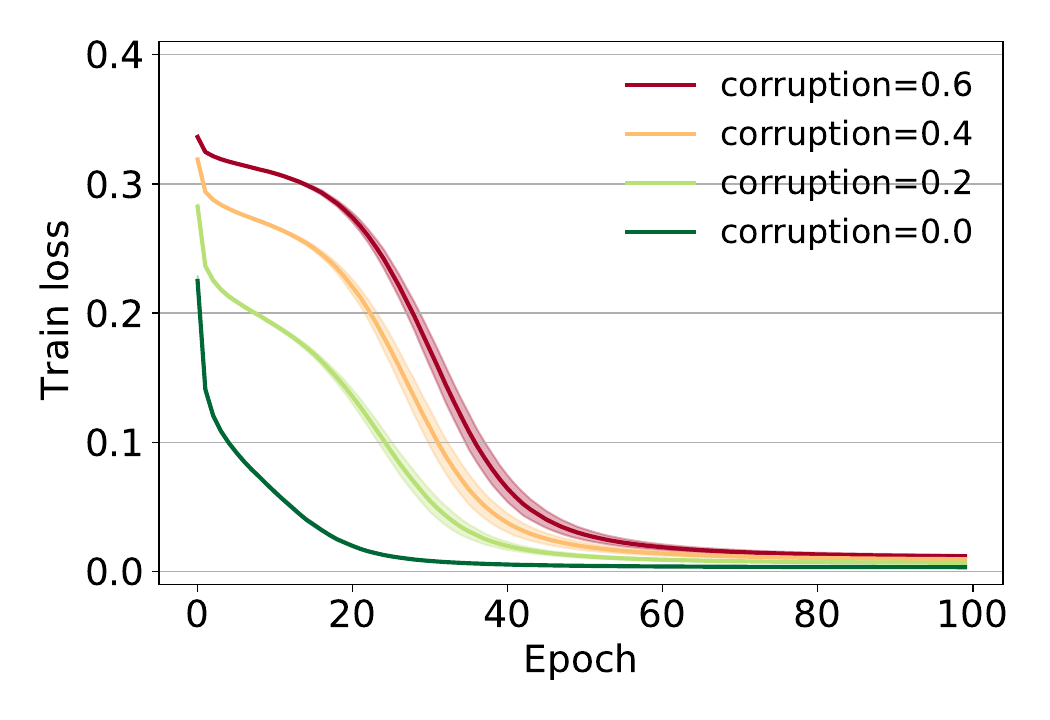}
		\caption{Training loss (CIFAR classif.)}
		\label{fig:cnn_train_loss}		
	\end{subfigure}
    ~
	\begin{subfigure}[t]{0.32\linewidth}
		\includegraphics[width=1.05\linewidth]{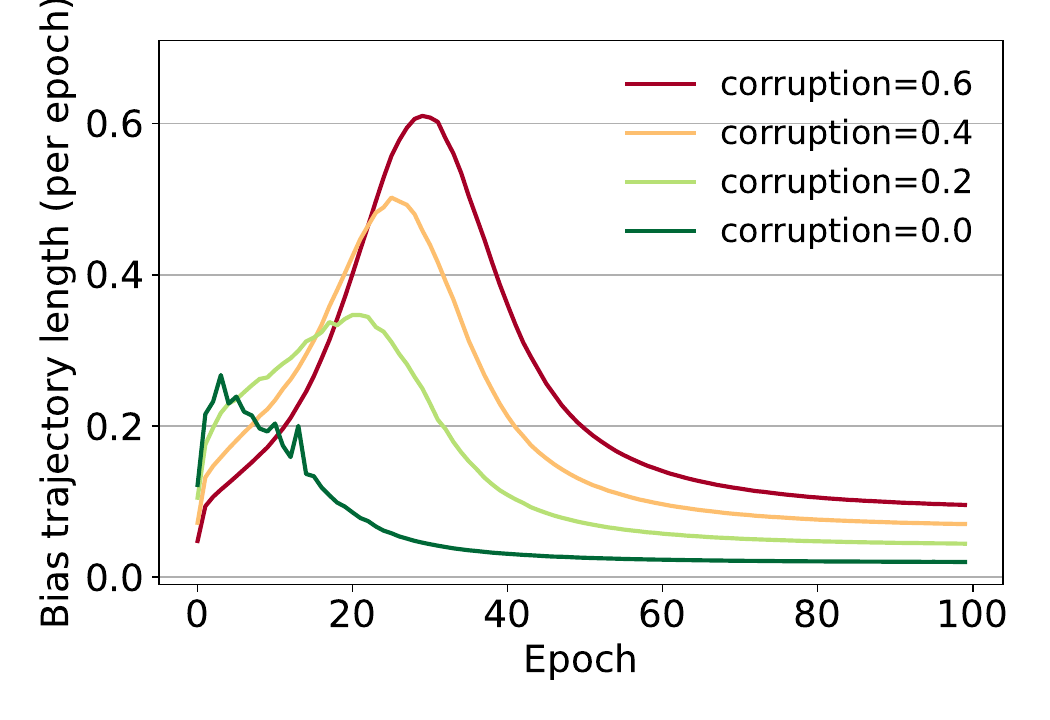}
		\caption{Bias trajectory (CIFAR classif.)}\label{fig:cnn_trajectory}		
	\end{subfigure}
    ~
	\begin{subfigure}[t]{0.32\linewidth}
		\includegraphics[width=1.05\linewidth]{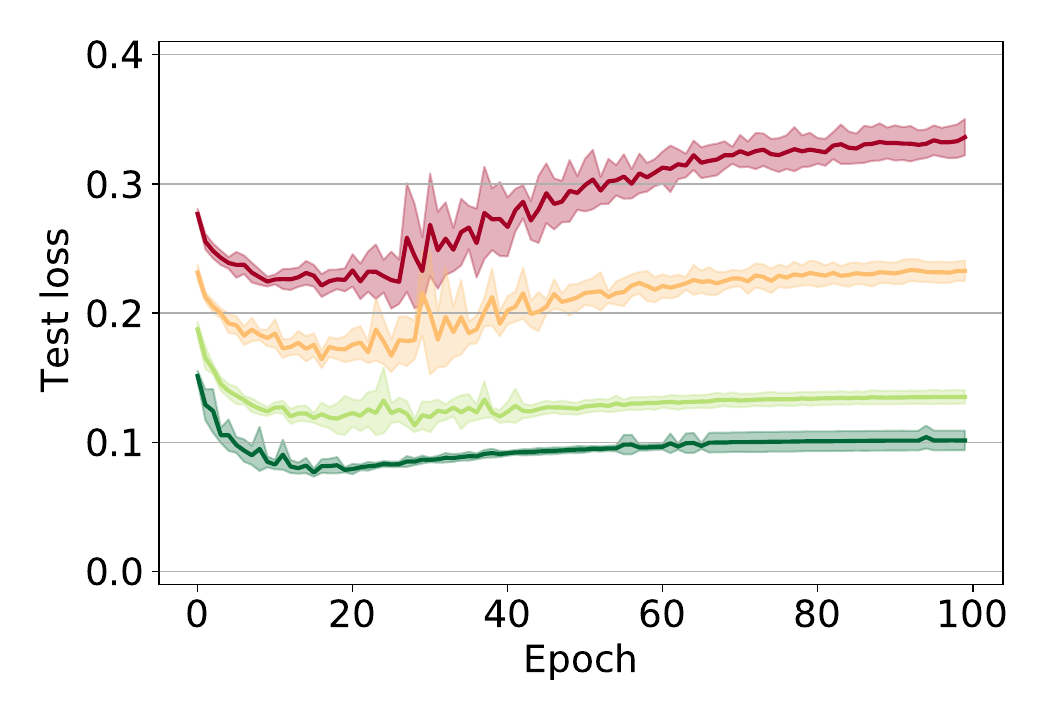}
		\caption{Test loss (CIFAR classif.)}\label{fig:cnn_test_loss}		
	\end{subfigure}
	\caption{Training behavior of MLP (top) and CNN (bottom) solving a task of increasing complexity (\textcolor{mygreen}{green}$\to$\textcolor{myred}{red}): fitting a function of increasing spatial frequency (top) and classifying CIFAR images with increasing label corruption (bottom). In accordance with Theorem~\ref{theorem:trajectory}, the per epoch bias trajectory (middle subfigures) is longer when the network is asked to fit a more complex training set. \vspace{-2mm}}	
	\label{fig:mlp_cnn}
\end{figure}

\begin{figure}[t]	
	\begin{subfigure}[t]{0.32\linewidth}
		\centering
		\includegraphics[width=1.05\linewidth]{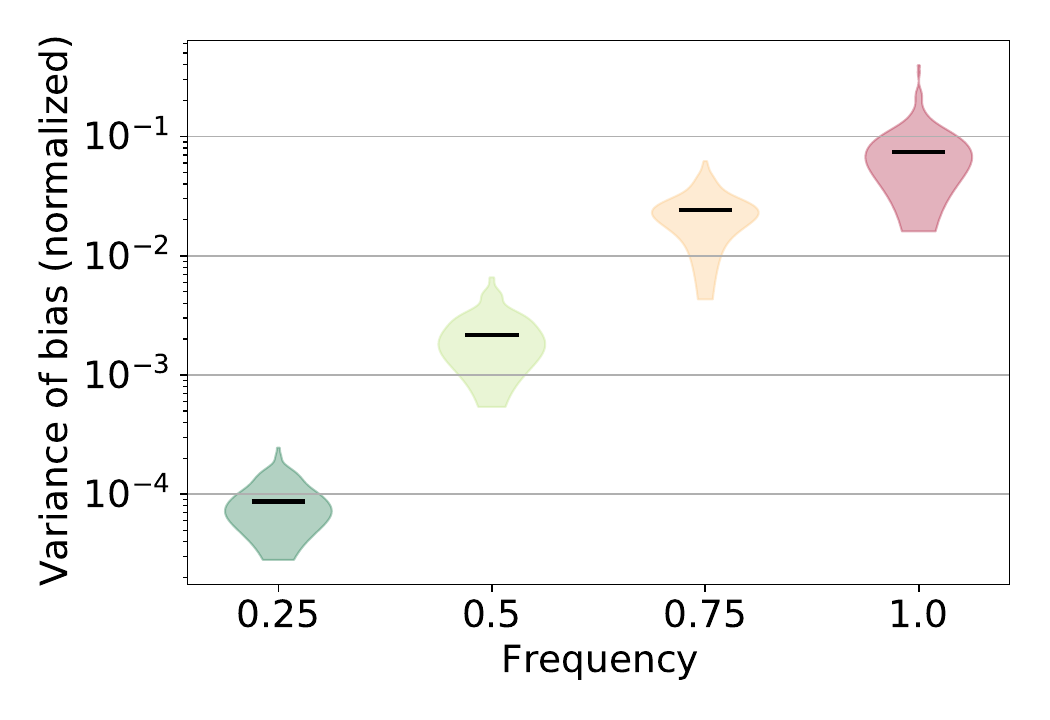}
		\caption{Variance (regression)}
		\label{fig:mlp_variance}		
	\end{subfigure}
    ~
	\begin{subfigure}[t]{0.32\linewidth}
		\centering
		\includegraphics[width=1.05\linewidth]{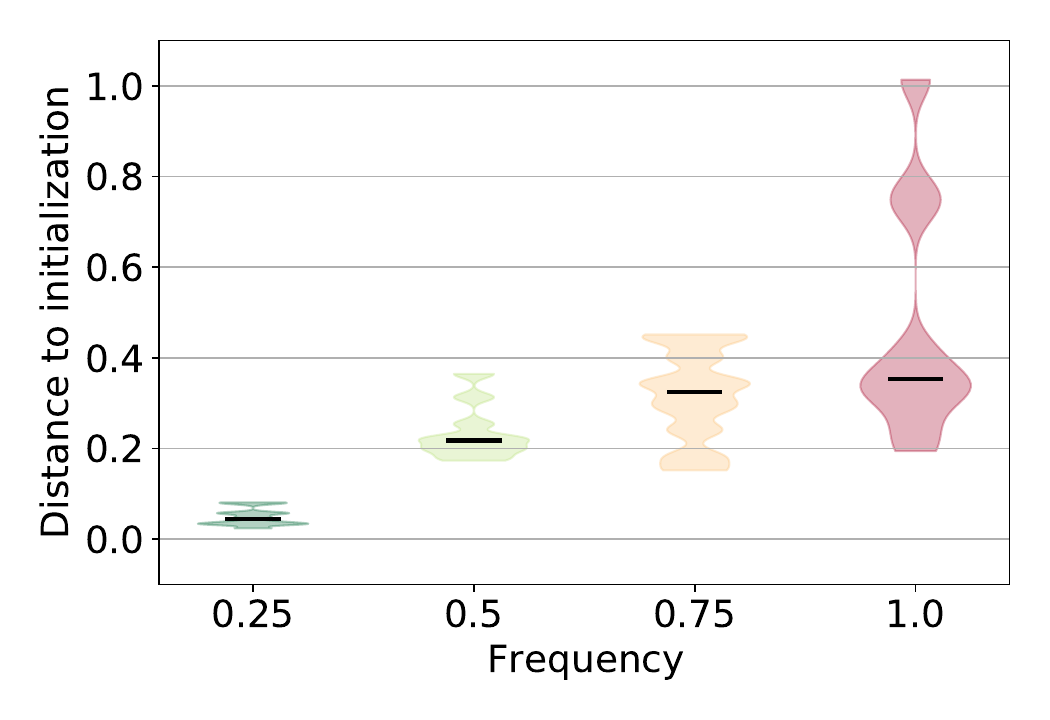}
		\caption{Distance to init. (regression)}\label{fig:mlp_distance}
	\end{subfigure}
    ~
	\begin{subfigure}[t]{0.32\linewidth}
		\centering
		\includegraphics[width=1.05\linewidth]{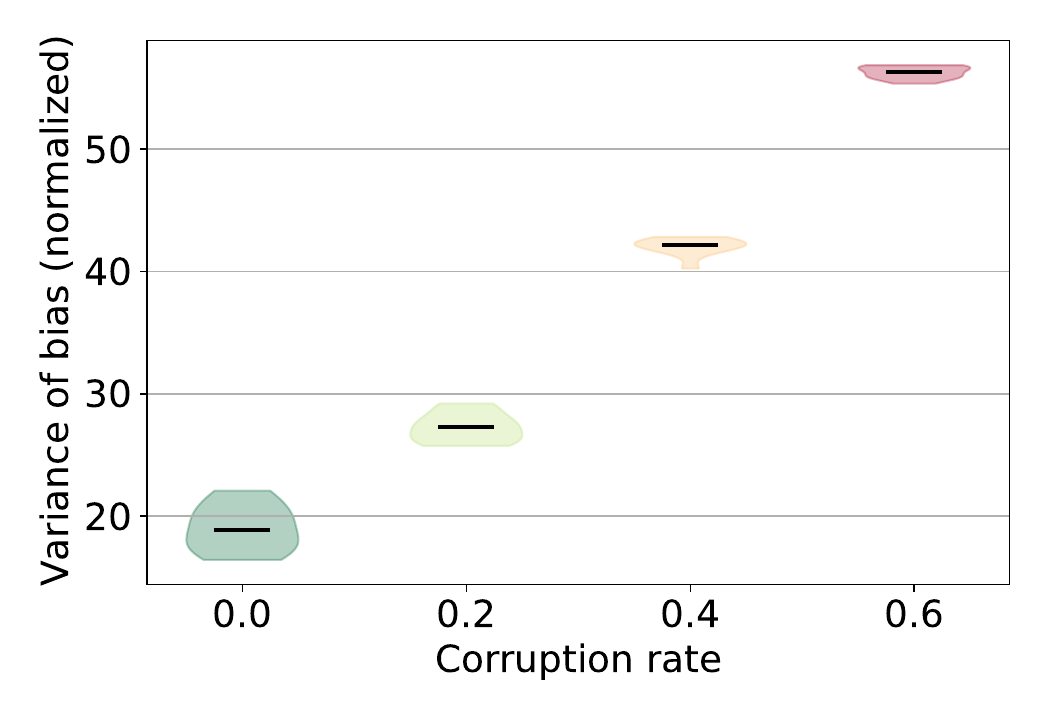}
		\caption{Variance (CIFAR classif.)}\label{fig:cnn_variance}		
	\end{subfigure}
	\caption{A closer inspection of how the bias is updated. The variance is computed over the last 10 epochs. As seen, the bias of higher complexity NNs varies more close to convergence (Corollary~\ref{corollary:parameter_variance}). Further, with an MSE loss, high complexity NNs may veer off further from initialization (Corollary~\ref{corollary:distance_to_initialization}).\vspace{-3mm}}
	\label{fig:variance_distance}
\end{figure}

We test our findings in the context of two tasks: 

\paragraph{Task 1. Regression of a sinusoidal function with increasing frequency.} %
In this toy problem, a multi-layer perceptron (MLP) is tasked with fitting a randomly-sampled 2D sinusoidal function with increasing frequency (0.25, 0.5, 0.75, 1) isometrically embedded in a 10-dimensional space. More details can be found in Appendix~\ref{appendix:additional_exp_results_task1}.  
The setup allows us to test our results while precisely controlling the complexity of the ground-truth function: fitting a low-frequency function necessitates a smaller Lipschitz constant than a high-frequency one.  
We trained an MLP with 5 layers consisting entirely of ReLU activations and with the 1st layer weights being identity. We repeated the experiment 10 times, each time training the network with SGD using a learning rate of 0.001 and an MSE loss until it had fit the sinusoidal function at 100 randomly generated training points. 

\paragraph{Task 2. CIFAR classification under label corruption.} In our second experiment, we trained a convolutional neural network (CNN) to classify 10000 images from the `dog' and `airplane' classes of CIFAR10~\cite{krizhevsky2009learning}. The classes were selected at random. We focus on binary classification to remain consistent with the theory. Inspired by~\cite{zhang2016understanding}, we artificially increase the task complexity by randomly corrupting a (0, 0.2, 0.4, 0.6) fraction of the training labels. Thus, a higher corruption implies a larger complexity function.   
Differently from the first task, we used a CNN with 2 convolutional layers featuring ReLU activations in intermediate layers and a sigmoid activation in the last. We set the first layer identically with the regression experiment. We repeated the experiment 8 times, each time training the network with SGD using a BCE loss and a learning rate of 0.0025. 

In agreement with previous studies~\cite{zhang2016understanding,arpit2017closer,rahaman2019spectral,ortiz2020neural}, Figure~\ref{fig:mlp_cnn} shows that training slows down as the complexity of the fitted function increases. Figures~\ref{fig:mlp_trajectory} and~\ref{fig:cnn_trajectory} depict the per-epoch bias trajectory: %
$ 
\sum_{t \in T_\text{epoch}} {\| \bb_1^{(t+1)} - \bb_1^{(t)}\|_2}/{\alpha_t \epsilon_{f^{(t)}}(\bx^{(t)},y^{(t)})}
$ 
According to Theorem~\ref{theorem:trajectory}, this measure captures the Lipschitz constant $\lambda_{f^{(t)}}(\cR_{\bx^{(t)}})$ of the NN during each epoch and across all training points. In agreement with our theory, the bias trajectory is significantly longer when fitting higher complexity functions. The length of the total trajectory is the integral of the depicted curve, see Appendix~\ref{appendix:additional_exp_results_total_trajectory}. Moreover, as shown in Fig~\ref{fig:mlp_test_loss}, the trajectory length also correlates with the loss of the network on a held-out test set, with longer trajectories consistently corresponding to poorer test performance.  

We proceed to examine more closely the behavior of $\bb_1^{(t)}$ during training. Figures~\ref{fig:mlp_variance} and~\ref{fig:mlp_distance} corroborate the claims of Corollaries~\ref{corollary:parameter_variance} and~\ref{corollary:distance_to_initialization}, respectively: when fitting a lower complexity function and an MSE loss is utilized, the bias will remain more stable (here we show the variance in the last 10 epochs) and closer to initialization. The same variance trend can be seen in Figure~\ref{fig:cnn_variance} for image classification. The distance-to-initialization analysis is not applicable to classification (due to the BCE gradient being unbounded), but we include the figure in Appendix~\ref{appendix:additional_exp_results_task2} for completeness. 

\paragraph{Additional results.} The interested reader can refer to Appendices~\ref{appendix:additional_exp_results_regions} and~\ref{appendix:additional_exp_results_total_trajectory} for visualizations of the Lipschitz constants within linear regions and of the total bias trajectory length. Appendices~\ref{appendix:additional_exp_results_batching} and~\ref{appendix:additional_exp_results_architecture} test how our findings are affected by the batch size and architecture, whereas Appendix~\ref{appendix:additional_exp_results_layer} examines the training dynamics associated with deeper layer biases.

\vspace{-2mm}
\section{Conclusion}
\label{sec:conclusion}
\vspace{-2mm}
This paper showed that the training behavior and the complexity of a NN are interlinked: networks that fit the training set with a small Lipschitz constant will exhibit a shorter bias trajectory and their bias will vary less. Though our study is of primarily theoretical interest, our results provide support for the Benevolent Training Hypothesis and suggest that favoring NNs that exhibit good training behavior can be a useful bias towards models that generalize well.     

At the same time, there are many aspects of the BHT that we do not yet understand: what is the effect of optimization algorithms and of batching on the connection between complexity and training behavior? Does layer normalization play a role? What can be glimpsed by the trajectory of other parameters? We believe that a firm understanding of these questions will be essential in fleshing out the interplay between training, NN complexity, and generalization.    

\begin{ack}
We are thankful to the anonymous reviewers, Giorgos Bouritsas, Martin Jaggi, Nikos Karalias, Mattia Atzeni, and Jean-Baptiste Cordonnier for engaging in fruitful discussions and providing valuable feedback. Andreas Loukas would like to thank the Swiss National Science Foundation for supporting him in the context of the project ``Deep Learning for Graph Structured Data'', grant number PZ00P2 179981. Stefanie Jegelka acknowledges funding from NSF CAREER award 1553284, NSF BIGDATA award 1741341 and an MSR Trustworthy and Robust AI Collaboration award.
\end{ack}

\bibliographystyle{unsrtnat}
\bibliography{bibliography}

\appendix

\section{Additional empirical results}
\label{appendix:additional_exp_results}

\subsection{Description of Task 1}  
\label{appendix:additional_exp_results_task1}

The input data of Task 1 are generated by the following two step procedure: 

First, we sample $N=100$ points $\bz_i \in [-1,1]^2$ uniformly at random and assign them a ground truth label according to the sinusoidal function: 
$$
    y_i = \cos(2\pi \omega \, \bz_i(1)) \cdot  \cos(2\pi \omega \, \bz_i(1)) \in [-1,1], 
$$
where $\omega$ is interpreted as a frequency and we set $\omega \in \left\{0.25, 0.5, 0.75, 1.0\right\}$ in our experiments. The four resulting functions are visualized in Figure~\ref{fig:input_function}. 

We then determine $\{\bx_i\}_{i=1}^N$ by isometrically embedding $\{\bz_i\}_{i=1}^N$ into $\Rbb^{10}$. We achieve this by selecting the first 2 columns $\bR \in \Rbb^{10 \times 2}$ of a random $10 \times 10$ unitary matrix and setting $\bx_i = \bR \, \bz_i$. This procedure ensures that the distances between points remains the same in high dimensions.

\subsection{Distance to initialization for Task 2}  
\label{appendix:additional_exp_results_task2}

\begin{wrapfigure}[11]{R}{0.47\linewidth}	
    \centering
    \vspace{-2mm}
	\includegraphics[trim=0mm 7mm 0mm 0mm,clip,width=.75\linewidth]{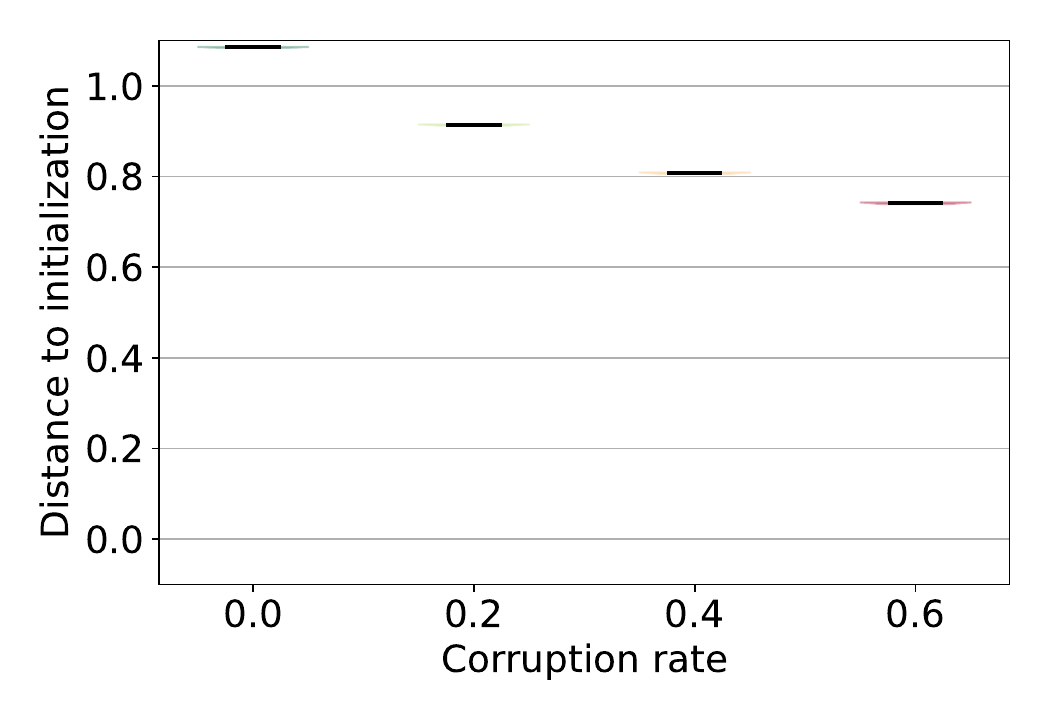}
	\caption{Distance to init. with BCE loss.}
	\label{fig:cnn_distance}
\end{wrapfigure}
We focus on the image classification CNN trained with a BCE loss. 
Figure~\ref{fig:cnn_distance} depicts the distance from initialization $\|\bb_1^{(t)} - \bb_1^{(0)}\|_2$ in the last 10 training epochs.  

As explained in Section~\ref{subsec:corollaries}, when a BCE loss is utilised, the derivative of the loss becomes unbounded which stops Corollary~\ref{corollary:distance_to_initialization} from applying. Interestingly, Figure~\ref{fig:cnn_distance} confirms this by showing that the distance is not an increasing function of complexity. The reverse phenomenon can be observed when an MSE loss is utilized (see Figure~\ref{fig:mlp_distance}).

\begin{figure}[t]	
	\begin{subfigure}[t]{0.24\linewidth}
		\centering
		\includegraphics[trim=22mm 0 22mm 0,clip,width=1.05\linewidth]{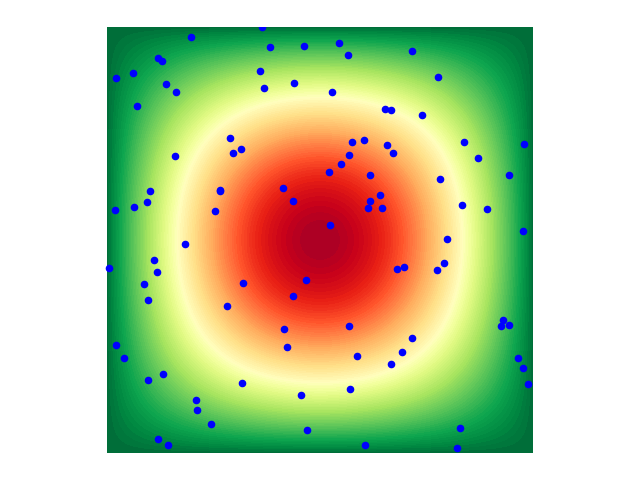}
		\caption{$\omega=0.25$}
	\end{subfigure}
    \hfill
	\begin{subfigure}[t]{0.24\linewidth}
		\centering
		\includegraphics[trim=22mm 0 22mm 0,clip,width=1.05\linewidth]{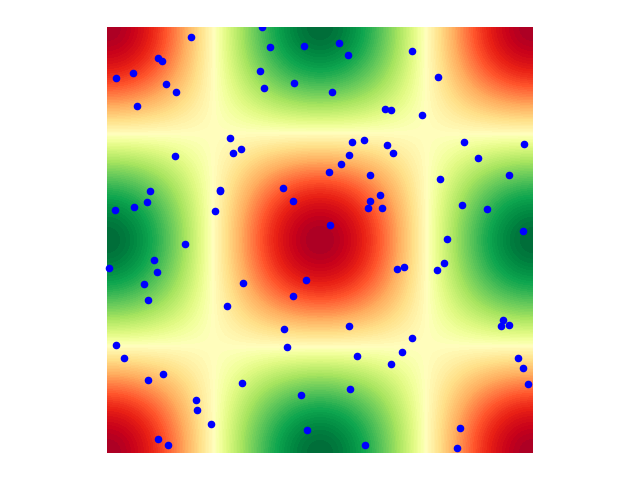}
		\caption{$\omega=0.5$}
	\end{subfigure}
    \hfill
	\begin{subfigure}[t]{0.24\linewidth}
		\centering
		\includegraphics[trim=22mm 0 22mm 0,clip,width=1.05\linewidth]{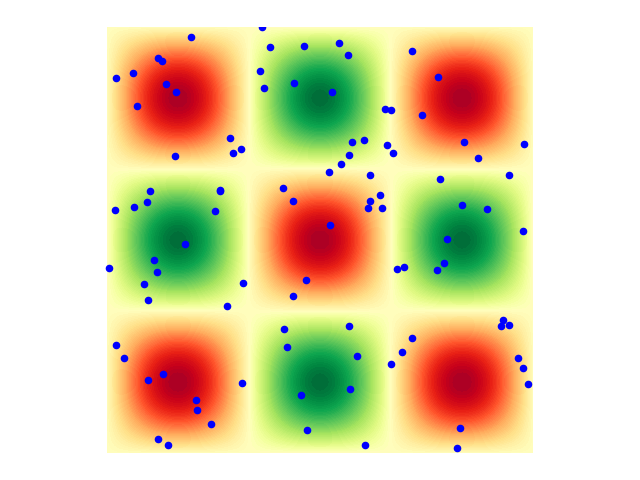}
		\caption{$\omega=0.75$}		
	\end{subfigure}
    \hfill
    \begin{subfigure}[t]{0.24\linewidth}
		\centering
		\includegraphics[trim=22mm 0 22mm 0,clip,width=1.05\linewidth]{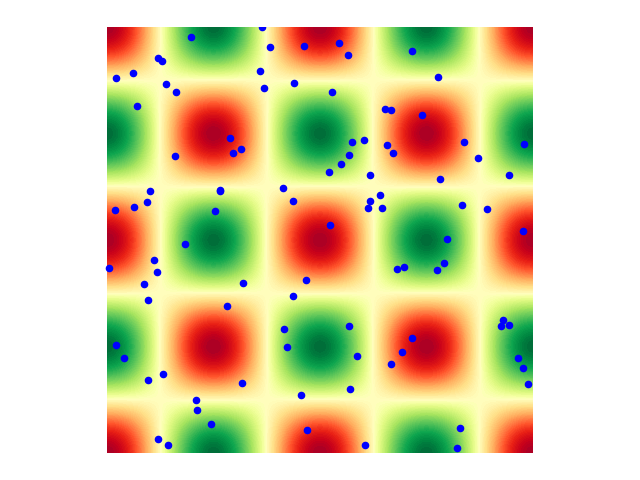}
		\caption{$\omega=1.0$}	
	\end{subfigure}
	\caption{The surface of the sinusoidal function from where the input points are sampled, for different frequencies $\omega$. Sampled points are plotted on top of the surface.}
	\label{fig:input_function}
\end{figure}

\subsection{Visualizing linear regions} 
\label{appendix:additional_exp_results_regions}

\begin{figure}[t]	
	\centering
	\begin{subfigure}[t]{0.4\linewidth}
		\centering
		\includegraphics[trim=22mm 10mm 18mm 13mm,clip,width=1.00\linewidth]{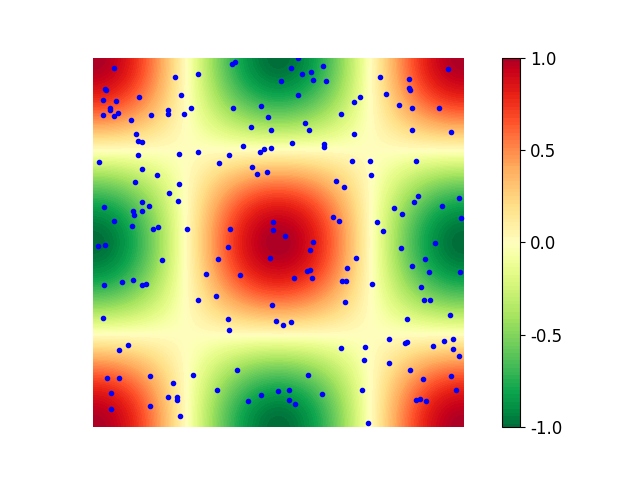}
		\caption{ground truth function}
		\label{fig:regions_true_function}
	\end{subfigure}
    \quad\quad
	\begin{subfigure}[t]{0.4\linewidth}
		\centering
		\includegraphics[trim=22mm 10mm 18mm 13mm,clip,width=1.00\linewidth]{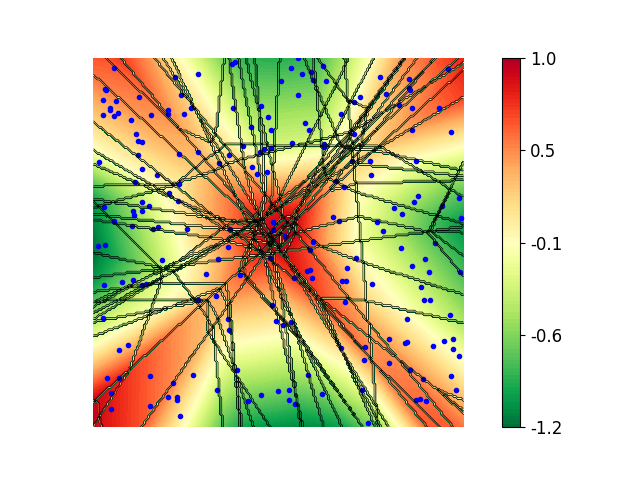}
		\caption{learned function}
		\label{fig:regions_learned_function}
	\end{subfigure}
    \\
    \centering
	\begin{subfigure}[t]{0.4\linewidth}
		\centering
		\includegraphics[trim=22mm 10mm 18mm 10mm,clip,width=1.0\linewidth]{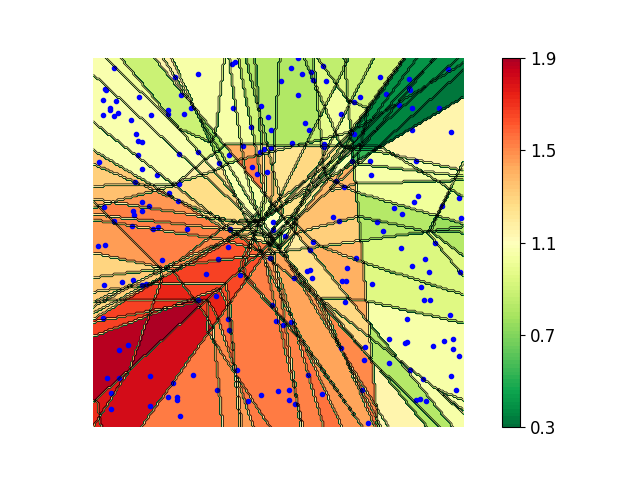}
		\caption{local Lipschitz constants}		
		\label{fig:regions_true_constants}
	\end{subfigure}
    \quad\quad
    \begin{subfigure}[t]{0.4\linewidth}
		\centering
		\includegraphics[trim=22mm 10mm 18mm 10mm,clip,width=1.0\linewidth]{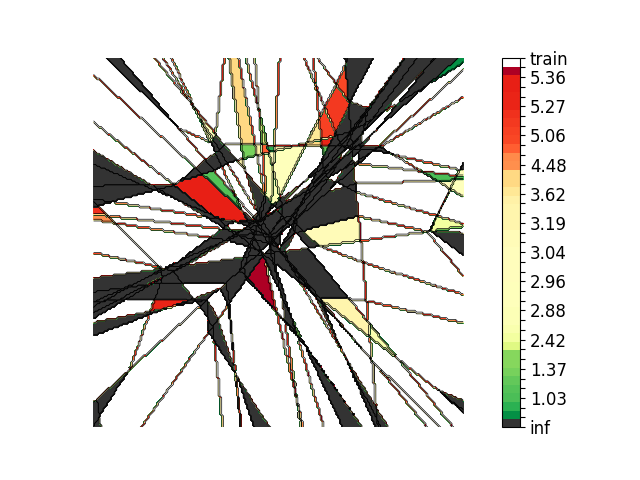}
		\caption{Lipschitz constants far from training data}		\label{fig:regions_predicted_constants}
	\end{subfigure}
	\caption{Visual illustration of the linear regions (top right) of the trained NN when fitting a sinusoidal function (top left), together with their local Lipschitz constants (bottom left) and those far from the training data as predicted by Theorem~\ref{theorem:between_training_regions} (bottom right).}
	\label{fig:linear_regions}
\end{figure}

Aiming to gain intuition about the behavior of NNs in linear regions close and far the training data, we take a closer look at the function an MLP is learning when trained to solve task 1 ($\omega = 0.5$, 2 hidden layers, $N=200$).

Figures~\ref{fig:regions_true_function} and~\ref{fig:regions_learned_function} depict, respectively, the real and learned function projected in 2D (recall that the true function is isometrically embedded in 10D). Blue dots are training data points. The boundaries between region are indicated with black lines. As observed, there is a large number of linear regions of varying sizes with the smaller and more densely packed regions being found close to the (0,0) point. 

The bottom two panels display the local Lipschitz constants (i.e., the magnitude of the gradient within every region). In Figure~\ref{fig:regions_true_constants} we can see the real constants at all regions. Interestingly, it appears that low- and high-Lipschitz constants are clustered, which likely follows from the hierarchical region formation process: in other words, regions within the same cluster fall within the same region of a shallower sub-network and are split by a higher layer.

Figure~\ref{fig:regions_predicted_constants} distinguishes between regions containing training points (in white) and the rest (in color). We color empty regions depending on the bound given by Theorem~\ref{theorem:between_training_regions} and black regions are those for which the theorem does not have predictive power. We observe that, though the proposed theory allows us to make statements about the function behavior far from the training data, the theory does not explain the global behavior of the NN. This motivates the introduction of Dropout in the analysis of Section~\ref{subsec:generalization}: by exploiting stochasticity we can infer more properties about the NN complexity from the training trajectory. Intuitively, using Dropout during a sufficiently long training, one is able to deduce from the observed bias updates (specifically vector $\varphi_T$ in Theorem~\ref{theorem:between_training_regions}) the Lipschitz constants within more regions (thus they would also bound the Lipschitz constant within some of the non-white regions in Fig~\ref{fig:regions_predicted_constants}). Moreover, though encountering each and every region in the training would likely take a very long time, Theorem~\ref{theorem:between_training_regions} implies that only a small subset of regions suffice to approximate the global Lipschitz constant up to a logarithmic factor. 

We finally observe that, in the regions were it applies, Theorem~\ref{theorem:between_training_regions} yields a bound that is a constant factor away from the real local Lipschitz constants: the bound overestimates the constants by roughly a factor of four.

\subsection{Total trajectory length} 
\label{appendix:additional_exp_results_total_trajectory}

Figure~\ref{fig:total_trajectory} displays the length of the entire normalized bias trajectory at every point in the training. Thus, Figure~\ref{fig:total_trajectory} corresponds to the integral of Figures~\ref{fig:mlp_trajectory} and \ref{fig:cnn_trajectory}, which focus on the length of the normalized bias trajectory within every epoch. We note also that all NNs have been trained until they could closely fit the training set.
\begin{figure}[h!]	
	\centering
	\begin{subfigure}[t]{0.42\linewidth}
		\centering
		\includegraphics[trim=0mm 7mm 0mm 0mm,clip,width=1.05\linewidth]{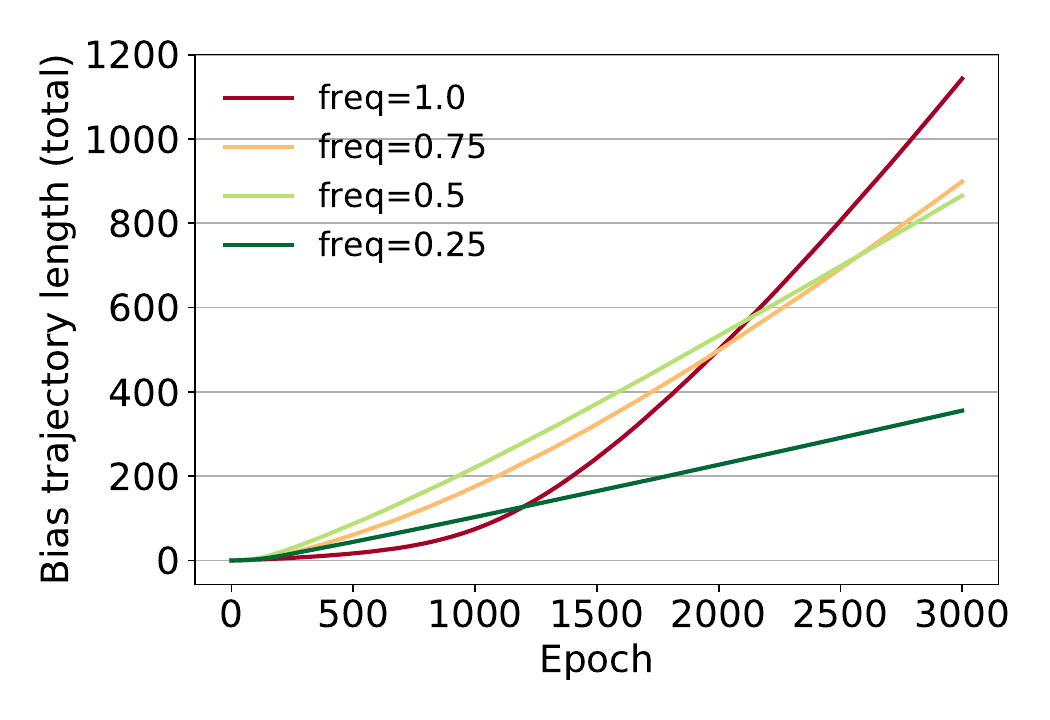}
		\caption{regression}
	\end{subfigure}
    \quad
	\begin{subfigure}[t]{0.42\linewidth}
		\centering
		\includegraphics[trim=0mm 7mm 0mm 0mm,clip,width=1.05\linewidth]{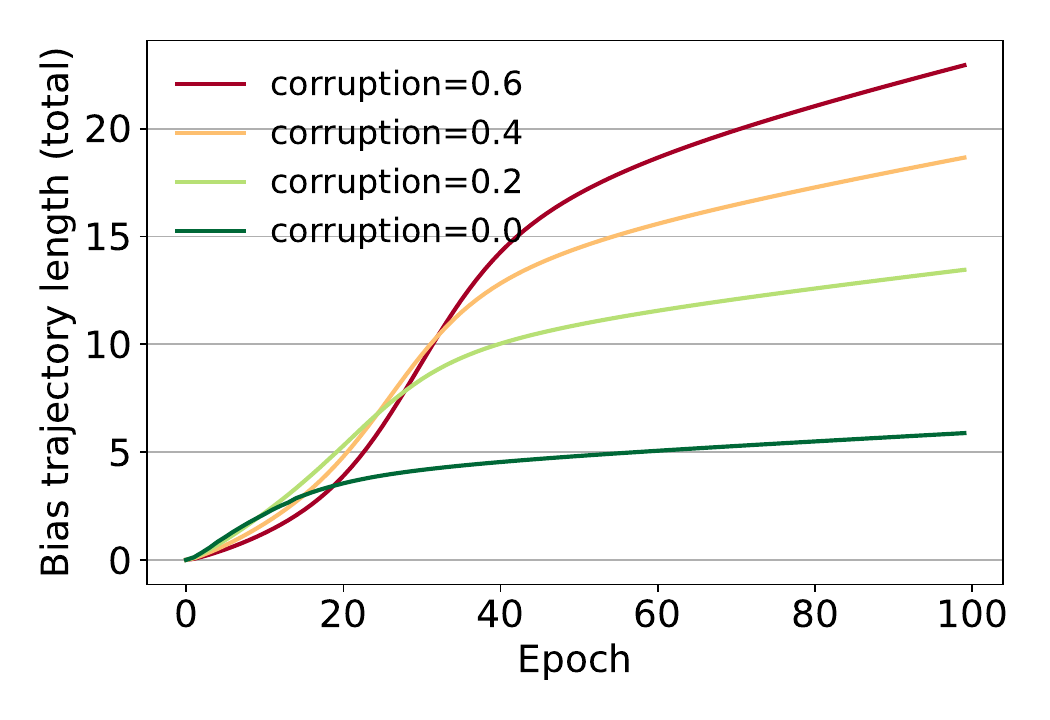}
		\caption{CIFAR classification}
	\end{subfigure}
	\caption{Length of the normalized trajectory at every pointing in the training starting from initialization. As expected, the trajectory length of NNs grows with the complexity of the function they are learning. }
	\label{fig:total_trajectory}
\end{figure}

A side-by-side comparison with Figures~\ref{fig:mlp_train_loss} and~\ref{fig:cnn_train_loss} reveals that, between any two NNs that have fitted the training data equally well, the one that implements a higher complexity function has consistently a longer trajectory.

\subsection{Effect of architecture on bias trajectory} 
\label{appendix:additional_exp_results_architecture}

We next evaluate the effect of the NN architecture on the optimization trajectory. We focus on the MNIST dataset~\cite{lecun-mnisthandwrittendigit-2010} and train an MLP and a CNN to distinguish between digits `3' and `6' based on a training set consisting of 100 and 1000 images per class. For consistency with the previous experiments, we used the same NN architectures for the MLP and CNN as those employed for Tasks 1 and 2, respectively (though both NNs now feature a sigmoid activation in the last layer). The networks are trained using SGD with a BCE loss and a learning rate of $\alpha_t= 0.002$.

\begin{figure}[h!]
	\begin{subfigure}[t]{0.32\linewidth}
		\includegraphics[trim=0mm 7mm 0mm 0mm,clip,width=1.05\linewidth]{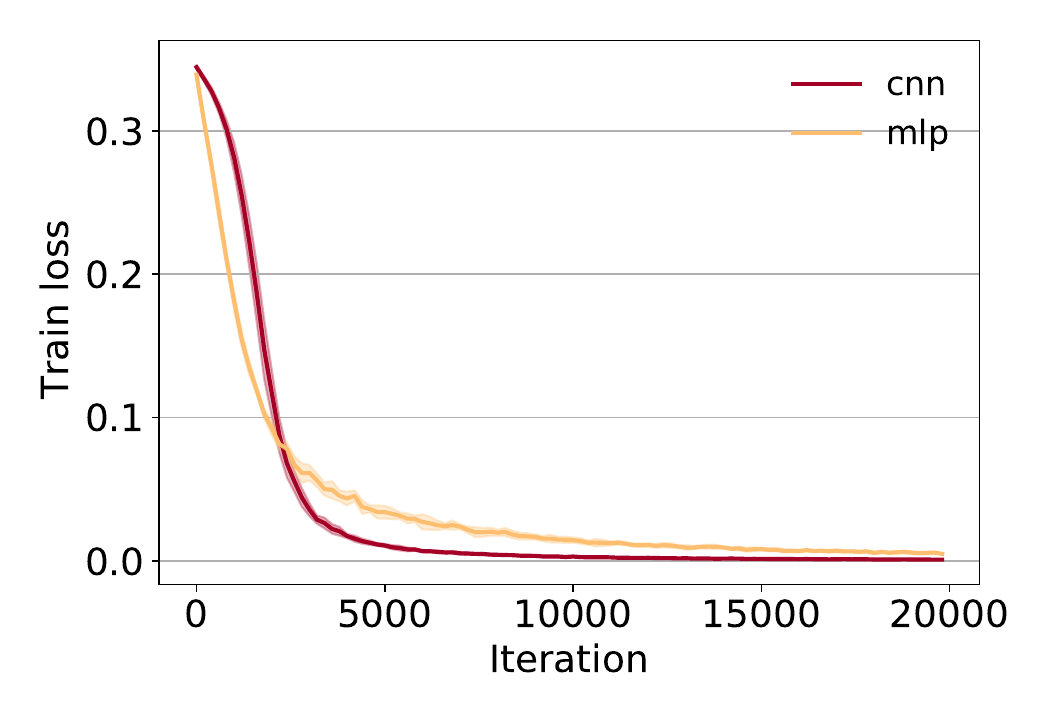}
		\caption{Training loss (MNIST100)}
		\label{fig:train_loss_MNIST100}		
	\end{subfigure}
    ~
	\begin{subfigure}[t]{0.32\linewidth}
		\includegraphics[trim=0mm 7mm 0mm 0mm,clip,width=1.05\linewidth]{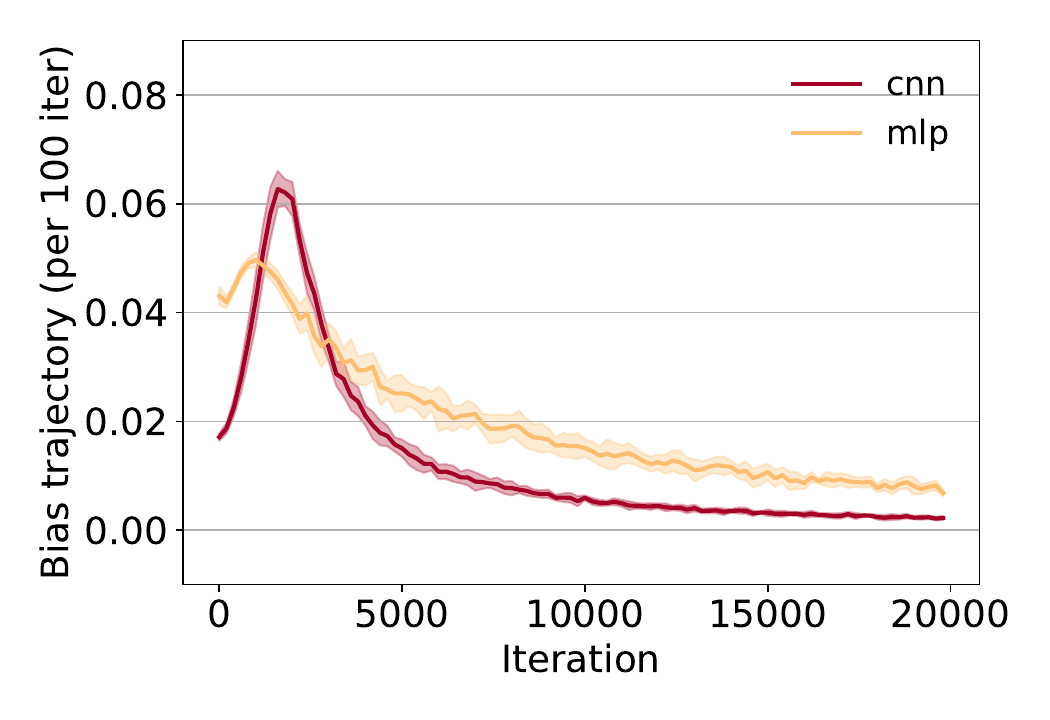}
		\caption{Bias trajectory (MNIST100)}\label{fig:trajectory_MNIST100}	
	\end{subfigure}
    ~
	\begin{subfigure}[t]{0.32\linewidth}
		\includegraphics[trim=0mm 7mm 0mm 0mm,clip,width=1.05\linewidth]{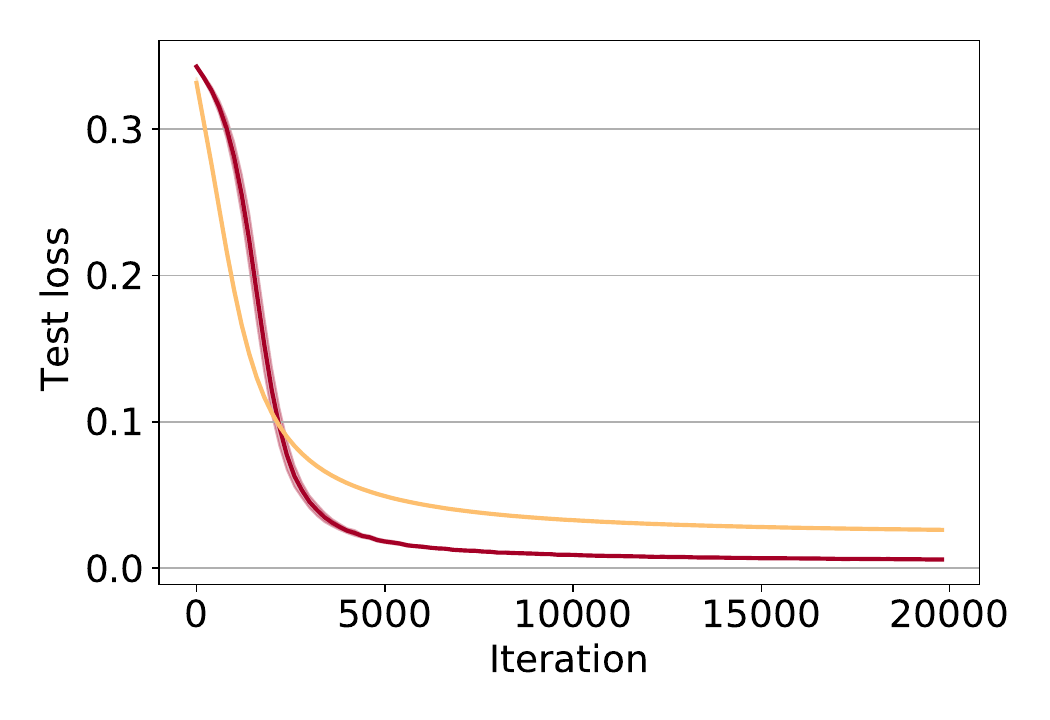}
		\caption{Test loss (MNIST100)}\label{fig:test_loss_MNIST100}		
	\end{subfigure}
    \\
    	\begin{subfigure}[t]{0.32\linewidth}
		\includegraphics[trim=0mm 7mm 0mm 0mm,clip,width=1.05\linewidth]{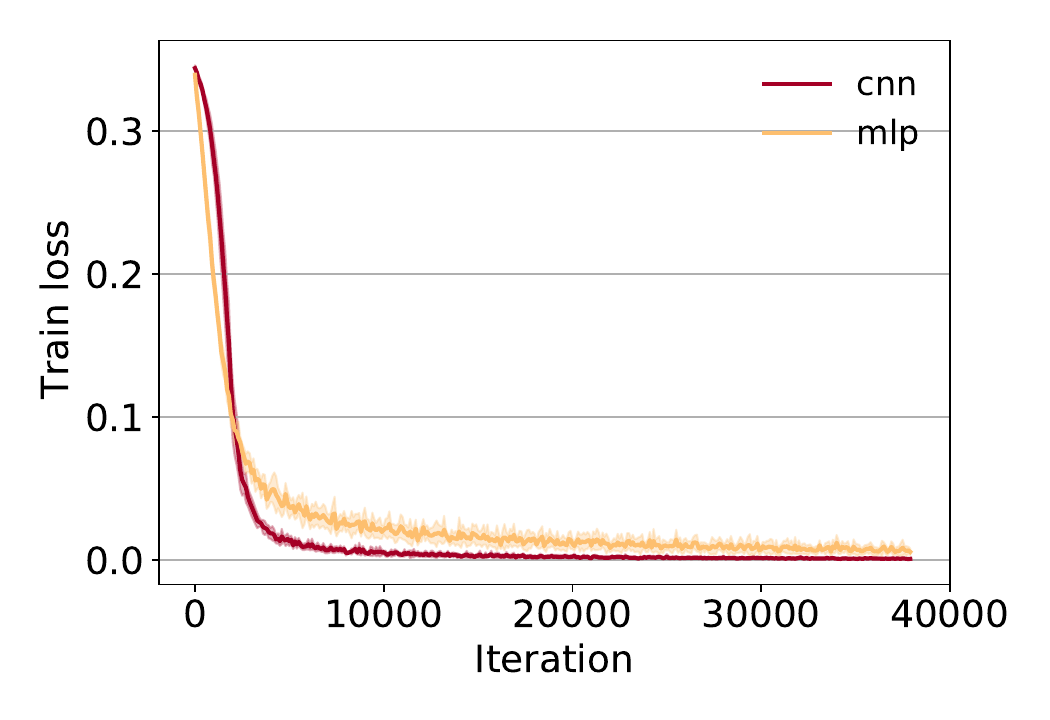}
		\caption{Training loss (MNIST1000)}
		\label{fig:train_loss_MNIST1000}		
	\end{subfigure}
    ~
	\begin{subfigure}[t]{0.32\linewidth}
		\includegraphics[trim=0mm 7mm 0mm 0mm,clip,width=1.05\linewidth]{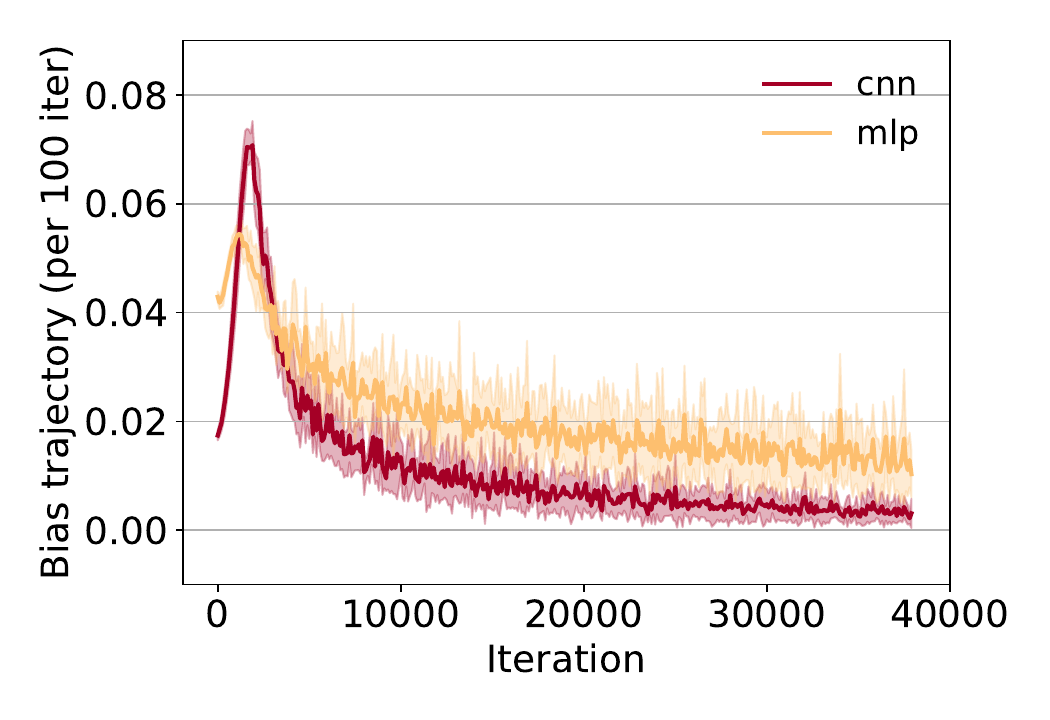}
		\caption{Bias trajectory (MNIST1000)}\label{fig:trajectory_MNIST1000}	
	\end{subfigure}
    ~
	\begin{subfigure}[t]{0.32\linewidth}
		\includegraphics[trim=0mm 7mm 0mm 0mm,clip,width=1.05\linewidth]{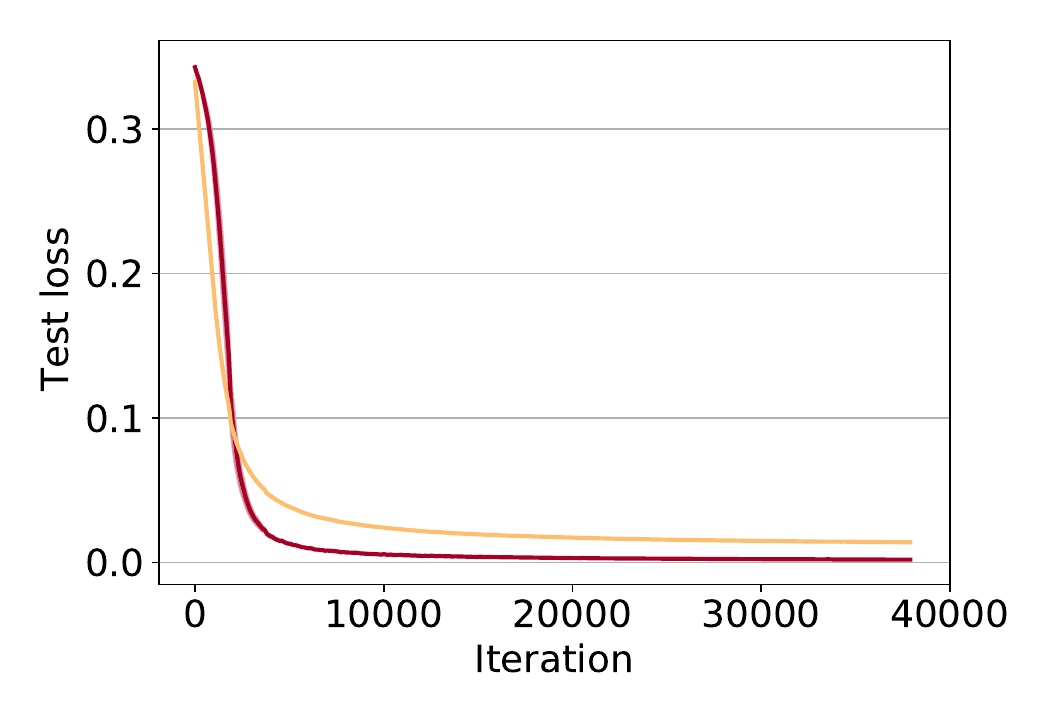}
		\caption{Test loss (MNIST1000)}\label{fig:test_loss_MNIST1000}	
	\end{subfigure}
	\caption{Training and test behavior of a \textcolor{myorange}{MLP} and a \textcolor{myred}{CNN} solving a binary MNIST image classification task with different number of samples (100 and 1000 images per class). All three measures (training loss, bias trajectory length, and test loss) are plotted at intervals of 100 iterations.}	
	\label{fig:mlp_cnn_MNIST}
\end{figure}

Figure~\ref{fig:mlp_cnn_MNIST} depicts the training loss, normalized bias trajectory length, and test loss for each dataset. Note that, in contrast to Figure~\ref{fig:mlp_cnn}, here all three measures are computed over time-intervals of 100 iterations (rather than per epoch). As expected, when the training set is small, both architectures fit the training data equally well after roughly 20k iterations, but the CNN overfits less. By observing the length of the bias trajectory, we deduce that the MLP is learning a more complex function than the CNN. Thus, in the MNIST100 case, there is a correlation between trajectory length and generalization with the NN architecture that is more appropriate for the task exhibiting a shorter trajectory.

It is important to remark that the complexity of the learned function is not the sole factor driving generalization (though it is can be a crucial factor all other things being equal). Convolutional layers are indeed more constrained and better suited to image data than fully convolutional ones -- thus it is reasonable to expect better generalization than MLPs. Nevertheless, our experiment shows that the CNN, beyond having the right architecture for the task, also learns a slightly lower complexity function than the MLP while fitting the training data equally well or better. Thus, here we mainly use the bias trajectory length as a diagnostic tool that helps us understand what functions the two architectures are learning.

\subsection{Effect of batch size on bias trajectory}
\label{appendix:additional_exp_results_batching}

This experiment investigates the effect of different batch sizes on the bias trajectory. We adopt the same setup as that of Task 1 (specifically $\omega=$ 0.25 and 0.75 in Figure~\ref{fig:batch_sizes}) and train NNs with SGD using batch sizes of 16 and 32, whereas our original experiment used a batch size of 1.

\begin{figure}[h]
    \centering
	\begin{subfigure}[t]{0.32\linewidth}
		\includegraphics[trim=0mm 7mm 0mm 0mm,clip,width=1.0\linewidth]{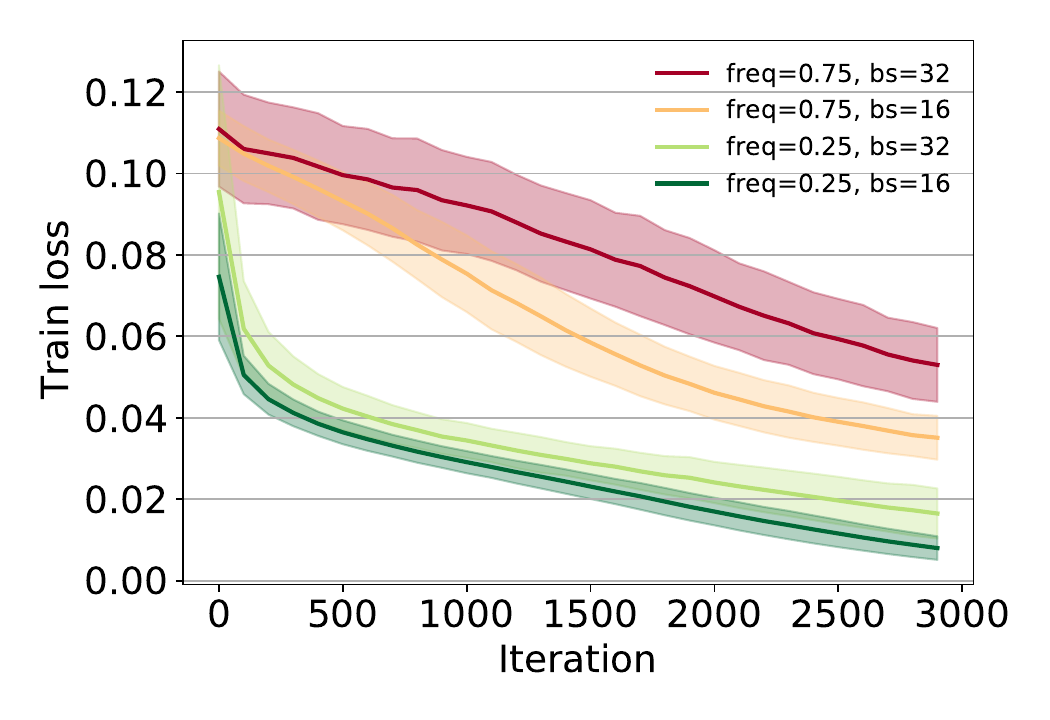}
		\caption{Training loss}
		\label{fig:batch_sizes_train}		
	\end{subfigure}
    ~
	\begin{subfigure}[t]{0.32\linewidth}
		\includegraphics[trim=0mm 7mm 0mm 0mm,clip,width=1.0\linewidth]{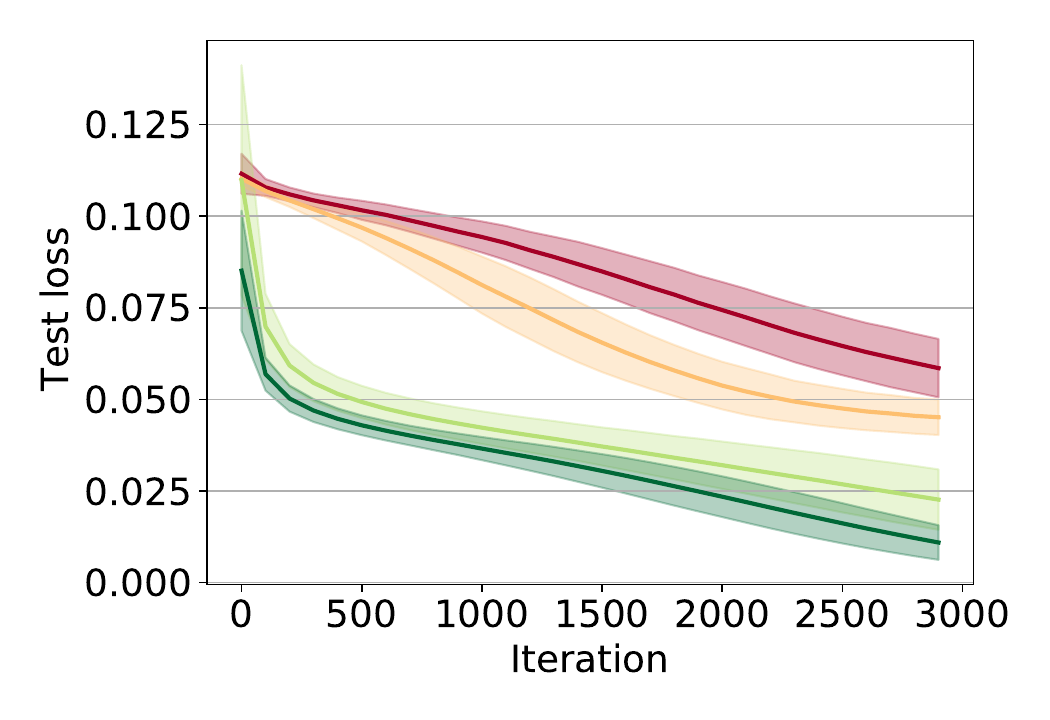}
		\caption{Test loss}\label{fig:batch_sizes_test}		
	\end{subfigure}
	~
	\begin{subfigure}[t]{0.32\linewidth}
		\includegraphics[trim=0mm 7mm 0mm 0mm,clip,width=1.0\linewidth]{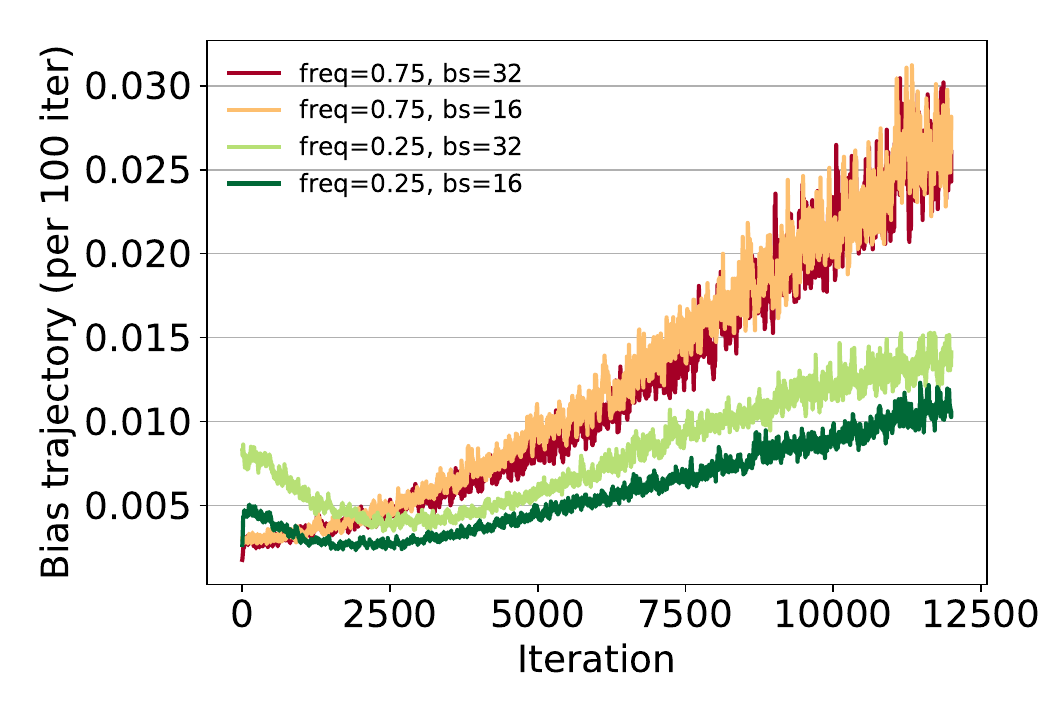}
		\caption{Bias trajectory}\label{fig:batch_sizes_trajectory}	
	\end{subfigure}
	\caption{Training and test behavior of MLPs using different batch size (16 and 32) and evaluated on data sampled from two different frequencies ($\omega=$0.25 and 0.75 in Task 1). All three measures (training loss, bias trajectory length, and test loss) are plotted at intervals of 100 iterations. The trajectory length correlates with the NN's complexity for different batch sizes.}	
	\label{fig:batch_sizes}
\end{figure}

The results are consistent with those of Figure~\ref{fig:mlp_cnn}, with a longer bias trajectory indicating that the NN is fitting a more complex hypothesis and correlating with higher test loss. Increasing the batch size from 1 to 32 also leads to a slight increase in trajectory length, though we currently lack mathematical evidence that support this empirical observation.

\subsection{The trajectory of higher layer biases}
\label{appendix:additional_exp_results_layer}

Our last experiment examines the training dynamics associated with the biased of higher layers. We focus on Task 1 and replicate the experiment described in Section~\ref{sec:experiments}, but now we track the normalized bias trajectory length of all biases and optimize $\bW_1$ freely.

Figure~\ref{fig:mlp_unfixed} reports the obtained results.  It can be observed that the bias dynamics correlate with task complexity for the first three layers, while being uncorrelated for the last two. To interpret these results, we recall that in the proof of Lemma~\ref{lemma:close_to_data} the trajectory length of $\bb_l$ relates to the Lipschitz constant of subnetwork $f_{d \from l+1} = f_d \circ \cdots \circ f_{l+1}$ close to the training data. Then, noticing how the trajectory length decays by almost an order of magnitude at each layer, we may infer that the Lipschitz constant of $f_{d \from l+1}$ decays quickly with $l$. The latter implies that the NN predominantly employs the first few layers to solve the task, whereas the last two layers implement very simple functions. 

\begin{figure}[t]
    \centering
	\begin{subfigure}[t]{0.32\linewidth}
		\includegraphics[width=1.05\linewidth]{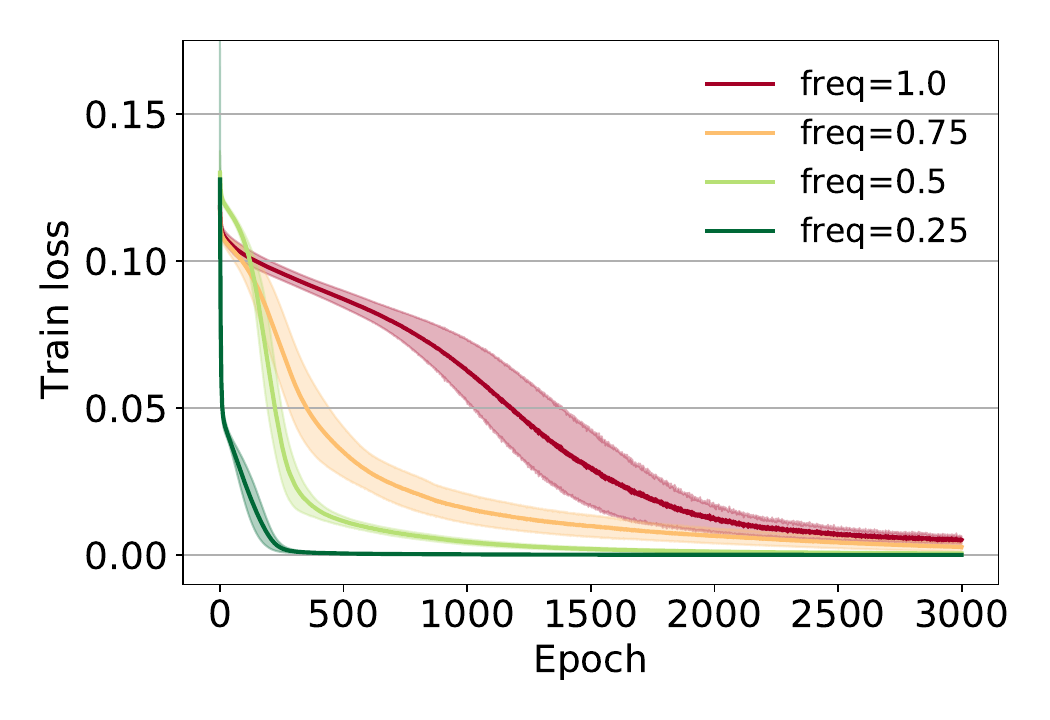}
		\caption{Training loss}
		\label{fig:mlp_train_loss_unfixed}		
	\end{subfigure}
    ~
	\begin{subfigure}[t]{0.32\linewidth}
		\includegraphics[width=1.05\linewidth]{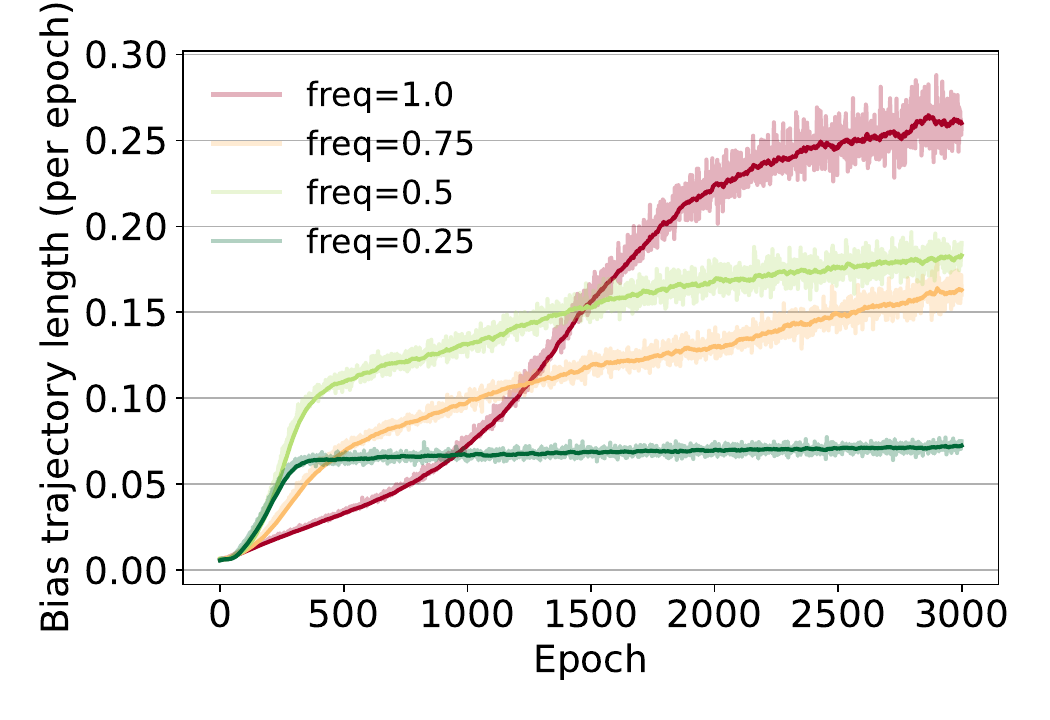}
		\caption{$b_1$ trajectory}\label{fig:mlp_trajectory_unfixed}	
	\end{subfigure}
    ~
	\begin{subfigure}[t]{0.32\linewidth}
		\includegraphics[width=1.05\linewidth]{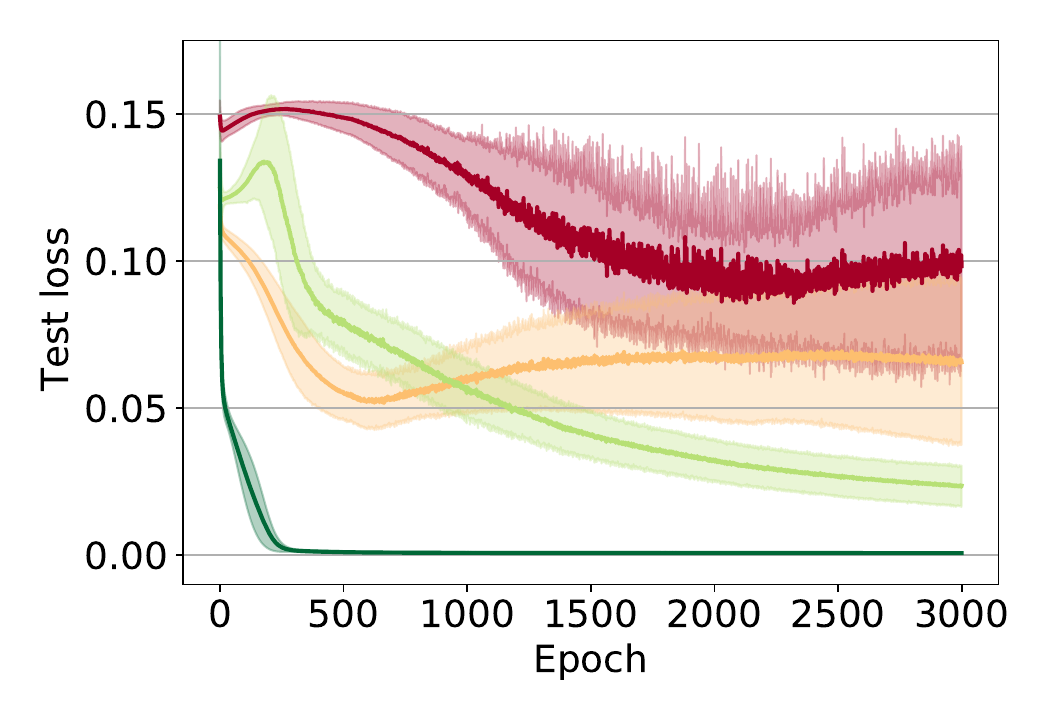}
		\caption{Test loss}\label{fig:mlp_test_loss_unfixed}		
	\end{subfigure}
	\vspace{4mm}
    \\
	\centering
	\begin{subfigure}[t]{0.23\linewidth}
		\includegraphics[width=1.05\linewidth]{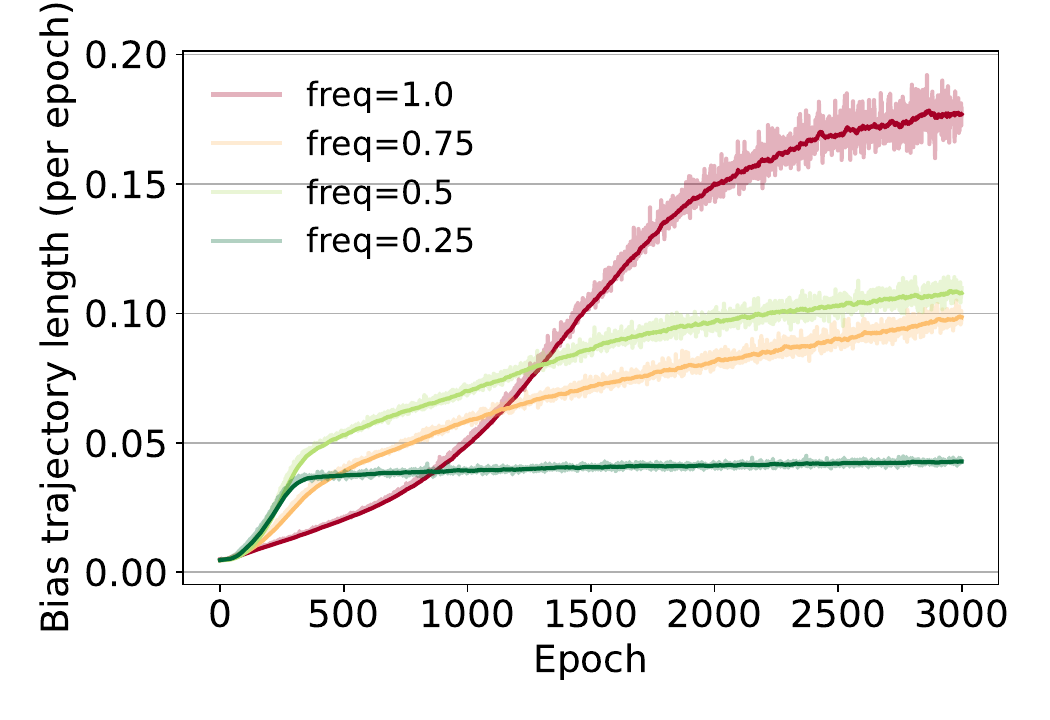}
		\caption{$b_2$ trajectory}
		\label{fig:mlp_b2_unfixed}		
	\end{subfigure}
    ~
	\begin{subfigure}[t]{0.23\linewidth}
		\includegraphics[width=1.05\linewidth]{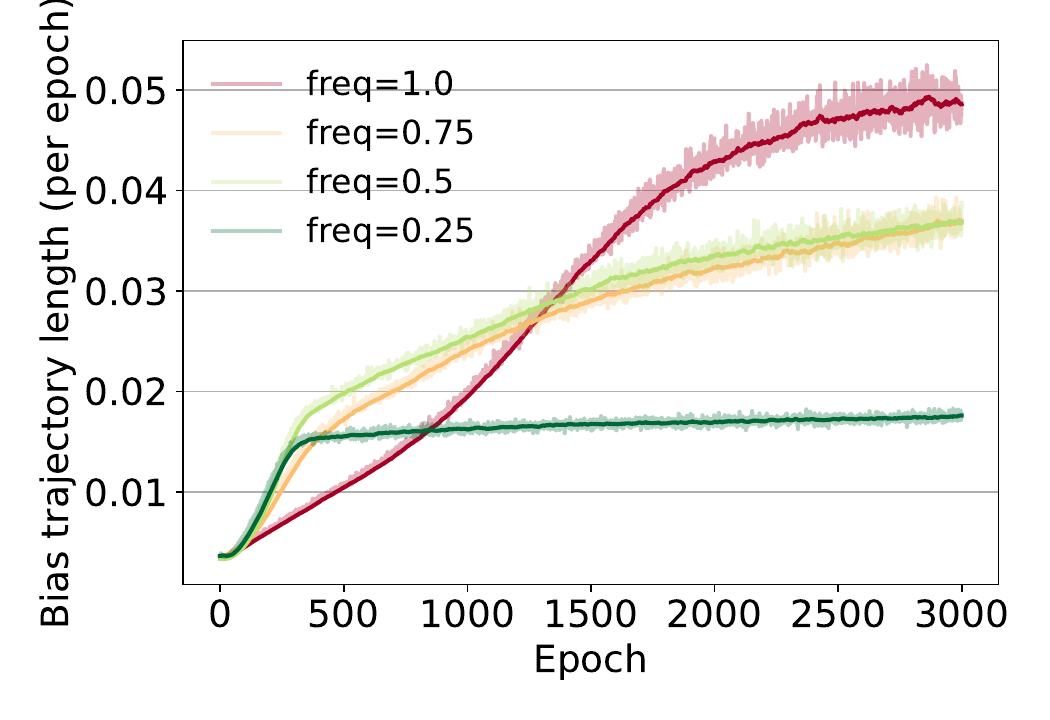}
		\caption{$b_3$ trajectory}
		\label{fig:mlp_b3_unfixed}		
	\end{subfigure}
	~
	\begin{subfigure}[t]{0.23\linewidth}
		\includegraphics[width=1.05\linewidth]{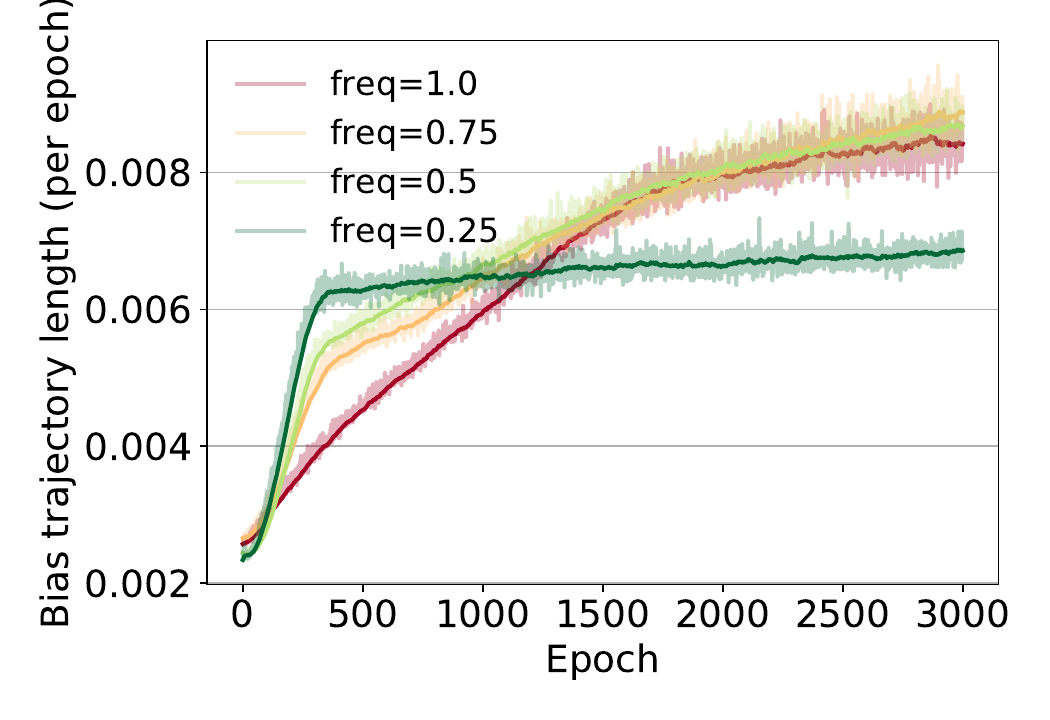}
		\caption{$b_4$ trajectory}
		\label{fig:mlp_b4_unfixed}		
	\end{subfigure}
    ~
	\begin{subfigure}[t]{0.23\linewidth}
		\includegraphics[width=1.05\linewidth]{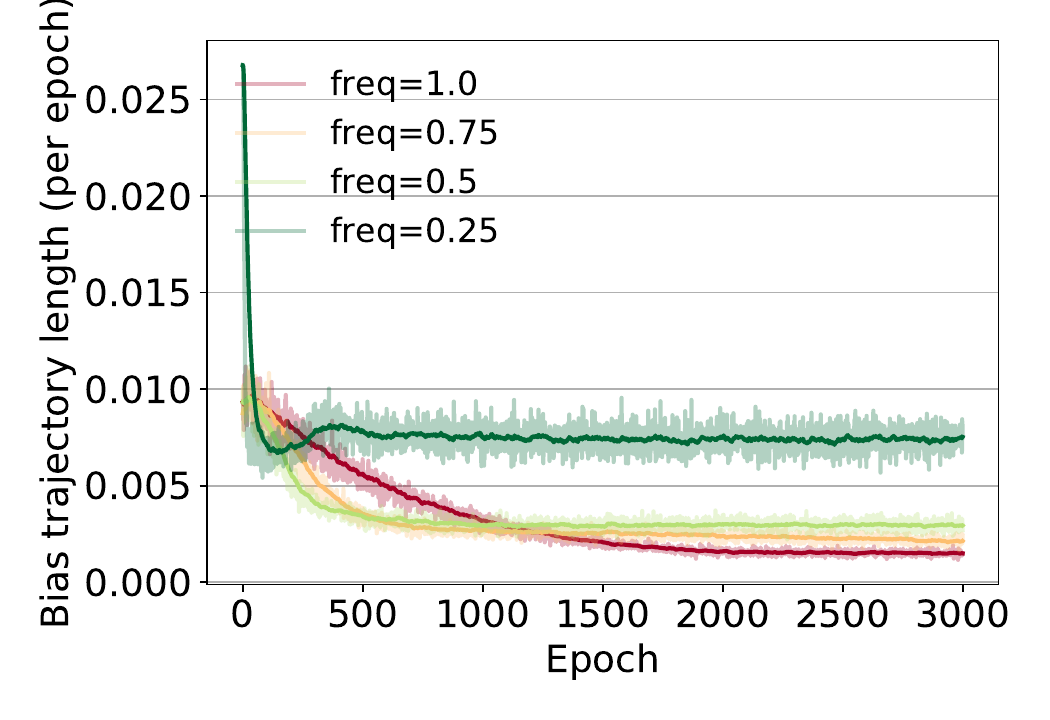}
		\caption{$b_5$ trajectory}
		\label{fig:mlp_b5_unfixed}		
	\end{subfigure}
\caption{Illustration of how the biases of all layers change when training a MLP to fit a function of increasing spatial frequency. Sub-figures (a) and (c) show the behavior of the training and test loss, whereas sub-figures (b) and (d-g) depict the per epoch bias trajectory.}	
	\label{fig:mlp_unfixed}
\end{figure}

\section{Deferred technical arguments}
\label{appendix:proofs}

\subsection{A simple Lemma}

\begin{lemma}
Let $f^{(t)}$ be a $d$-layer NN at the $t$-th SGD iteration, denote by $\bx^{(t)} \in X$ the point of the training set sampled at that iteration, and set
\begin{align}
    \epsilon_{f^{(t)}}(\bx,y) := 
    \left| \frac{\partial \ell(\hat{y}, y)}{\partial \hat{y}} \right|_{\hat{y} = f^{(t)}(\bx)}.  
\end{align}
The Lipschitz constant of $f^{(t)}$ at $\cR_{\bx^{(t)}}$ is 
\begin{align*}
    \frac{\| \bb_1^{(t+1)} - \bb_1^{(t)}\|_2}{\alpha_t \cdot \epsilon_{f^{(t)}}(\bx^{(t)},y^{(t)})}  \cdot \sigma_n(\bW_1^{(t)})
    \leq
    \lambda_{f^{(t)}}(\cR_{\bx^{(t)}}) 
    \leq \frac{\| \bb_1^{(t+1)} - \bb_1^{(t)}\|_2}{\alpha_t \cdot \epsilon_{f^{(t)}}(\bx^{(t)},y^{(t)})}  \cdot \sigma_1(\bW_1^{(t)}), 
\end{align*}
where $\sigma_1(\bW_1^{(t)}) \geq \cdots \geq \sigma_n(\bW_1^{(t)}) > 0$ are the singular values of $\bW_1^{(t)}$.
\label{lemma:close_to_data}
\end{lemma}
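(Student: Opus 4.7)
The plan is to identify the bias gradient with a scalar multiple of a vector whose norm controls $\lambda_{f^{(t)}}(\cR_{\bx^{(t)}})$ via the singular values of $\bW_1^{(t)}$. I will exploit the fact that, inside a fixed linear region, the gradient of $f$ with respect to the input $\bx$ and the gradient with respect to the first-layer bias $\bb_1$ differ only by a factor of $\bW_1^\top$.

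First I would write out the SGD update for $\bb_1$ and apply the chain rule:
\begin{align*}
\bb_1^{(t+1)} - \bb_1^{(t)} = -\alpha_t\,\frac{\partial \ell(\hat{y},y^{(t)})}{\partial \hat{y}}\bigg|_{\hat{y}=f^{(t)}(\bx^{(t)})}\cdot \grad_{\bb_1} f^{(t)}(\bx^{(t)}),
\end{align*}
so that, taking norms and using the definition of $\epsilon_{f^{(t)}}$,
\begin{align*}
\frac{\|\bb_1^{(t+1)} - \bb_1^{(t)}\|_2}{\alpha_t\,\epsilon_{f^{(t)}}(\bx^{(t)},y^{(t)})} = \|\grad_{\bb_1} f^{(t)}(\bx^{(t)})\|_2.
\end{align*}

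Next, using the representation of $f^{(t)}$ from the Preliminaries, inside the region $\cR_{\bx^{(t)}}$ the activations $\bS_l^{(t)}(\bx^{(t)})$ are fixed and $f^{(t)}$ reduces to an affine function of $\bW_1 \bx + \bb_1$. Writing $\bM := \bS_1^{(t)}\bW_2^\top \bS_2^{(t)}\cdots \bW_d^\top \bS_d^{(t)}$, one obtains $\grad_{\bb_1} f^{(t)}(\bx^{(t)}) = \bM$ and $\grad_{\bx} f^{(t)}(\bx^{(t)}) = \bW_1^\top \bM$. Since $\lambda_{f^{(t)}}(\cR_{\bx^{(t)}}) = \|\grad_{\bx} f^{(t)}(\bx^{(t)})\|_2$ by definition, this gives the identity
\begin{align*}
\lambda_{f^{(t)}}(\cR_{\bx^{(t)}}) = \|\bW_1^{(t)\,\top}\,\grad_{\bb_1} f^{(t)}(\bx^{(t)})\|_2.
\end{align*}

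Finally, I apply the two-sided operator-norm/smallest-singular-value inequality for any vector $\bv$,
\begin{align*}
\sigma_n(\bW_1^{(t)})\,\|\bv\|_2 \leq \|\bW_1^{(t)\,\top}\bv\|_2 \leq \sigma_1(\bW_1^{(t)})\,\|\bv\|_2,
\end{align*}
(valid when $n_1 \geq n$ so that $\sigma_n(\bW_1^{(t)})>0$, which is the regime where the lower bound is non-trivial). Substituting $\bv = \grad_{\bb_1} f^{(t)}(\bx^{(t)})$ and combining with the normalized bias-update identity above yields both sides of the claimed inequality.

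The argument is essentially a chain-rule identification followed by a singular value sandwich; there is no serious obstacle. The only delicate bookkeeping is keeping track of the orientation $\bW_1$ versus $\bW_1^\top$ and noting that their singular values coincide, and ensuring that $\grad_{\bb_1} f^{(t)}(\bx^{(t)})$ is well-defined at $\bx^{(t)}$ (any interior point of $\cR_{\bx^{(t)}}$ suffices; boundary issues are handled by taking right-derivatives as is standard for ReLU networks).
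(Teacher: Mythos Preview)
Your proposal is correct and follows essentially the same route as the paper's proof: the paper also identifies the bias gradient with the row vector $\bS_d\bW_d\cdots\bS_2\bW_2\bS_1$ (its Part~1), relates it to $\grad_{\bx} f$ through the extra factor of $\bW_1$ via the decomposition $f=f_{d\leftarrow 2}\circ f_1$ and the resulting singular-value sandwich (its Part~2), and then reads off $\|\grad_{\bb_1}f\|$ from the SGD update (its Part~3). The only cosmetic difference is that the paper phrases the sandwich as a multiplicative relation between $\lambda_{f}$ and $\lambda_{f_{d\leftarrow 2}}$ rather than directly as $\sigma_n\|\bv\|_2\le\|\bW_1^{\top}\bv\|_2\le\sigma_1\|\bv\|_2$, but the content is identical.
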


Let us start with some basics. By the chain rule, we have
\begin{align*}
    \frac{\partial \ell(f(\bx^{(t)}, \bw^{(t)}), y^{(t)})}{\partial \bw^{(t)}} 
    = \frac{\partial \ell(\hat{y}, y^{(t)})}{\partial \hat{y}} \cdot \frac{\partial f(\bx^{(t)}, \bw^{(t)})}{\partial \bw^{(t)}} 
\end{align*}
with $\hat{y} = f(\bx^{(t)}, \bw^{(t)})$, whereas  
the gradient w.r.t. the bias of the $\ell$-th layer is given by 
\begin{align*}
    \br{\frac{\partial f(\bx^{(t)}, \bw^{(t)})}{\partial \bb_l^{(t)}}}^\top  
    &= \bS_{d}^{(t)}(\bx^{(t)}) \bW_{d}^{(t)} \cdots \bS_{l+1}^{(t)}(\bx^{(t)}) \bW_{l+1}^{(t)} \bS_{l}^{(t)}(\bx^{(t)}).
\end{align*}
Note that the above equation abuses notation for the last layer activation $\bS_{d}^{(t)}(\bx^{(t)})$. Specifically, depending on whether we are using an identity or sigmoid activation function in the last layer, we set
\begin{align}
    \bS_{d}^{(t)}(\bx^{(t)}) = 1 \quad \text{or} \quad \bS_{d}^{(t)}(\bx^{(t)}) = \psi\br{\bW_d^{(t)}\br{f_{d-1}^{(t)} \circ \cdots \circ f_1^{(t)}(\bx^{(t)})} + \bb_d^{(t)}},
\end{align}
where $\psi(z) = \frac{\partial \rho_d(z)}{\partial z} = \frac{1}{1 + e^{-z}} \cdot \br{1-\frac{1}{1 + e^{-z}}}$.

\paragraph{Part 1.} 
Define the following shorthand notation: 
\begin{align*}
    \norm{\br{\frac{\partial \ell(f(\bx^{(t)}, \bw^{(t)}), y^{(t)})}{\partial \bb_{l}}}^\top}_2  = \beta_l(\bx^{(t)})
\end{align*}
It follows from definition that
\begin{align*}
    \beta_{l}(\bx^{(t)})
    &= \norm{  \frac{\partial \ell(o, y^{(t)})}{\partial o} \, \bS_{d}^{(t)}(\bx^{(t)}) \bW_{d}^{(t)}  \cdots \bS_{l}^{(t)}(\bx^{(t)}) \bW_{l+1}^{(t)} \bS_{l}(\bx^{(t)})}_2 \\
    &= \left| \frac{\partial \ell(o, y^{(t)})}{\partial o} \right| \, \norm{ \br{\prod_{i=d}^{l+1} \bS_{i}^{(t)}(\bx^{(t)}) \bW_{i}^{(t)}} \bS_{l}^{(t)}(\bx^{(t)})}_2
\end{align*}
or equivalently, 
\begin{align}
    \norm{ \br{\prod_{i=d}^{l+1} \bS_{i}^{(t)}(\bx^{(t)}) \bW_{i}^{(t)}} \bS_{l}^{(t)}(\bx^{(t)})}_2 
    = \beta_{l}(\bx^{(t)}) \, \left| \frac{\partial \ell(o, y^{(t)})}{\partial o} \right|^{-1} 
    = \frac{\beta_{l}(\bx^{(t)})}{\epsilon_{f^{(t)}}(\bx^{(t)},y^{(t)})}.
    \label{eq:gradient_margin_bound_1}
\end{align}

\paragraph{Part 2.} We are interested in upper bounding the Lipschitz constant $\lambda_{f^{(t)}}$ of the NN close to the training data $X$.

First observe that
$
    f(\bx, \bw) = f_{d \from 2} \circ f_1(\bx),     
$
where we set
\begin{align*}
    f_{d\from 2}(\bx, \bw) = f_d \circ f_{d-1} \circ \cdots \circ f_{2}(\bx, \bw)
\end{align*}
Let $\cR_{\bx^{(t)}}$ be the region associated with point $\bx^{(t)}$ and $\cR_{f_1^{(t)}(\bx^{(t)})}$ the region of the NN $f_{d \from 2}^{(t)}$ associated with  point $f_1^{(t)}(\bx^{(t)})$. The Lipschitz constants of $f_{d \from 2}^{(t)}$ and $f^{(t)}$ are related as follows:
%
\begin{align}
    \lambda_{f_{d \from 2}^{(t)}}(\cR_{f_1^{(t)}(\bx^{(t)})}) \cdot \sigma_n(\bW_1^{(t)}) \leq \lambda_{f^{(t)}}(\cR_{\bx^{(t)}}) 
    \leq \lambda_{f_{d \from 2}^{(t)}}(\cR_{f_1^{(t)}(\bx^{(t)})}) \cdot \sigma_1(\bW_1^{(t)}),
    \label{eq:f_multiplicative}
\end{align}
whereas 
\begin{align}
    \lambda_{f_{d\from 2}}(\cR_{f_1^{(t)}(\bx^{(t)})}) 
    &= \norm{ \br{\prod_{l=d}^{2} \bS_{l}^{(t)}(\bx^{(t)}) \bW_{l}^{(t)}} \bS_{1}^{(t)}(\bx^{(t)})}_2     
    = \frac{\beta_{1}(\bx^{(t)})}{\epsilon_{f^{(t)}}(\bx^{(t)},y^{(t)})}.
    \label{eq:lambda_fdto2}
\end{align}
Combining \eqref{eq:f_multiplicative} with \eqref{eq:lambda_fdto2}, we obtain 
\begin{align*}
    \frac{\beta_{1}(\bx^{(t)})}{\epsilon_{f^{(t)}}(\bx^{(t)},y^{(t)})}  \cdot \sigma_n(\bW_1^{(t)})
    \leq
    \lambda_{f^{(t)}}(\cR_{\bx^{(t)}}) 
    \leq \frac{\beta_{1}(\bx^{(t)})}{\epsilon_{f^{(t)}}(\bx^{(t)},y^{(t)})}  \cdot \sigma_1(\bW_1^{(t)}) 
\end{align*}

\paragraph{Part 3.} Re-organizing the SGD expression and taking the norm, we have 
$$
    \beta_l(\bx^{(t)}) = \norm{\br{\frac{\partial \ell (f(\bx^{(t)}, \bw^{(t-1)}), y^{(t)})}{\partial \bb_{l}^{(t)}}}^\top}_2 
    = \frac{1}{\alpha_t} \, \| \bb_l^{(t+1)} - \bb_l^{(t)}\|_2. 
$$
implying also
\begin{align*}
    \frac{\| \bb_1^{(t+1)} - \bb_1^{(t)}\|_2}{\alpha_t \cdot \epsilon_{f^{(t)}}(\bx^{(t)},y^{(t)})}  \cdot \sigma_n(\bW_1^{(t)})
    \leq
    \lambda_{f^{(t)}}(\cR_{\bx^{(t)}}) 
    \leq \frac{\| \bb_1^{(t+1)} - \bb_1^{(t)}\|_2}{\alpha_t \cdot \epsilon_{f^{(t)}}(\bx^{(t)},y^{(t)})}  \cdot \sigma_1(\bW_1^{(t)}), 
\end{align*}
as claimed.

\subsection{Proof of Theorem~\ref{theorem:trajectory}}

The proof of the theorem follows directly from Lemma~\ref{lemma:close_to_data} by summing over the training trajectory: 
\begin{align*}
    \sum_{t \in T} \frac{ \lambda_{f^{(t)}}(\cR_{\bx})}{\sigma_n(\bW_1^{(t)})}
    \overset{\text{Lemma}~\ref{lemma:close_to_data}}{\geq} \sum_{t \in T} \frac{ \| \bb_1^{(t+1)} - \bb_1^{(t)}\|_2}{\, \alpha_t \, \epsilon_{f^{(t)}}(\bx^{(t)},y^{(t)})} 
    \overset{\text{Lemma}~\ref{lemma:close_to_data}}{\geq} \sum_{t \in T} \frac{ \lambda_{f^{(t)}}(\cR_{\bx})}{\sigma_1(\bW_1^{(t)})}. 
\end{align*}
%

\subsection{Proof of Corollary~\ref{corollary:steady_learner}}

For any point $\bx$ with label $y$ and iteration $t \in T$, we say that condition $c_t(\bx)$ holds if 
$$
    \|\bb_1^{(t+1)} - \bb_1^{(t)}\| \leq \varphi \, \alpha_t \, \epsilon_{f^{(t)}} (\bx, y)
$$ 
The above is the same condition as in the corollary statement but applied to an arbitrary point $\bx$. 

Write $\kappa_t$ to refer to the number of training points $\bx_i \in X$ for which $c_t(\bx_i)$ holds: clearly, $\kappa_t \in [0,N]$, where $N$ is the size of the training set.

We also suppose that there are $\xi$ iterations within $T$ for which $\kappa_t < N$: these are the iterations where the Lipschitz constant is larger than $\beta \varphi$ for at least one point in the training set. 

Since we sample $\bx^{(t)}$ i.i.d. from $X$, the probability that $c_t(\bx^{(t)})$ is satisfied for every $t \in T$ is at most 
$$
    \prod_{t \in T} \frac{\kappa_t}{N} \leq \br{\frac{N-1}{N}}^\xi = \br{1 - \frac{1}{N}}^\xi \leq e^{-\xi/N}.
$$
By noting that above corresponds to the probability that the NN is $(\tau, \varphi)$-steady, we deduce that $\xi$ cannot grow with $|T|$ (otherwise, the probability that the NN is $(\tau, \varphi)$-steady would become zero as $|T|\to \infty$, which contradicts the corollary assumptions).    

To complete the derivation, we note that if we select the iteration $t$ at random from $T$, the probability that there will be some $\bx_i \in X$ for which $c_t(\bx_i)$ is not satisfied is $\xi/|T| = O(1/|T|)$, which converges to 0 as $|T|$ grows.

\subsection{Proof of Corollary~\ref{corollary:parameter_variance}}

We consider the interval $T = \{t_1+1, \ldots, t_2\}$ and fix 
$$
    \bb_1 
    = \argmin_{\bb \in \Rbb^n} \sum_{t \in T} \|\bb_1^{(t)} - \bb \|_2^2  
    = \sum_{t\in T} \frac{\bb_1^{(t)}}{|T|}
$$
to be the average bias. 
Working as in the proof of Theorem~\ref{theorem:trajectory}, we deduce
\begin{align*}
    \sum_{t \in T} \frac{ \| \bb_1^{(t+1)} - \bb_1^{(t)}\|_2}{\epsilon_{f^{(t)}}(\bx^{(t)},y^{(t)})}
    \geq \sum_{t \in T} \frac{\alpha_t \lambda_{f^{(t)}}(\cR_{\bx^{(t)}})}{\sigma_1(\bW_1^{(t)})}.
\end{align*}

We then proceed to upper bound the trajectory length studied in Theorem~\ref{theorem:trajectory} in terms of the (empirical) variance of the bias: 
\begin{align}
    \br{\sum_{t\in T} \frac{\|\bb_1^{(t+1)} - \bb_1^{(t)} \|_2}{ \epsilon_{f^{(t)}}(\bx^{(t)},y^{(t)})}}^2
    &\leq \br{\sum_{t\in T} \frac{\|\bb_1^{(t+1)} - \bb_1 \|_2 + \|\bb_1^{(t)} -\bb_1\|_2}{\epsilon_{f^{(t)}}(\bx^{(t)},y^{(t)})}}^2 \notag \\
    &\hspace{-20mm}\leq \br{\sum_{t\in T} \epsilon_{f^{(t)}}(\bx^{(t)},y^{(t)})^{-2} } \br{\sum_{t\in T} \br{\|\bb_1^{(t+1)} - \bb_1 \|_2 + \|\bb_1^{(t)} -\bb_1\|_2}^2} \tag{From Cauchy's inequality} \\
    &\hspace{-20mm}\leq \sum_{t\in T} \epsilon_{f^{(t)}}(\bx^{(t)},y^{(t)})^{-2} \sum_{t\in T} 2 \br{\|\bb_1^{(t+1)} - \bb_1 \|_2^2 + \|\bb_1^{(t)} -\bb_1\|_2^2} \tag{since $(a+b)^2 \leq 2 (a^2 + b^2)$} \\
    &\hspace{-20mm}\leq 4 \sum_{t\in T} \epsilon_{f^{(t)}}(\bx^{(t)},y^{(t)})^{-2} \sum_{t=t_1}^{t_2} \|\bb_1^{(t)} - \bb_1 \|_2^2 \notag 
\end{align}
or, equivalently,
\begin{align}
    \sum_{t=t_1}^{t_2} \frac{\|\bb_1^{(t)} - \bb_1 \|_2^2}{|T|} 
    \geq  \frac{1}{4|T|} \br{\sum_{t\in T} \frac{\|\bb_1^{(t+1)} - \bb_1^{(t)} \|_2}{\epsilon_{f^{(t)}}(\bx^{(t)},y^{(t)})}}^2
 \frac{1}{\sum_{t\in T} \epsilon_{f^{(t)}}(\bx^{(t)},y^{(t)})^{-2}}     
\end{align}
Thus, we have
\begin{align}
    \sum_{t=t_1}^{t_2} \frac{\|\bb_1^{(t)} - \bb_1 \|_2^2}{|T|} 
    \geq  0.25 \, \br{\avg_{t=t_1}^{t_2} \frac{\alpha_t \lambda_{f^{(t)}}(\cR_{\bx^{(t)}})}{\sigma_1(\bW_1^{(t)})}}^2
 \frac{|T|}{\sum_{t\in T} \frac{1}{\epsilon_{f^{(t)}}(\bx^{(t)},y^{(t)})^{2}}}.     
     \label{eq:corollary_prob_1}
\end{align}
The proof concludes by noticing that the right-most term corresponds to a harmonic mean of $\epsilon_{f^{(t)}}(\bx^{(t)},y^{(t)})^{2}$ over $t \in T$. 
%

\subsection{Proof of Corollary~\ref{corollary:distance_to_initialization}}

Suppose that we train our NN for $\tau$ iterations and set $T = \{0, \ldots, \tau-1\}$. The distance to initialization is bounded by
\begin{align*}
    \|\bb_1^{(\tau)} - \bb_1^{(0)}\|_2 
    \leq \sum_{t \in T} \|\bb_1^{(t+1)} - \bb_1^{(t)}\|_2 
    &= \sum_{t \in T} \frac{\|\bb_1^{(t+1)} - \bb_1^{(t)}\|_2}{\epsilon_{f^{(t)}}(\bx^{(t)},y^{(t)})} \, \epsilon_{f^{(t)}}(\bx^{(t)},y^{(t)}). 
\end{align*}
We obtain the final expression by arguing as in the proof of Theorem~\ref{theorem:trajectory} to write:
\begin{align*}
    \sum_{t \in T} \frac{ \| \bb_1^{(t+1)} - \bb_1^{(t)}\|_2}{\epsilon_{f^{(t)}}(\bx^{(t)},y^{(t)})}
    \, \epsilon_{f^{(t)}}(\bx^{(t)},y^{(t)})
    \leq \sum_{t \in T} \frac{\alpha_t \, \epsilon_{f^{(t)}}(\bx^{(t)},y^{(t)}) \lambda_{f^{(t)}}(\cR_{\bx^{(t)}})}{\sigma_n(\bW_1^{(t)})}.
\end{align*}
%

\subsection{Proof of Theorem~\ref{theorem:between_training_regions}}

We will start by proving the following Lemma: 

\begin{lemma}
Let $\cR_{\bx}$ be a linear region of $f^{(t)}$ and suppose that there exists a vector $\ba \in \Rbb^{|T|}$ such that
\begin{align}
    \prod_{l=d-1}^1 [\bS_{l}^{(t)}(\bx)](i_l,i_l) = \sum_{k \in T} \ba(k) \cdot \prod_{l=d-1}^1 [\bS_{l}^{(k)}(\bx^{(k)})](i_l,i_l)
    \label{eq:condition_detail}
\end{align}
for all indices $\{i_l\}_{l=1, \cdots, d-1}$, with $i_l \in \{1, \ldots, n_{l}\}$. Then, $f^{(t)}$ is Lipschitz continuous within $\cR_{\bx}$ and its Lipschitz constant is at most
$$
    \lambda_{f^{(t)}}(\cR_{\bx}) \leq (1 + \gamma) \sum_{k \in T} |\ba(k)| \,  \frac{|\bS^{(t)}_d(\bx)|}{|\bS^{(t)}_d(\bx^{(k)})|} \, \lambda_{f^{(k)}}(\cR_{\bx^{(k)}}), 
$$
with
\begin{align}
    \bS_{d}^{(t)}(\bz) = \frac{\partial \rho_d(x)}{\partial x} \bigg|_{x = \bW_d^{(t)}\br{f_{d-1}^{(t)} \circ \cdots \circ f_1^{(t)}(\bz)} + \bb_d^{(t)}}.
\end{align}
\label{lemma:between_training_regions}
\end{lemma}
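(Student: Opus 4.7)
The proof rests on the fact that a ReLU network's Jacobian depends \emph{multilinearly} on the tensor-indexed activation $\prod_{l<d} [\bS_l^{(t)}(\bz)](i_l,i_l)$. I would begin by expanding the Jacobian entry-wise:
$$[\nabla f^{(t)}(\bz)^\top]_j \,=\, \bS_d^{(t)}(\bz) \sum_{i_{d-1},\ldots,i_1} \Big([\bW_d^{(t)}]_{1,i_{d-1}} [\bW_{d-1}^{(t)}]_{i_{d-1},i_{d-2}} \cdots [\bW_1^{(t)}]_{i_1,j}\Big) \prod_{l=1}^{d-1} [\bS_l^{(t)}(\bz)]_{i_l,i_l}.$$
Each coefficient in parentheses depends only on the current weights $\bW^{(t)}$, so the Jacobian is a linear functional of the indicator tensor $\prod_l [\bS_l^{(t)}(\bz)]_{i_l,i_l}$ — precisely the object that hypothesis \eqref{eq:condition_detail} decomposes into a $\bk$-combination over $T$.

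Plugging that decomposition into the expansion and exchanging the summation orders, linearity yields
$$\bW_d^{(t)} \prod_{l=d-1}^1 \bS_l^{(t)}(\bx)\,\bW_l^{(t)} \,=\, \sum_{k\in T} \bk(k)\, \bW_d^{(t)} \prod_{l=d-1}^1 \bS_l^{(k)}(\bx^{(k)})\,\bW_l^{(t)}.$$
Multiplying on the left by the scalar $\bS_d^{(t)}(\bx)$ and applying the triangle inequality,
$$\lambda_{f^{(t)}}(\cR_\bx) \,\leq\, \sum_{k\in T} |\bk(k)|\, \Big\| \bS_d^{(t)}(\bx)\,\bW_d^{(t)} \prod_{l=d-1}^1 \bS_l^{(k)}(\bx^{(k)})\,\bW_l^{(t)} \Big\|_2.$$

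The final step is to reshape each summand into a bona fide gradient. By Assumption~\ref{assumption:almost_convergence}, once $t,k\geq \tau$ the activation pattern at $\bx^{(k)}$ is frozen across iterations, so $\bS_l^{(k)}(\bx^{(k)}) = \bS_l^{(t)}(\bx^{(k)})$ for every $l<d$. Factoring out the scalar ratio $\bS_d^{(t)}(\bx)/\bS_d^{(t)}(\bx^{(k)})$ then leaves $\nabla f^{(t)}(\bx^{(k)})^\top$ inside, whose norm equals $\lambda_{f^{(t)}}(\cR_{\bx^{(k)}}) \leq (1+\gamma)\lambda_{f^{(k)}}(\cR_{\bx^{(k)}})$ by the same assumption, yielding the claimed bound.

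The main obstacle I anticipate is the bookkeeping of the tensor-index expansion and the verification that the multilinear substitution may be applied to every $(i_1,\ldots,i_{d-1})$-tuple simultaneously; the Kronecker-product form of $\bs_t(\bz)$ used in the main text is precisely the language that makes this transparent. A minor care point is that the denominator $\bS_d^{(t)}(\bx^{(k)})$ must be nonzero, which holds automatically for identity output ($\bS_d\equiv 1$) and for sigmoid (whose derivative is strictly positive), so the final bound is well-defined in either regime.
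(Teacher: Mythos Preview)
Your proposal is correct and follows essentially the same argument as the paper: expand the gradient as a multilinear sum over activation-index tuples, substitute the hypothesis \eqref{eq:condition_detail} to express $q^{(t)}(\bx)$ as $\sum_{k\in T}\bk(k)\,q^{(t)}(\bx^{(k)})$ (using Assumption~\ref{assumption:almost_convergence} to freeze the patterns $\bS_l^{(k)}(\bx^{(k)})=\bS_l^{(t)}(\bx^{(k)})$), apply the triangle inequality, factor out the scalar ratio $|\bS_d^{(t)}(\bx)|/|\bS_d^{(t)}(\bx^{(k)})|$, and finish with the $(1+\gamma)$ bound from the same assumption. The only cosmetic difference is the order in which you invoke Assumption~\ref{assumption:almost_convergence}, and your remarks on the Kronecker bookkeeping and the nonvanishing of $\bS_d^{(t)}$ are apt.
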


\begin{proof}
The gradient of a network $f^{(t)}$ at at point $\bx$ is simply
\begin{align*}
    \grad{} f^{(t)}(\bx)   
    &= \prod_{l=d}^1 \bS_{l}^{(t)}(\bx) \bW_l^{(t)} = \bS^{(t)}_d(\bx) \overbrace{\bW_d^{(t)} \prod_{l=d-1}^1 \bS_{l}^{(t)}(\bx) \bW_l^{(t)}}^{q^{(t)}({\bx})}. 
\end{align*}
Term $q^{(t)}({\bx})$ can be expanded as follows:
\begin{align*}
    q^{(t)}({\bx}) 
    &= \sum_{i_{d-1} = 1}^{n_d} \bW_d^{(t)}(1,i_{d-1}) \left[\prod_{l=d-1}^1 \bS_{l}^{(t)}(\bx) \bW_l^{(t)} \right](i_{d-1},:) \\
    &= \cdots \\
    &= \sum_{i_{d-1} = 1}^{n_{d-1}} \cdots \sum_{i_1 = 1}^{n_1} \bW_d^{(t)}(1,i_{d-1}) 
    [\bS_{1}^{(t)}(\bx)](i_1,i_1) \, \bW_1^{(t)}(i_1,:).
\end{align*}
Thus, under the main Lemma condition it is also true that
\begin{align}
    q^{(t)}({\bx}) 
    &= \sum_{k \in T} \ba(k) \cdot \br{\sum_{i_{d-1} = 1}^{n_{d-1}} \cdots \sum_{i_1 = 1}^{n_1} \bW_d^{(t)}(1,i_{d-1})     \cdots [\bS_{1}^{(k)}(\bx^{(k)})](i_1,i_1) \, \bW_1^{(t)}(i_1,:)} \notag \\
    &= \sum_{k \in T} \ba(k) \cdot \br{\sum_{i_{d-1} = 1}^{n_{d-1}} \cdots \sum_{i_1 = 1}^{n_1} \bW_d^{(t)}(1,i_{d-1}) \cdots [\bS_{1}^{(t)}(\bx^{(k)})](i_1,i_1) \, \bW_1^{(t)}(i_1,:)} \notag \\    
    &= \sum_{k \in T} \ba(k) \cdot q^{(t)}(\bx^{(k)}), \notag 
\end{align}
Note that in the second step above we have used Assumption~\ref{assumption:almost_convergence} to argue that the training point activation patterns do not change within $T$.

The above analysis implies that 
\begin{align*}
    \lambda_{f^{(t)}}(\cR_{\bx}) 
    = |\bS^{(t)}_d(\bx)| \|q^{(t)}({\bx})\|_2 
    &= |\bS^{(t)}_d(\bx)| \|\sum_{k \in T} \ba(k) \cdot q^{(t)}(\bx^{(k)})\|_2 \\ 
    &\leq |\bS^{(t)}_d(\bx)| \sum_{k \in T} \| \ba(k) \cdot q^{(t)}(\bx^{(k)})\|_2 \\
    &= \sum_{k \in T} |\ba(k)| \cdot \frac{|\bS^{(t)}_d(\bx)|}{|\bS^{(t)}_d(\bx^{(k)})|} \cdot  \lambda_{f^{(t)}}(\cR_{\bx^{(k)}}) \\ 
    &\leq (1+\gamma) \sum_{k \in T} |\ba(k)| \cdot  \frac{|\bS^{(t)}_d(\bx)|}{|\bS^{(t)}_d(\bx^{(k)})|}  \cdot \lambda_{f^{(k)}}(\cR_{\bx^{(k)}}) 
\end{align*}
with the 3rd step being true due to the triangle inequality and the 5th follows from Assumption~\ref{assumption:almost_convergence}.
\end{proof}

The proof continues by realizing that, for every index set $i_{d-1}, \cdots, i_1$ there exists an entry $i$ such that   
\begin{align*}
    [\bigotimes_{l = d-1}^{1} \bS_{l}^{(t)}(\bx)](i,i) = \prod_{l=d-1}^1 [\bS_{l}^{(t)}(\bx)](i_l,i_l).
\end{align*}
Therefore, condition~\eqref{eq:condition_detail} is equivalent to asserting that 
\begin{align*}
    \bs_t(\bx) = \bigotimes_{l = d-1}^{1} \diag{\bS_{l}^{(t)}(\bx)} 
    &= \sum_{k \in T} \ba(k) \cdot \bigotimes_{l = d-1}^{1} \diag{\bS_{l}^{(k)}(\bx^{(k)})} \\
    &= \sum_{k \in T} \ba(k) \cdot \bs_k(\bx^{(k)})
    = \bS_T \, \ba.
\end{align*}
Let us focus on ${|\bS^{(t)}_d(\bx)|}/{|\bS^{(t)}_d(\bx^{(k)})|}$. When there is no activation in the last layer, the term is trivially $\xi = 1$. We next derive an upper bound to also account for the sigmoid activation:
To do this, set $z = \bW_d^{(t)}\br{f_{d-1}^{(t)} \circ \cdots \circ f_1^{(t)}(\bx)} + \bb_d^{(t)}$ such that 
$$
    \bS_{d}^{(t)}(\bx) = \psi(z) \quad \text{with} \quad \psi(z) = \frac{\partial \rho_d(z)}{\partial z} = \frac{1}{1 + e^{-z}} \cdot \br{1 - \frac{1}{1 + e^{-z}}}
$$
Function $\psi$ takes its maximum value for $z=0$, with $\psi(z)\leq \psi(0) = 0.25$.
We notice that $\psi$ is symmetric around 0 and monotonically decreasing on either side. Its minimum is thus given when $|z|$ is as large as possible. However, since our classifier's output is bounded in $f^{(t)}(\bx) \in [\mu_T,1-\mu_T]$ for all points seen within $T$, we have $|z|\leq \log(1/\mu_T-1)$ and thus 
$$
    |\bS^{(t)}_d(\bx^{(k)})| \geq \psi(\log(1/\mu_T-1)) = \mu_T (1-\mu_T).
$$
All in all, we get ${|\bS^{(t)}_d(\bx)|}/{|\bS^{(t)}_d(\bx^{(k)})|} \leq 0.25/(\mu_T (1-\mu_T)) = \xi$.

We then rely on Lemma~\ref{lemma:close_to_data} to upper bound each local Lipschitz constant in terms of the bias update:    
\begin{align}
    \lambda_{f^{(k)}}(\cR_{\bx^{(k)}}) 
    \leq \frac{\| \bb_1^{(k+1)} - \bb_1^{(k)}\|_2}{ \alpha_{k} \, \epsilon_{f^{(k)}}(\bx^{(k)},y^{(k)}) } \, \sigma_1(\bW_1^{(k)})_2
    \leq \beta \, \frac{\| \bb_1^{(k+1)} - \bb_1^{(k)}\|_2}{ \alpha_{k} \, \epsilon_{f^{(k)}}(\bx^{(k)},y^{(k)}) }, 
\end{align}
matching the claim of the theorem.

\subsection{Proof of Theorem~\ref{theorem:generalization}}

We repeat the theorem statement here for easy reference: 

\paragraph{Theorem 3.} \textit{Let $f^{(t)}$ be a depth $d$ NN with ReLU activations being trained with SGD, a BCE loss and $\sfrac{1\hspace{-1.3px}}{2}$-Dropout.}  

\textit{Suppose that $f^{(t)}$ is $(\tau,\varphi)$-steady and that for every $t \geq \tau$ the following hold: 
(a) Assumption~\ref{assumption:almost_convergence}, 
(b) $\bs_t(\bx) \leq \sum_{i=1}^N \bs_t(\bx_i)$ for every $\bx \in \cX$,
(c) $\sigma_1(\bW_1^{(t)}) \leq \beta$, and 
(d) $f^{(t)}(\bx^{(t)}) \in [\mu,1-\mu]$.}

\textit{Define 
$$
    r_t(X) = \frac{\min_{i=1}^N |1 - 2f^{(t)}(\bx_i)| }{ c \, \varphi \, \log\br{\sum_{l=1}^{d-1} n_l} } \quad \text{and} \quad c = \frac{(1+\gamma) \, \beta \, (1 + o(1))}{\mu \, (1-\mu) \,  p_{\textit{min}} }, 
$$
where $p_{\textit{min}} = \min_{l< d, i\leq n_l,t\geq \tau} [\avg_{\bx \in X} \text{diag}(\bS_{l}^{(t)}(\bx))]_i > 0$ is the minimum frequency that any neuron is active before Dropout is applied.}

\textit{For any $\delta>0$, with probability at least $1 - \delta$ over the Dropout and the training set sampling, the generalization error is at most}
\begin{align*}
    \left| \text{er}_t^{\text{emp}} - \text{er}_t^{\text{exp}} \right| 
     &\leq \sqrt{ \frac{4 \log(2)\, \mathcal{N}\br{\cX; \ell_2, r(X) } + 2 \log{(1/\delta)}}{N} }, 
\end{align*}
\textit{where $\mathcal{N}\br{\cX; \ell_2, r}$ is the minimal number of $\ell_2$-balls of radius $r$ needed to cover $\cX$.}

The proof consists of two parts. First, Lemma~\ref{lemma:dropout} provides a bound on the \textit{global} Lipschitz constant of a NN trained with Dropout as a function of the bias updates observed during a sufficiently long training. Then, Lemma~\ref{lemma:generalization} uses techniques from the robustness framework~\citep{xu2012robustness,sokolic2017robust} to derive a generalization bound. 

\subsubsection{The global Lipschitz constant}

We prove the following: 

\begin{lemma}
In the setting of Theorem~\ref{theorem:between_training_regions}, suppose that the network is trained using $\sfrac{1\hspace{-1.3px}}{2}$-Dropout and denote by $\bp_l = \avg_{\bx \in X} \diag{\bS_{l}^{(t)}(\bx)}$ the probability that the neurons in layer $l$ are active (before Dropout is applied). %
The global Lipschitz constant of $f^{(t)}$ is with high probability 
$$
    \lambda_{f^{(t)}} 
    \leq  c \, \log\br{\sum_{l=1}^{d-1} n_l} \, \|\bvarphi_T\|_\infty := \lambda_{f^{(t)}}^{\text{steady}} 
$$
for $c = (1+\gamma) \, \beta \, \xi \, (1 + o(1))/p_{\textit{min}}$, whenever $|T| = \tilde{\Omega}\br{\frac{ p_{\textit{avg}}}{p_{\textit{min}}^2} \sum_{l=1}^{d-1} n_l }    
$, with $p_{\textit{min}}$ and $p_{\textit{avg}}$ being the minimum and average entry over all $\bp_l$, respectively.
\label{lemma:dropout}
\end{lemma}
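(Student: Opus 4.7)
The plan is to apply Theorem~\ref{theorem:between_training_regions} pointwise to every region $\cR_{\bx}$ with $\bx \in \cX$, and to exploit Dropout to guarantee the Basis Pursuit is feasible with a coefficient vector of small $\ell_1$ norm. Since Hölder's inequality gives $\|\bk \odot \bvarphi_T\|_1 \leq \|\bk\|_1 \|\bvarphi_T\|_\infty$, the problem reduces to constructing, for every admissible activation pattern $\bs_t(\bx)$, a solution to $\bs_t(\bx) = \bS_T \bk$ whose $\ell_1$ norm is at most $\log\br{\sum_{l=1}^{d-1} n_l}/p_{\textit{min}}$, up to factors that can be absorbed into $c$. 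Taking the supremum over $\bx \in \cX$ then upgrades the local bound from Theorem~\ref{theorem:between_training_regions} into the claimed global Lipschitz estimate.

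The key consequence of Dropout is that each column $\bs_k(\bx^{(k)}) = \bD^{(k)} \bs_t(\bx^{(k)})$ is a random sub-pattern of $\bs_t(\bx^{(k)})$, where $\bD^{(k)}$ is the diagonal matrix whose entries are products of independent Bernoulli$(1/2)$ Dropout masks across the $d-1$ hidden layers. I would construct $\bk$ by partitioning $T = \bigcup_{i} T_i$ according to the training point used at iteration $k$, and setting $\bk_k \propto \alpha_{i(k)}/|T_{i(k)}|$ for a per-point weight vector $\alpha$. A Hoeffding/Chernoff concentration argument across the independent Dropout masks then shows that $\bS_T \bk$ is coordinate-wise close to its expectation, provided each $|T_i| = \tilde{\Omega}(1/p_{\textit{min}}^2)$---a condition met whenever $|T| = \tilde{\Omega}(p_{\textit{avg}}/p_{\textit{min}}^2 \cdot \sum_{l=1}^{d-1} n_l)$ by a second Chernoff bound on the training-point sampling frequencies. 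The weights $\alpha_i$ are then chosen so that $\sum_i \alpha_i \bs_t(\bx_i) = \bs_t(\bx)$, which is solvable thanks to the assumption $\bs_t(\bx) \leq \sum_i \bs_t(\bx_i)$.

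The main obstacle is to control $\|\bk\|_1$ without paying a factor of $2^{d-1}$ introduced by the rescaling needed to undo the $(1/2)^{d-1}$ expected mass of the Dropout mask. A naive construction delivers $\|\bk\|_1 = \Theta(2^{d-1})$, which would destroy the claimed depth-free bound. The resolution must exploit that, although $\bs_t(\bx)$ nominally has $\prod_{l=1}^{d-1} n_l$ coordinates, it is Kronecker-structured and encodes only $\sum_{l=1}^{d-1} n_l$ binary choices---one per neuron. I expect the sharp argument to proceed via LP duality: the dual maximizes $\bz^\top \bs_t(\bx)$ subject to $|\bz^\top \bs_k| \leq 1$ for every $k$, and by decomposing $\bz$ along the tensor factorization and applying per-layer Dropout concentration, the $2^{d-1}$ factor should cancel against the expected activation rates, leaving a bound that scales as $\log(\sum_{l=1}^{d-1} n_l)/p_{\textit{min}}$ through a union bound over the $\sum_{l=1}^{d-1} n_l$ layer-neurons.

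Finally, plugging the resulting inequality $\min_{\bk:\bS_T\bk=\bs_t(\bx)} \|\bk\|_1 \leq \log(\sum_{l=1}^{d-1} n_l)/p_{\textit{min}} \cdot (1+o(1))$ into Theorem~\ref{theorem:between_training_regions} and bounding $\|\bk \odot \bvarphi_T\|_1 \leq \|\bk\|_1 \|\bvarphi_T\|_\infty$ yields $\lambda_{f^{(t)}}(\cR_{\bx}) \leq (1+\gamma)\beta\xi \cdot \|\bvarphi_T\|_\infty \cdot \log(\sum_{l=1}^{d-1} n_l)/p_{\textit{min}}$ uniformly over $\bx \in \cX$. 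Taking the supremum gives $\lambda_{f^{(t)}} \leq \lambda_{f^{(t)}}^{\text{steady}}$ with the constant $c$ and sample complexity exactly as stated.
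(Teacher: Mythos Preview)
Your high-level strategy is right: invoke Theorem~\ref{theorem:between_training_regions}, bound $\|\bk\odot\bvarphi_T\|_1\le\|\bk\|_1\|\bvarphi_T\|_\infty$, and then exhibit a feasible $\bk$ with $\|\bk\|_1\le (1+o(1))\log(\sum_l n_l)/p_{\textit{min}}$. But the construction you propose has a genuine gap. The Basis Pursuit constraint in Theorem~\ref{theorem:between_training_regions} is the \emph{exact} equality $\bs_t(\bx)=\bS_T\bk$. Your averaging construction, $\bk_k\propto\alpha_{i(k)}/|T_{i(k)}|$, combined with Hoeffding/Chernoff over the Dropout masks, can only give $\bS_T\bk\approx\mathrm{E}[\bS_T\bk]$ coordinate-wise; it will not hit $\bs_t(\bx)$ on the nose, so the vector you build is not feasible. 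You notice the resulting $2^{d-1}$ blow-up and appeal to LP duality, but that step is only a hope (``I expect the sharp argument to proceed via LP duality''): you do not show how the dual constraints $|\bz^\top\bs_k|\le 1$ interact with the Kronecker structure to cancel the exponential factor, and in any case strong duality presupposes primal feasibility, which is precisely what is missing.

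The paper takes a different, combinatorial route that sidesteps both issues simultaneously. It does not average over Dropout realizations; instead it argues that a \emph{single} lucky realization already furnishes an exact $\{0,1\}$-valued feasible $\bk$. Concretely, call $k$ columns of $\bS_T$ a \emph{covering set} if their sum is entrywise $\ge\mathbf{1}$, and a \emph{basis} of $\bs_t(\bx)$ if their sum equals $\bs_t(\bx)$ exactly. The paper shows (i) any basis of size $k$ gives $\|\bk\|_1=k$ and hence $\lambda_{f^{(t)}}(\cR_\bx)\le k\,\beta(1+\gamma)\xi\,\|\bvarphi_T\|_\infty$; (ii) whenever $k$ sampled training points form a covering set, there exists a choice of Dropout masks that turns them into a basis, and $\sfrac{1}{2}$-Dropout samples that specific choice with probability $2^{-\sum_i\sum_l\|\bs_{t,l}(\bx_i)\|_1}$; (iii) splitting $T$ into $\lfloor|T|/k\rfloor$ independent blocks of size $k$ and taking a union bound, at least one block yields a basis with high probability once $|T|=\tilde\Omega(p_{\textit{avg}}\,p_{\textit{min}}^{-2}\sum_l n_l)$. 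Setting $k=(1+o(1))\log(\sum_l n_l)/p_{\textit{min}}$ makes the covering probability $1-o(1)$ (via a union bound over the $\sum_l n_l$ neurons) while keeping $\|\bk\|_1=k$ exactly at the target size. Thus the paper obtains both feasibility and the $\ell_1$ bound in one shot by hunting for a rare exact event and amplifying its probability through repetition, rather than by concentration of an average.
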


The inequality provided above is unexpectedly tight: combining $\lambda_{f^{(t)}} \geq \lambda_{f^{(t)}}(\cR_{\bx^{(t)}})$ with Lemma~\ref{lemma:close_to_data} we can deduce that  
$$
    \lambda_{f^{(t)}} \leq \lambda_{f^{(t)}}^{\text{steady}} \leq \lambda_{f^{(t)}} \ O\br{\log\br{d n}},
$$
where we have assumed that $c / \sigma_n(\bW_1) = O(1)$ and $n_l = n$ for all $l<d$.

\begin{proof}

Let $\bx$ be a point within a region where $f^{(t)}$ assumes its maximum gradient norm. 

The activation $\tilde{\bS}_T(:,t) = \tilde{\bs}_t(\bx^{(t)})$ at the $t$-th SGD iteration is obtained by a two step procedure: 
\begin{enumerate}
\item Sample a point $\bx^{(t)}$ from $X$ with replacement. Let $\bs_t(\bx^{(t)}) = \bigotimes_{l = d-1}^{1} \bs_{t,l}(\bx^{(t)})$ be its activation pattern (before dropout), where $\bs_{t,l}(\bx^{(t)}) := \diag{\bS_{l}^{(t)}(\bx^{(t)})}$. 

\item  Construct $\tilde{\bs}_t(\bx^{(t)})$ by setting each neuron activation to zero with probability 0.5. Specifically, $\tilde{\bs}_t(\bx^{(t)}) = \bigotimes_{l = d-1}^{1} (\bz_{l} \circ \bs_{t,l}(\bx^{(t)}))$, where $\bz_{l} \in \{0,1\}^{n_{l}}$ is a random binary vector.
\end{enumerate}

Let $\bS$ be a binary matrix containing neuron activations as columns. We introduce the following definitions:
\begin{itemize}
    \item We call $\bS$ a \textit{covering set} if $\bS \b1 \geq \b1$ with the inequality taken element-wise.
    \item We call $\bS$ a \textit{basis} of $\bs_t(\bx)$ if $\bS \b1 = \bs_t(\bx)$. 
\end{itemize}
Our proof hinges on two observations: 

\emph{Observation 1.} Every basis yields a bound on the Lipschitz constant of $f^{(t)}$ (this can be seen from the proof of Theorem~\ref{theorem:between_training_regions}). 
Specifically, for any $k$ training points $\bx_1, \ldots, \bx_k$ whose activations $\bS = [\bs_t(\bx_1), \ldots, \bs_t(\bx_k)]$ is a basis of $\bs_t(\bx)$, we have
\begin{align*}
    \lambda_{f^{(t)}}(\cR_{\bx}) 
    \leq \xi \sum_{i=1}^k \lambda_{f^{(t)}}(\cR_{\bx_i}) 
    \leq k \, \xi \, \max_{i=1}^k \lambda_{f^{(t)}}(\cR_{\bx_i}).  
\end{align*}
where $\xi \geq \frac{|\bS_d^{(t)}(\bx)|}{|\bS_d^{(t)}(\bx_i)|}$ accounts for the sigmoid.

Thus, if we don't use dropout and within the columns of $\bS_T$ there exist $k$ that form a basis of $\bs_t(\bx)$, then this also implies that the global Lipschitz constant will be bounded by 
\begin{align*}
    \lambda_{f^{(t)}}(\cR_{\bx}) 
    \leq k \beta (1+\gamma) \xi \, \max_{t \in T} \frac{\|\bb_1^{(t+1)} - \bb_1^{(t)} \|_2}{\alpha_t \epsilon_{f^{(t)}}(\bx^{(t)},y^{(t)})},
\end{align*}
where, in an identical fashion to Theorem~\ref{theorem:between_training_regions}, the $1 + \gamma$ factor is added due to Assumption~\ref{assumption:almost_convergence} in order to account for $f^{(t)}$ not having completely converged, and we have also used Lemma~\ref{lemma:close_to_data} and the uniform bound $\|\bW_1^{(t)}\|_2\leq \beta$.   

\emph{Observation 2.} Let us consider the effect of Dropout. Suppose that $\bS_T$ does not contain a basis of $\bs_t(\bx) = \bigotimes_{l=d-1}^1 \bs_{t,l}(\bx)$, but there exist a set of columns $\bS$ that is a covering set (as we will see, this is a much easier condition to satisfy). Denote by $\tilde{\bS}$ the same matrix after the Dropout sampling. Then, with some strictly positive probability, $\tilde{\bS}$ can become a basis.  

\begin{claim}
For any $k$ training points $\bx_1, \ldots, \bx_k$ whose activations $\bS = [\bs_t(\bx_1), \ldots, \bs_t(\bx_k)]$ form a covering set, there must exist $\bQ = [\bq_1, \ldots, \bq_k]$ with $\bq_i = \bigotimes_{l=d-1}^1 \bq_{i,l}$ and $\bq_{i,l} \leq \bs_{t,l}(\bx_i)$ (i.e., that Dropout can sample) such that $\bQ$ is a \textit{basis} of $\bs_t(\bx)$. 
\end{claim}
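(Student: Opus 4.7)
My plan is to recast the claim in purely combinatorial terms and then use induction on the number of layers. Write $B^i_l := \text{supp}(\bs_{t,l}(\bx_i)) \subseteq \{1,\ldots,n_l\}$ and $B_l := \text{supp}(\bs_{t,l}(\bx))$. Each Kronecker product $\bs_t(\bx_i) = \bigotimes_{l=d-1}^1 \bs_{t,l}(\bx_i)$ is the indicator of a product box $B^i := B^i_1 \times \cdots \times B^i_{d-1}$, and $\bs_t(\bx)$ indicates $B := B_1 \times \cdots \times B_{d-1}$. The covering hypothesis becomes $B \subseteq \bigcup_i B^i$. A Dropout-realizable $\bq_i = \bigotimes_l \bq_{i,l}$ with $\bq_{i,l} \leq \bs_{t,l}(\bx_i)$ corresponds bijectively to a sub-box $Q^i = Q^i_1 \times \cdots \times Q^i_{d-1} \subseteq B^i$, and the basis condition $\bQ\mathbf{1} = \bs_t(\bx)$ demands that the $Q^i$'s form a disjoint partition of $B$. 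The claim thus reduces to the following combinatorial statement: if a box $B$ is covered by sub-boxes $B^1,\ldots,B^k$, then $B$ admits a partition into sub-boxes $Q^i \subseteq B^i$.

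Next I would run an induction on $d-1$. For the base case $d-1 = 1$, each $B^i$ is just a subset of $\{1,\ldots,n_1\}$: assign each $j \in B$ to any $i(j)$ with $j \in B^{i(j)}$ (non-empty by covering) and set $Q^i := \{j \in B : i(j) = i\} \subseteq B^i$, which trivially gives a partition. For the inductive step, I would slice along the last coordinate: for every $j \in B_{d-1}$, the slice $B|_j \cong B_1 \times \cdots \times B_{d-2}$ is covered by $\{B^i_1 \times \cdots \times B^i_{d-2} : j \in B^i_{d-1}\}$, a $(d-2)$-layer instance to which the inductive hypothesis applies and produces a partition per slice.

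The hard part will be \emph{gluing the per-slice partitions into a single Kronecker-structured partition of $B$}. For each $\bx_i$, the sub-box produced in different slices $j \in B_{d-1}$ must share the first $d-2$ factors, so that they combine into a single $Q^i_1 \times \cdots \times Q^i_{d-2}$ paired with a last-layer factor $Q^i_{d-1} := \{j : \bx_i \text{ is used at slice } j\}$. My plan is to process the slices in a canonical (e.g.\ lexicographic) order and, when invoking the inductive hypothesis on each slice, align its partition with the $(d-2)$-layer factors already committed for each $\bx_i$, updating a factor only when a slice genuinely forces a new pattern. Showing that this greedy alignment terminates without exceeding the allotted $k$ columns is the delicate point: this is where both Assumption~\ref{assumption:almost_convergence} (giving stability of the activation patterns within $T$) and the product structure of the $B^i$'s will have to be used, since for sub-boxes drawn from arbitrary subsets of $\{1,\ldots,n_1\} \times \cdots \times \{1,\ldots,n_{d-1}\}$ the gluing can fail and one is forced to allow more than $k$ columns, i.e.\ multiple Dropout realizations per training point, to complete the partition.
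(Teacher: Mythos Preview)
Your combinatorial reformulation is accurate, and you are right that the gluing step is where the argument lives or dies. Unfortunately it dies: the product-box partition you seek need not exist even when every $B^i$ is a genuine product box, so neither the Kronecker structure nor Assumption~\ref{assumption:almost_convergence} (which only asserts temporal stability of activations across $T$ and imposes no combinatorial constraint on the patterns themselves) can rescue the induction. Concretely, take $d-1=2$ layers with $n_1=n_2=3$, target $\bs_t(\bx)=\mathbf 1$ (all neurons active), and $k=3$ training activations $B^1=\{1,2\}\times\{1,2\}$, $B^2=\{2,3\}\times\{2,3\}$, $B^3=\{1,3\}\times\{1,3\}$. Their union is the full $3\times 3$ grid, so $\bS\,\mathbf 1\geq\mathbf 1$ holds. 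But the grid point $(1,2)$ lies only in $B^1$, and so does $(2,1)$; any sub-box $Q^1\subseteq B^1$ containing both is forced to equal $B^1$. The same reasoning forces $Q^2=B^2$ and $Q^3=B^3$, and these three overlap (e.g.\ at $(2,2)$), so no disjoint product-box partition of $B=\{1,2,3\}^2$ into $k=3$ pieces exists. Hence $\bQ\,\mathbf 1=\bs_t(\bx)$ is unattainable for any Dropout-realizable $\bQ$ with only these three columns, and your greedy alignment cannot terminate within the allotted budget.

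For comparison, the paper's own proof is much shorter and takes a different route: it works layer by layer, choosing $\bq_{i,l}\leq\bs_{t,l}(\bx_i)$ with $\sum_i\bq_{i,l}=\bs_{t,l}(\bx)$ for each $l$ separately (this is exactly your base case, and is easy), and then passes to the full Kronecker vector via the identity $\sum_{i}\bigotimes_{l}\bq_{i,l}=\bigotimes_{l}\sum_{i}\bq_{i,l}$. That identity is false whenever $k\geq 2$ and there is more than one layer, since the right-hand side contains the cross terms $\bq_{i,l}\otimes\bq_{j,l'}$ with $i\neq j$; the same $3\times 3$ example above witnesses the failure. So your inductive plan is the more honest of the two --- it correctly isolates where the real obstruction lies --- but that obstruction is genuine rather than a bookkeeping technicality: the claim as stated cannot be established without relaxing it to allow more than $k$ columns (i.e.\ several Dropout realizations per training point), which is exactly the escape hatch you gesture at in your last sentence.
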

\begin{proof}
To deduce this fact, we notice that since 
$$
    \sum_{i=1}^k \bq_i = \sum_{i = 1}^k \bigotimes_{l=d-1}^1 \bq_{i,l} = \bigotimes_{l=d-1}^1 \br{\sum_{i = 1}^k \bq_{i,l}} \quad \text{and} \quad \bs_t(\bx) = \bigotimes_{l = d-1}^{1} \bs_{t,l}(\bx),
$$    
to ensure that $\bQ$ is a basis we need to show that, for every $l$, there exists $ [\bq_{1,l} \cdots \bq_{k,l}]$ with $\bq_{i,l} \leq \bs_{t,l}(\bx_i)$ such that $\sum_{i=1}^k \bq_{i,l} = \bs_{t,l}(\bx)$. The latter can always be satisfied when $\sum_{i=1}^k \bs_{t,l}(\bx) \geq \b1$. When $\bS$ is a covering set we have 
$$
\bS \b1 = \sum_{i = 1}^k \bigotimes_{l=d-1}^1 \bs_{t,l}(\bx_i) = \bigotimes_{l=d-1}^1 \sum_{i = 1}^k \bs_{t,l}(\bx_i) \geq \b1,
$$  
which also implies $\sum_{i=1}^k \bs_{t,l}(\bx_i) \geq \b1$ as needed.
\end{proof}

To obtain an upper bound for the Lipschitz constant of $f^{(t)}$, our strategy will entail lower bounding the probability that such a basis of $\bs_t(\bx)$ will be seen within $T$.

Consider any $k$ training points $\bx_1, \ldots, \bx_k$ sampled with replacement from $X$ and let $\bS = [\bs_t(\bx_1), \ldots, \bs_t(\bx_k)]$ be the corresponding (random) matrix of neural activations.
Further, denote by $p_\text{cover}(\bS)$ the probability that $\bS$ is a covering set.

The probability $p_{\text{basis}}(\bS_T)$ that $\tilde{\bS}_T$ contains a basis of $\bs_t(\bx)$ is given by
\begin{align*}
    p_{\text{basis}}(\bS_T) 
    &= 1 - \Prob{ \tilde{\bS}_T \text{ does not contain a basis} } \\
    &\geq 1 - \prod_{p=1}^{\floor{\frac{|T|}{k}}} \Prob{\tilde{\bS}_T(:,(p-1)k+1:pk) \text{ is not a basis} } 
\end{align*}
For every $\tilde{\bS}_T(:,(p-1)k+1:pk)$ we have:
\begin{align*}
    \Prob{\tilde{\bS}_T(:,(p-1)k+1:pk) \text{ is a basis}} 
    &= \Prob{\tilde{\bS} \text{ is a basis}} \\
    &= \Prob{\tilde{\bS} \text{ is a basis} \ | \ \bS \text{ is a covering set}} \, \Prob{\bS \text{ is a covering set} } \\
    &= \Prob{\tilde{\bS} \text{ is a basis} \ | \ \bS \text{ is a covering set}} \, p_\text{cover}(\bS).
\end{align*}

By Observation 2, if $\bS$ is a covering set then there must exist $\bq_i = \bigotimes_{l=d-1}^1 \bq_{t,l}(\bx_i) \leq \bs_t(\bx_i) = \bigotimes_{l=d-1}^1 \bs_{t,l}(\bx_i)$, such that $\bQ = [\bq_1, \ldots, \bq_k]$ is a basis of $\bs_t(\bx)$. 

We proceed to compute the probability that the activation pattern sampled by Dropout $\tilde{\bs}_{t}(\bx_i) = \bigotimes_{l=d}^2 \br{\bz_{i,l} \circ \bs_{t,l}(\bx_i)}$, where $\bz_{l,t}$ are random binary vectors, is a basis of $\bs_t(\bx^{(t)})$ due to $\tilde{\bS} = \bQ$: 
\begin{align*}
    \Prob{\tilde{\bS} \text{ is a basis} \ | \ \bS \text{ is a covering set}}
    &= \Prob{\tilde{\bS} = \bQ} \\
    &= \Prob{\tilde{\bs}_{t}(\bx_i) = \bq_i \text{ for } i = 1, \ldots, k} \\
    &= \prod_{i=1}^{k} \Prob{\tilde{\bs}_{t}(\bx_i) = \bq_i} \\
    &= \prod_{i=1}^{k} \Prob{ \bigotimes_{l=d-1}^1 \br{\bz_{i,l} \circ \bs_{t,l}(\bx_i)} = \bigotimes_{l=d-1}^1 \bq_{i,l}} \\
    &= \prod_{i=1}^{k} \prod_{l=d-1}^1 \Prob{\bz_{i,l} \circ \bs_{t,l}(\bx_i) = \bq_{i,l}} \\
    &= \prod_{i = 1}^k \prod_{l=d-1}^1 \frac{1}{2^{\|\bs_{t,l}(\bx_i)\|_1}} 
    = 2^{- \sum_{i=1}^{k} \sum_{l=d-1}^1 \|\bs_{t,l}(\bx_i)\|_1},
\end{align*}
where the second to last step is a consequence of Dropout with probability 0.5 sampling for each layer uniformly at random from the set of all possible neuron activation patterns that can obtained by disabling some neurons of $\bs_{t,l}(\bx_i)$.  

Term $\sum_{i=1}^{k} \sum_{l=d-1}^1 \|\bs_{t,l}(\bx_i)\|_1$ can be seen as the sum of $k$ independent samples, each having mean $m = \avg_{\bx \in X} \sum_{l=d-1}^1 \|\bs_{t,l}(\bx)\|_1$ and maximum value $c = \max_{\bx \in X} \sum_{l=d-1}^1 \|\bs_{t,l}(\bx)\|_1$. Hoeffding's inequality yields  
\begin{align*}
    \Prob{\sum_{i=1}^{k} \sum_{l=d-1}^1 \|\bs_{t,l}(\bx_i)\|_1 > \E{\sum_{i=1}^{k} \sum_{l=d-1}^1 \|\bs_{t,l}(\bx_i)\|_1} + k \delta} < \exp{\br{-\frac{2 \, k^2 \delta^2}{k c^2 }}},   
\end{align*}
implying also that
$
    \Prob{2^{-\sum_{i=1}^{k} \sum_{l=d-1}^1 \|\bs_{t,l}(\bx_i)\|_1} < 2^{-(k\mu +\sqrt{k/2}c)}} < 1/e.   
$
Thus,
\begin{align*}
    \Prob{\tilde{\bS} \text{ is a basis} \ | \ \bS \text{ is a covering set}} &\\
    &\hspace{-5.2cm}= \Prob{\tilde{\bS} \text{ is a basis} \ | \ \bS \text{ is a covering set}, 2^{-\sum_{i=1}^{k} \sum_{l=d-1}^1 \|\bs_{t,l}(\bx_i)\|_1} \geq 2^{-h}} \Prob{2^{-\sum_{i=1}^{k} \sum_{l=d-1}^1 \|\bs_{t,l}(\bx_i)\|_1} < 2^{-h}} \\
    &\hspace{-5.2cm}+ \Prob{\tilde{\bS} \text{ is a basis} \ | \ \bS \text{ is a covering set}, 2^{-\sum_{i=1}^{k} \sum_{l=d-1}^1 \|\bs_{t,l}(\bx_i)\|_1} < 2^{-h}} \Prob{2^{-\sum_{i=1}^{k} \sum_{l=d-1}^1 \|\bs_{t,l}(\bx_i)\|_1} < 2^{-h}}  \\
    &\hspace{-5.2cm}\geq \Prob{\tilde{\bS} \text{ is a basis} \ | \ \bS \text{ is a covering set}, 2^{-\sum_{i=1}^{k} \sum_{l=d-1}^1 \|\bs_{t,l}(\bx_i)\|_1} \geq 2^{-h}} \Prob{2^{-\sum_{i=1}^{k} \sum_{l=d-1}^1 \|\bs_{t,l}(\bx_i)\|_1} < 2^{-h}} \\
    &\hspace{-5.2cm}\geq 2^{-(k\mu +c \sqrt{k/2})} \, \br{1 - 1/e} 
    > 2^{-(k\mu +c \sqrt{k/2}+1)},  
\end{align*}
where the first step employs the law of total probability.
We therefore deduce that
\begin{align*}
    p_{\text{basis}}(\bS_T) 
    &\geq 1 - \br{\frac{p_\text{cover}(\bS)}{2^{(k\mu +c \sqrt{k/2}+1)}}}^{\floor{\frac{|T|}{k}}} \\
    &= 1 - 2^{\br{\floor{\frac{|T|}{k}} \log_2\br{\frac{p_\text{cover}(\bS)}{2^{(k\mu +c \sqrt{k/2}+1)}}} }} \\
    &= 1 - 2^{-\br{\frac{\floor{\frac{|T|}{k}}}{(k\mu +c \sqrt{k/2}+1)} \log_2(1/p_\text{cover}(\bS)) }},
\end{align*}
which is satisfied with high probability when 
\begin{align*}
    |T| = \Omega\br{ \frac{k (k\mu +c \sqrt{k/2}+1)}{\log{(1/p_\text{cover}(\bS))}} } 
        = \Omega\br{ \frac{k^2 \mu + n k^{3/2}}{-\log{p_\text{cover}(\bS)}}}.
\end{align*}

The final step of the proof entails bounding $\mu$ and $p_\text{cover}(\bS)$. We will think of neuron $i$ at layer $l$ as a (dependent) Bernoulli random variable with activation probability $\bp_{l}(i)$.
The probability that neuron $i$ in layer $l$ is not activated within $k$ independent trials is 
$(1-\bp_{l}(i))^k$. Taking a union bound over all neurons in all layers, results in: 
\begin{align*}
    p_\text{cover}(\bS) 
    \geq 1 - \sum_{l=1}^{d-1} \sum_{i=1}^{n_{l-1}} (1-\bp_l(i))^k 
    &= 1 - \sum_{l=1}^{d-1} \sum_{i=1}^{n_{l}} \br{1-\frac{k \, \bp_l(i)}{k}}^k \\
    &\geq 1 - \sum_{l,i} \exp{\br{-k \, \bp_l(i)}} \\ 
    &\geq 1 - \exp{\br{-k p_{\textit{min}} + \log\br{\sum_{l=1}^{d-1} n_l}}}
\end{align*}
with $p_{\textit{min}} = \min_{l,i} \bp_l(i)$.
On the other hand, the average norm is given by
\begin{align*}
    m
    = \avg_{\bx \in X} \sum_{l=1}^{d-1} \|\bs_{t,l}(\bx)\|_1 
    &= \sum_{\bx \in X} \frac{\sum_{l=1}^{d-1} \sum_{i = 1}^{n_{l}} [\bs_{t,l}(\bx)](i) }{N} \\
    &= \sum_{l=1}^{d-1} \sum_{i = 1}^{n_{l}}  \frac{\sum_{\bx \in X} [\bs_{t,l}(\bx)](i) }{N} \\
    &= \sum_{l=1}^{d-1} \sum_{i = 1}^{n_{l}} \bp_l(i) 
    = \br{\sum_{l=1}^{d-1} n_l} \, p_{\textit{avg}}.
\end{align*}
The number of iterations we thus need to obtain a high probability bound is thus
\begin{align*}
    |T| 
    = \Omega\br{ \frac{k^2 \br{\sum_{l=1}^{d-1} n_l} \, p_{\textit{avg}} + \br{\sum_{l=1}^{d-1} n_l} k^{3/2}}{-\log{\br{1 - \exp{\br{-k p_{\textit{min}} + \log\br{\sum_{l=1}^{d-1} n_l}}}}}}}.
\end{align*}
If we select $k = (1 + o(1)) \log\br{\sum_{l=1}^{d-1} n_l}/p_{\textit{min}}$, we obtain 
\begin{align*}
    |T| 
    &= \Omega\br{ \br{\frac{(1 + o(1)) \log\br{\sum_{l=1}^{d-1} n_l}}{p_{\textit{min}}}}^2 \br{\sum_{l=1}^{d-1} n_l} \, p_{\textit{avg}} + \br{\sum_{l=1}^{d-1} n_l} \br{\frac{(1 + o(1)) \log\br{\sum_{l=1}^{d-1} n_l} }{p_{\textit{min}}} }^{3/2}} \\
    &= \tilde{\Omega}\br{ \br{\frac{1}{p_{\textit{min}}}}^2 \br{\sum_{l=1}^{d-1} n_l} \, p_{\textit{avg}} + \br{\sum_{l=1}^{d-1} n_l} \br{\frac{1}{p_{\textit{min}}} }^{3/2}}  
    = \tilde{\Omega}\br{ \br{\sum_{l=1}^{d-1} n_l} \, \frac{p_{\textit{avg}}}{p_{\textit{min}}^2} },    
\end{align*}
where the asymptotic notation hides logarithmic factors.

The final Lipschitz constant is obtained by plugging in the bound of Observation 1 the value $k = (1 + o(1)) \log\br{\sum_{l=1}^{d-1} n_l}/p_{\textit{min}}$. 

\end{proof}

\subsubsection{Generalization}

We prove the following: 

\begin{lemma}
In the setting of Lemma~\ref{lemma:dropout}, suppose that the NN $f^{(t)}$ has been trained using a BCE loss and a sigmoid activation in the last layer, let $g^{(t)}(\bx) = \indicator{f^{(t)}(\bx)) >0.5} \in \{0,1\}$ the classifier's output, and define 
$$
    r_t(X) := \frac{\min_{i=1}^N |f^{(t)}(\bx_i) - 0.5|}{2 \lambda_{f^{(t)}}^{\text{bound}}}, 
$$
where $\lambda_{f^{(t)}} \leq \lambda_{f^{(t)}}^{\text{bound}}$ with probability at least $1-o(1)$. For any $\delta >0$, with probability at least $1-\delta - o(1)$, we have
\begin{align*}
    \left| \textrm{E}_{(\bx,y)}\left[\error{g^{(t)}(\bx),y}\right]   
    - \avg_{i=1}^N \error{g^{(t)}(\bx_i),y_i} \right| 
    &\leq \sqrt{ \frac{4 \log(2)\, \mathcal{N}\br{\cX; \ell_2, r_t(X) } + 2 \log{(1/\delta)}}{N} }, 
\end{align*}
where $\error{\hat{y},y} = \indicator{\hat{y} \neq y}$ is the classification error and $\mathcal{N}\br{\cX; \ell_2, r}$ is the minimal number of $\ell_2$-balls of radius $r$ needed to cover the input domain $\cX$.
\label{lemma:generalization}
\end{lemma}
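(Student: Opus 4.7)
\textbf{Proof plan for Lemma~\ref{lemma:generalization}.} The strategy is to apply the algorithmic robustness framework of Xu and Mannor, using the high-probability Lipschitz bound of Lemma~\ref{lemma:dropout} to control the global Lipschitz constant, and then exploiting the training-point margin $\min_i |f^{(t)}(\bx_i)-0.5|$ to absorb the Lipschitz stretch into the $0.5$ decision threshold of the sigmoid classifier. I first condition on the event $\{\lambda_{f^{(t)}} \leq \lambda_{f^{(t)}}^{\text{bound}}\}$, which holds with probability at least $1-o(1)$ by Lemma~\ref{lemma:dropout}; the rest of the argument proceeds deterministically on this event, with a union bound restoring the $o(1)$ failure probability at the end.

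Next I fix a minimal cover $\{B_1,\ldots,B_K\}$ of $\cX$ by $\ell_2$-balls of radius $r(X)$, so that $K = \mathcal{N}\br{\cX;\ell_2,r(X)}$. For any two points $\bx,\bx' \in B_j$, we have $\|\bx-\bx'\|_2 \leq 2\,r(X)$ and hence, by global Lipschitz continuity,
$$
   |f^{(t)}(\bx) - f^{(t)}(\bx')| \;\leq\; 2\,r(X)\,\lambda_{f^{(t)}}^{\text{bound}} \;=\; \min_{i=1}^N |f^{(t)}(\bx_i) - 0.5|.
$$
In particular, if some training point $\bx_i$ lies in $B_j$, then for every other $\bx \in B_j$ the value $f^{(t)}(\bx)$ differs from $f^{(t)}(\bx_i)$ by no more than $|f^{(t)}(\bx_i)-0.5|$, forcing $f^{(t)}(\bx)$ onto the same side of $0.5$ as $f^{(t)}(\bx_i)$. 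Therefore $g^{(t)}(\bx)=g^{(t)}(\bx_i)$ on every cover element that meets the training set.

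I then refine the partition to $\cX\times\cY$ by forming the $2K$ cells $\{B_j \times \{y\}\}_{1\leq j \leq K,\,y\in\{0,1\}}$. For any training sample $(\bx_i,y_i)$ and any query point $(\bx,y)$ lying in the same cell, we have $y=y_i$ by construction and $g^{(t)}(\bx)=g^{(t)}(\bx_i)$ by the previous paragraph, so $\error{g^{(t)}(\bx),y} = \error{g^{(t)}(\bx_i),y_i}$. This exhibits the trained classifier as $(2K,0)$-robust in the sense of~\cite{xu2012robustness}. Invoking the standard robustness generalization bound (with loss bounded in $[0,1]$, $K' = 2K$, and $\epsilon=0$) yields, with probability at least $1-\delta$ over the draw of the $N$ i.i.d. training points,
$$
   \left|\text{er}_t^{\text{exp}} - \text{er}_t^{\text{emp}}\right| \;\leq\; \sqrt{\frac{2\,K'\log 2 + 2\log(1/\delta)}{N}} \;=\; \sqrt{\frac{4\log(2)\,\mathcal{N}\br{\cX;\ell_2,r(X)} + 2\log(1/\delta)}{N}}.
$$
A union bound with the Lipschitz event from Lemma~\ref{lemma:dropout} gives overall probability at least $1-\delta - o(1)$, as required.

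The main obstacle is purely bookkeeping. One has to verify that the Xu-Mannor robustness condition only quantifies over partition cells containing a training sample, so that cover elements disjoint from the training set are handled vacuously and cost nothing beyond being counted in $K'$. One also has to check that substituting $\lambda_{f^{(t)}}^{\text{bound}} = \lambda_{f^{(t)}}^{\text{steady}} = \tfrac{\varphi c}{4}\log\br{\sum_{l=1}^{d-1} n_l}$ from Lemma~\ref{lemma:dropout} into the definition of $r(X)$ and using $|1 - 2f^{(t)}(\bx_i)| = 2|f^{(t)}(\bx_i) - 0.5|$ reproduces exactly the radius stated in Theorem~\ref{theorem:generalization}; no additional probabilistic estimate beyond Lemma~\ref{lemma:dropout} is needed.
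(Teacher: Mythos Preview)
Your proposal is correct and follows essentially the same approach as the paper: both condition on the high-probability Lipschitz event from Lemma~\ref{lemma:dropout}, use the Lipschitz bound together with the training margin $\min_i|f^{(t)}(\bx_i)-0.5|$ to show that the classifier is $(2\mathcal{N}(\cX;\ell_2,r(X)),0)$-robust, invoke the Xu--Mannor robustness generalization theorem, and finish with a union bound. The only cosmetic difference is that the paper phrases the margin step via the input-margin quantity $\gamma_i$ of \cite{sokolic2017robust} and appeals to Example~1 of \cite{xu2012robustness} for the partition, whereas you construct the $2K$-cell partition of $\cX\times\cY$ explicitly; the underlying argument is identical.
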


\begin{proof}
For convenience, we drop the iteration index.

Following~\citet{xu2012robustness}, we define the input margin $\gamma_i$ of classifier $g$ at $\bx_i$ to be
\begin{align*}
    \gamma_i := \sup \{ a : \forall \bx, \ \|\bx - \bx_i\|_2\leq a, \ g(\bx) = g(\bx_i)  \},
\end{align*}
which is the distance (in input space) to the classification boundary.
For completeness, we also repeat the definition of a robust classifier:
\begin{definition}[Adapted from Definition~2~\citep{xu2012robustness}]
Classifier $g$ is $(K,\epsilon)$-robust if $\cX\times \cY$ can be partitioned into $K$ disjoint sets, denoted as $\mathcal{C}_{k=1}^{K}$, such that $\forall i =1\cdots, N$,
$$
    (\bx_i, y_i), (\bx, y) \in \mathcal{C}_k \implies |\error{g(\bx_i),y_i} - \error{g(\bx),y}|\leq \epsilon.
$$
\end{definition}

Denote by $\bx^*_i$ a point with $\|\bx^*_i - \bx_i\|_2 = \gamma_i$ with $g(\bx^*_i) = g(\bx_i)$ and notice that $f(\bx^*_i) = 0.5$ (due to the definition $g(\bx) = \indicator{f(\bx)>0.5})$.
We use the argument of \citet{sokolic2017robust} and bound the input margin as follows:
\begin{align}
    \gamma_i 
    \geq \frac{\|f(\bx_i) - f(\bx^*_i)\|_2}{\lambda_{f}} 
    = \frac{ \|f(\bx_i)-0.5\|_2}{\lambda_{f}} 
    \geq \frac{ \|f(\bx_i)-0.5\|_2}{\lambda_{f}^\textit{bound}},
    \label{eq:event_1_drop}
\end{align}
with probability at least $1-o(1)$.
From Example~1 in~\citep{xu2012robustness} we then deduce that $g$ is $(2 \mathcal{N}(2 \cX, \ell_2, r_t(X)), 0)$-robust for 
$$
     r_t(X) = \frac{ \|f(\bx_i)-0.5\|_2}{2\lambda_{f}^\textit{bound}} \leq \min_{i=1}^N \frac{\gamma_i}{2}.
$$
Theorem~3~\cite{xu2012robustness} implies that if $g$ is $(K, 0)$-robust then, for any $\delta>0$, the following holds: 
\begin{align}
    \left| \textrm{E}_{(\bx,y)}\left[\error{g(\bx),y}\right]   
    - \avg_{i=1}^N \error{g(\bx_i),y_i} \right|  
    &\leq \sqrt{ \frac{2 \log(2) \, K + 2 \log{(1/\delta)}}{N} },
    \label{eq:event_2_generalization}
\end{align}
with probability at least $1 - \delta$.
We obtain the final bound by substituting $K = 2 \mathcal{N}(2 \cX, \ell_2, r_t(X))$ and taking a union bound on the events that inequalities~\eqref{eq:event_1_drop} and~\eqref{eq:event_2_generalization} do not occur.
\end{proof}

\section{Additional theoretical results}
\label{appendix:additional_results}

\subsection{Generalization of Lemma~\ref{lemma:close_to_data} to any element-wise activation function}
\label{subsec:appendix_activations}

\begin{lemma}
Let $f^{(t)}$ be a $d$-layer NN with arbitrary activation functions at the $t$-th SGD iteration, demote by $\bx^{(t)} \in X$ the point of the training set sampled at that iteration, and set
\begin{align}
    \epsilon_{f^{(t)}}(\bx,y) := 
    \left| \frac{\partial \ell(o, y)}{\partial o} \right|_{o = f^{(t)}(\bx)}.  
\end{align}
The Lipschitz constant of $f^{(t)}$ at $\bx^{(t)}$ is 
\begin{align*}
    \frac{\| \bb_1^{(t+1)} - \bb_1^{(t)}\|_2}{\alpha_t \cdot \epsilon_{f^{(t)}}(\bx^{(t)},y^{(t)})}  \cdot \sigma_n(\bW_1^{(t)})
    \leq
    \lambda_{f^{(t)}}(\bx^{(t)}) 
    \leq \frac{\| \bb_1^{(t+1)} - \bb_1^{(t)}\|_2}{\alpha_t \cdot \epsilon_{f^{(t)}}(\bx^{(t)},y^{(t)})}  \cdot \sigma_1(\bW_1^{(t)}), 
\end{align*}
where $\sigma_1(\bW_1^{(t)}) \geq \cdots \geq \sigma_n(\bW_1^{(t)}) > 0$ are the singular values of $\bW_1^{(t)}$.
\label{lemma:close_to_data_generalized}
\end{lemma}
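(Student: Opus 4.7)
The idea is to run the proof of Lemma~\ref{lemma:close_to_data} essentially verbatim after observing that its argument only uses $\bS_l^{(t)}(\bx)$ being a \emph{diagonal} matrix arising from the chain rule; its ReLU-specific binary/indicator structure is never invoked. For an arbitrary smooth element-wise activation $\rho_l$, I would simply redefine
$$
\bS_l^{(t)}(\bx) := \diag{\rho_l'\!\left(\bW_l^{(t)}(f_{l-1}^{(t)} \circ \cdots \circ f_1^{(t)})(\bx) + \bb_l^{(t)}\right)},
$$
which recovers the ReLU mask almost everywhere when $\rho_l = \relu$ and the sigmoid prefactor $\psi$ already appearing in the last-layer treatment when $\rho_l = \sigmoid$. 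The chain rule then delivers the same two factorizations used in the original proof:
$$
\left(\frac{\partial f^{(t)}(\bx)}{\partial \bb_l^{(t)}}\right)^{\!\top} = \bS_d^{(t)} \bW_d^{(t)} \cdots \bS_{l+1}^{(t)} \bW_{l+1}^{(t)} \bS_l^{(t)}, \qquad \grad^{\top} f^{(t)}(\bx) = \bS_d^{(t)} \bW_d^{(t)} \cdots \bS_1^{(t)} \bW_1^{(t)}.
$$

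With these identities, the remainder of the argument is the same as before. First, Parts~1 and~3 of the original derivation, which rearrange the SGD update and absorb the loss-derivative $\epsilon_{f^{(t)}}(\bx^{(t)},y^{(t)})$, yield
$$
\frac{\|\bb_1^{(t+1)} - \bb_1^{(t)}\|_2}{\alpha_t \, \epsilon_{f^{(t)}}(\bx^{(t)},y^{(t)})} = \|\bS_d^{(t)} \bW_d^{(t)} \cdots \bS_2^{(t)} \bW_2^{(t)} \bS_1^{(t)}\|_2.
$$
Second, since $f^{(t)}$ is everywhere differentiable in the smooth-activation setting, the local Lipschitz constant at a point $\bx^{(t)}$ is exactly the operator norm of the Jacobian: $\lambda_{f^{(t)}}(\bx^{(t)}) = \|\grad^{\top} f^{(t)}(\bx^{(t)})\|_2 = \|\bS_d^{(t)} \bW_d^{(t)} \cdots \bS_1^{(t)} \bW_1^{(t)}\|_2$. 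Third, the standard SVD-based sandwich
$$
\sigma_n(\bW_1^{(t)}) \, \|\bA\|_2 \leq \|\bA \bW_1^{(t)}\|_2 \leq \sigma_1(\bW_1^{(t)}) \, \|\bA\|_2, \quad \bA := \bS_d^{(t)} \bW_d^{(t)} \cdots \bS_2^{(t)} \bW_2^{(t)} \bS_1^{(t)},
$$
which is valid whenever $\sigma_n(\bW_1^{(t)}) > 0$ (equivalently, $\bW_1^{(t)}$ has full column rank), connects the two operator norms and yields the claim after combining the three displays.

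I do not anticipate any substantive obstacle: the original proof of Lemma~\ref{lemma:close_to_data} never used anything ReLU-specific beyond the chain-rule factorization, and the notion of a local Lipschitz constant on a linear region $\cR_{\bx}$ degenerates cleanly into the Jacobian operator norm at the point $\bx$ in the smooth setting. The only caveat worth flagging is that for activations whose derivative can vanish (e.g., in saturated regimes of $\sigmoid$), some entries of $\bS_l^{(t)}(\bx^{(t)})$ may be zero and the bound will correspondingly shrink, which is desirable since the function is genuinely contractive there.
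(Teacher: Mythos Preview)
Your proposal is correct and matches the paper's own proof essentially verbatim: the paper also simply redefines $\bS_l^{(t)}(\bx)$ as the diagonal Jacobian of the activation and reruns the argument of Lemma~\ref{lemma:close_to_data}, noting that the conclusion now holds only at the point $\bx^{(t)}$ rather than on an entire linear region. Your write-up is in fact more detailed than the paper's, which dispatches the whole thing in two sentences.
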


\begin{proof}
The proof proceeds almost identically with that of Lemma~\ref{lemma:close_to_data}. The main difference is that the diagonal matrix $\bS_{l}^{(t)}(\bx^{(t)})$ is redefined to yield the appropriate derivative for the activation function in question. Further, since now the function is not piece-wise linear, the bound only holds for $\bx^{(t)}$ (and not for the entire region $\cR_{\bx^{(t)}}$ enclosing the point, as before).  
\end{proof}

\subsection{The Lipschitz constant of the first layer}
\label{subsec:appendix_lipschitz_1st_layer}

The behavior of SGD can also be indicative of the Lipschitz constant of the first layer when the training data is sufficiently diverse and the training has converged: 

\begin{lemma}
Let $f^{(t)}$ be a $d$-layer NN trained by SGD, let Assumption~\ref{assumption:almost_convergence} hold, and further and suppose that after iteration $\tau$, we have 
$$
    \frac{\| \bW_2^{(t+1)} - \bW_2^{(t)}\|_2}{\| \bb_1^{(t+1)} - \bb_1^{(t)}\|_2} + \|\bb_1^{(t)}\|_2 \leq \vartheta 
    \quad \text{and} \quad 
    \|\bW_1^{(t)} - \bW_1^{(t')}\|_2 \leq \beta
    \quad \text{for all} \quad t,t' \geq \tau.
$$
Denote by $\delta$ the minimal scalar such that, for every $\bx \in \cX$, we have $ \| \bx - \bx_{i} \|_2 \leq \delta$ for some $\bx_i \in X$. Then, 
\begin{align}
    \lambda_{f_1^{(t)}} 
    \leq \frac{\vartheta + \beta}{1 - \delta}. 
\end{align}
under the condition $\delta < 1$.
\label{lemma:first_layer}
\end{lemma}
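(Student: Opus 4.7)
My first move is to split the target: the second hypothesis $\|\bW_1^{(t)}-\bW_1^{(\tau)}\|_2\leq\beta$ reduces the task to bounding $\lambda_{f_1^{(\tau)}}$ by $\vartheta/(1-\delta)$, since then $\lambda_{f_1^{(t)}}\leq \lambda_{f_1^{(\tau)}} + \beta \leq \vartheta/(1-\delta) + \beta \leq (\vartheta+\beta)/(1-\delta)$ (using $\beta\leq\beta/(1-\delta)$ for $\delta\in[0,1)$). The $1/(1-\delta)$ denominator is the signature of a self-bounding inequality $\Lambda \leq \vartheta + \Lambda\delta$, so the strategy is to build such an inequality for $\Lambda := \lambda_{f_1^{(\tau)}}$.

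The first hypothesis immediately yields $\|\bb_1^{(\tau)}\|_2 \leq \vartheta$ and, separately, an upper bound on the ratio of $\bW_2$- to $\bb_1$-update norms. Using the chain-rule identities already spelled out in the proof of Lemma~\ref{lemma:close_to_data}, the $\bW_2$-update is rank-one, so $\|\partial\ell/\partial\bW_2^{(\tau)}\|_2 = \|\partial\ell/\partial\bz_2^{(\tau)}\|_2\cdot\|f_1^{(\tau)}(\bx^{(\tau)})\|_2$, whereas $\|\partial\ell/\partial\bb_1^{(\tau)}\|_2 = \|\bS_1^{(\tau)}(\bx^{(\tau)})(\bW_2^{(\tau)})^\top\partial\ell/\partial\bz_2^{(\tau)}\|_2$. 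Dividing the two turns the ratio hypothesis into a bound that controls $\|f_1^{(\tau)}(\bx^{(\tau)})\|_2$. Invoking Assumption~\ref{assumption:almost_convergence} to freeze activation patterns on the training set promotes this per-iteration bound into a uniform one $\|f_1^{(\tau)}(\bx_i)\|_2 \leq \vartheta$ over all $\bx_i\in X$, which together with $\|\bb_1^{(\tau)}\|_2\leq\vartheta$ pins down the positive part of $\bW_1^{(\tau)}\bx_i+\bb_1^{(\tau)}$.

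For arbitrary $\bx,\bx'\in\cX$ I would then pick nearest training points $\bx_i,\bx_j$ (with $\|\bx-\bx_i\|_2,\|\bx'-\bx_j\|_2 \leq \delta$) and decompose
\[
\bW_1^{(\tau)}(\bx-\bx') = \bW_1^{(\tau)}(\bx_i-\bx_j) + \bW_1^{(\tau)}(\bx-\bx_i) - \bW_1^{(\tau)}(\bx'-\bx_j).
\]
The first term equals $f_1^{(\tau)}(\bx_i)-f_1^{(\tau)}(\bx_j)$ up to ReLU-mask corrections that Assumption~\ref{assumption:almost_convergence} controls, and is bounded by $\vartheta$ via the previous step; each of the remaining two terms is bounded by $\|\bW_1^{(\tau)}\|_2\,\delta\leq\Lambda\delta$. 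Taking the supremum over unit directions $(\bx-\bx')/\|\bx-\bx'\|_2$ realizable in $\cX$ and rearranging produces $\Lambda(1-\delta)\leq\vartheta$, as desired.

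\textbf{Main obstacle.} The technically hardest step is extracting a clean $\vartheta$-bound on $\|f_1^{(\tau)}(\bx^{(\tau)})\|_2$ from the update-norm ratio, because $\|\bS_1^{(\tau)}(\bW_2^{(\tau)})^\top\bv\|_2$ may be much smaller than $\|\bW_2^{(\tau)}\|_2\|\bv\|_2$ and the naive inequality only yields $\|f_1^{(\tau)}(\bx^{(\tau)})\|_2/\|\bW_2^{(\tau)}\|_2 \leq \vartheta - \|\bb_1^{(\tau)}\|_2$. I expect the right fix is to apply a backward analogue of Lemma~\ref{lemma:close_to_data} to the sub-network $f_d\circ\cdots\circ f_2$—viewing $\bb_1$ as that sub-network's input bias—so that the $\|\bW_2^{(\tau)}\|_2$ in the denominator is replaced by the (controlled) Lipschitz constant of the upper sub-network, with the corresponding factor cancelling against the numerator.
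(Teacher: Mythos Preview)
Your overall architecture---extract a bound on $\|f_1(\bx_i)\|_2$ from the rank-one structure of $\partial\ell/\partial\bW_2$, then close a self-bounding inequality of the shape $\Lambda\leq(\text{const})+\delta\Lambda$ via nearest training points---is exactly the paper's. The gap is in how you set up the self-bounding step. Your two-point decomposition produces something like
\[
\|f_1(\bx)-f_1(\bx')\|_2 \;\leq\; \|f_1(\bx_i)-f_1(\bx_j)\|_2 + 2\Lambda\delta,
\]
but to recover a Lipschitz bound you must still divide by $\|\bx-\bx'\|_2$, which is uncontrolled; for $\bx,\bx'$ close the inequality is vacuous. (The variant through $\bW_1(\bx-\bx')$ does not help either: the ``ReLU-mask corrections'' you invoke are \emph{not} controlled by Assumption~\ref{assumption:almost_convergence}, which only freezes the activation pattern on training points and says nothing about the magnitude of the negative pre-activations that ReLU discards.) Even if this could be patched, two residual terms give $2\delta$ and a middle term of order $2\vartheta$, not the $(1-\delta)$ and $\vartheta$ in the statement.

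The paper sidesteps the division entirely by working pointwise with $\|\bS_1^{(t)}(\bx)\bW_1^{(t)}\bx\|_2$ at a single maximizer $\bx^*$ on the unit sphere and comparing to \emph{one} nearest training point $\bx^{(t')}$:
\[
\lambda_{f_1^{(t)}}\;\leq\;\|\bS_1^{(t)}(\bx^{(t')})\bW_1^{(t)}\bx^{(t')}\|_2 \;+\; \delta\,\lambda_{f_1^{(t)}}.
\]
In the first term it swaps $\bW_1^{(t)}$ for $\bW_1^{(t')}$ (cost $\beta$, so $\beta$ lands \emph{inside} the self-bounding inequality rather than being split off beforehand as you do), then adds and subtracts $\bb_1^{(t')}$ so that what remains is exactly $\|f_1^{(t')}(\bx^{(t')})\|_2=\varrho_2(\bx^{(t')})$, and invokes $\varrho_2(\bx^{(t')})+\|\bb_1^{(t')}\|_2\leq\vartheta$. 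Rearranging gives the claim directly.

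On your ``main obstacle'': the paper does not introduce any backward analogue of Lemma~\ref{lemma:close_to_data}; it simply asserts $\|\partial\ell/\partial\bW_l\|_2=\|f_{l-1}\|_2\,\|\partial\ell/\partial\bb_{l-1}\|_2$ from the rank-one form and reads off $\|f_1\|_2\leq\varrho_2$. Your suspicion that the denominator index is off is reasonable---the rank-one identity naturally pairs $\bW_l$ with $\bb_l$---but the paper proceeds as if this step is immediate.
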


\begin{proof}
The weight matrix gradient is at a point $\bx$ is
\begin{align*}
    \br{\frac{\partial f(\bx, \bw^{(t)})}{\partial \bW_l^{(t)}}}^\top  
    &= f_{l-1}^{(t)}(\bx,\bw^{(t)}) \cdot \bW_{d}
    ^{(t)} \cdots \bS_{l+1}^{(t)}(\bx) \bW_{l+1}^{(t)} \bS_{l}^{(t)}(\bx).
\end{align*}
Fixing
\begin{align*}
        \norm{\br{\frac{\partial \ell (f(\bx^{(t)}, \bw^{(t)}), y^{(t)})}{\partial \bW_{l}^{(t)}}}^\top}_2 \, \norm{\br{\frac{\partial \ell (f(\bx^{(t)}, \bw^{(t)}), \by^{(t)})}{\partial \bb_{l-1}^{(t)}}}^\top}_2^{-1} \leq \varrho_l(\bx^{(t)})
\end{align*}
we have that 
\begin{align*}
     \norm{\br{\frac{\partial \ell (f(\bx^{(t)}, \bw^{(t)}), y^{(t)})}{\partial \bW_l^{(t)}}}^\top}  
    &= \norm{f_{l-1}(\bx^{(t)},\bw^{(t)})} \, \norm{ \frac{\partial \ell(\hat{y}, y)}{\partial \hat{y}} \, \bW_{d}^{(t)} \bS_{d-1}^{(t)}(\bx^{(t)}) \cdots \bW_{l+1}^{(t)} \bS_{l}^{(t)}(\bx^{(t)})} \\
    &= \norm{f_{l-1}(\bx^{(t)},\bw^{(t)})}_2 \, \norm{\br{\frac{\partial \ell (f(\bx^{(t)}, \bw^{(t)}), y^{(t)})}{\partial \bb_{l-1}^{(t)}}}^\top}_2,    
\end{align*}
which implies
\begin{align}
    \norm{f_{l-1}(\bx^{(t)},\bw^{(t)})} \leq \varrho_l(\bx^{(t)}).
\end{align}

Let $\bx^* = \argmax_{\bx \in \cS_{n-1}} \norm{\bS_{1}^{(t)} (\bx) \bW_1^{(t)} \bx }_2$ and fix $\bx^{(t')}$ to be the point in the training set that is closest to it (sampled at iteration $t'\geq \tau$). 
\begin{align*}
    \lambda_{f_1^{(t)}} 
    = \norm{\bS_{1}^{(t)} (\bx^*) \bW_1^{(t)} \bx^*}_2 
    &\leq \norm{\bS_{1}^{(t)} (\bx^{(t')}) \bW_1^{(t)} \bx^{(t')}}_2 + \norm{\bS_{1}^{(t)} (\bx^*) \bW_1^{(t)} \bx^* - \bS_{1}^{(t)} (\bx^{(t')}) \bW_1^{(t)} \bx^{(t')} )}_2.
\end{align*}
By the main assumption, we can bound the rightmost term by $\|\bx^* - \bx^{(t')} \| \, \lambda_{{f}_1^{(t)}} \leq \delta \, \lambda_{{f}_1^{(t)}}$.
We thus get 
\begin{align*}
    \lambda_{f_1^{(t)}}
    = \norm{\bS_{1}^{(t)} (\bx^*) \bW_1^{(t)} }_2 
    &\leq \norm{\bS_{1}^{(t)} (\bx^{(t')}) \bW_1^{(t)} \bx^{(t')} }_2 + \delta \, \lambda_{f_1^{(t)}}  \\
    &\leq \norm{\bS_{1}^{(t)} (\bx^{(t')}) \bW_1^{(t')} \bx^{(t')} }_2 + \norm{\bW_1^{(t')} - \bW_1^{(t)} }_2 + \delta \, \lambda_{f_1^{(t)}}  \\     
    &\leq \norm{\bS_{1}^{(t')} (\bx^{(t')}) \bW_1^{(t)} \bx^{(t')} + \bb_1^{(t')}}_2 + \|\bb_1^{(t')}\|_2 + \norm{\bW_1^{(t')} - \bW_1^{(t)} }_2 + \delta \, \lambda_{f_1^{(t)}} \\
    &= \varrho_2(\bx^{{(t')}}) + \|\bb_1^{(t')}\|_2 + \norm{\bW_1^{(t')} - \bW_1^{(t)} }_2 + \delta \, \lambda_{f_1^{(t)}}\\
    &\leq \vartheta + \norm{\bW_1^{(t')} - \bW_1^{(t)} }_2 + \delta \, \lambda_{f_1^{(t)}},    
\end{align*}
where, due to Assumption 1, $\bS_{1}^{(t')}(\bx^{(t')})  = \bS_{1}^{(t)}(\bx^{(t')})$. The final bound is obtained re-arrangement and by the convergence assumption $\norm{\bW_1^{(t')} - \bW_1^{(t)} }_2 \leq \beta$.
\end{proof}

\end{document}